%% file: main.tex
\documentclass{article}

\usepackage{iclr2027_conference,times}

\usepackage{booktabs}
\usepackage{xcolor}
\usepackage{graphicx}
\newcommand{\best}[1]{\textcolor{purple}{\mathbf{#1}}}
\newcommand{\second}[1]{\textcolor{teal}{\mathbf{#1}}}
\usepackage{amsmath,amssymb,amsfonts,amsthm,mathtools,bm}
\usepackage{array}
\usepackage{algorithm}
\usepackage{enumitem}
\usepackage{microtype}
\usepackage{algpseudocode}
\usepackage{multirow}
\usepackage{url}
\usepackage{xspace}
\usepackage{hyperref}
\hypersetup{
    colorlinks,
    linkcolor={red!50!black},
    citecolor={green!50!black},
    urlcolor={blue!80!black}
}

\newtheorem{proposition}{Proposition}

\newtheorem{remark}{Remark}

\newcommand{\D}{\mathcal{D}}
\newcommand{\Xspace}{\mathcal{X}}

\newcommand{\R}{\mathbb{R}}
\newcommand{\E}{\mathbb{E}}
\newcommand{\KL}{\operatorname{KL}}
\newcommand{\N}{\mathcal{N}}
\newcommand{\GP}{\mathcal{GP}}

\newcommand{\tr}{\operatorname{tr}}

\newcommand{\GMVIP}{\textsc{GMVIP}\xspace}

\newcommand{\vx}{\mathbf{x}}
\newcommand{\vy}{\mathbf{y}}
\newcommand{\vz}{\mathbf{z}}
\newcommand{\vw}{\mathbf{w}}
\newcommand{\vb}{\mathbf{b}}
\newcommand{\vh}{\mathbf{h}}
\newcommand{\vf}{\mathbf{f}}
\newcommand{\va}{\mathbf{a}}
\newcommand{\vu}{\mathbf{u}}
\newcommand{\vm}{\mathbf{m}}
\newcommand{\vs}{\mathbf{s}}
\newcommand{\vr}{\mathbf{r}}
\newcommand{\veps}{\boldsymbol{\epsilon}}
\newcommand{\vmu}{\boldsymbol{\mu}}

\newcommand{\vtheta}{\boldsymbol{\theta}}
\newcommand{\veta}{\boldsymbol{\eta}}
\newcommand{\vvartheta}{\boldsymbol{\vartheta}}
\newcommand{\vpsi}{\boldsymbol{\psi}}
\newcommand{\X}{\mathbf{X}}
\newcommand{\Y}{\mathbf{Y}}
\newcommand{\Z}{\mathbf{Z}}
\newcommand{\U}{\mathbf{U}}
\newcommand{\K}{\mathbf{K}}
\newcommand{\Bmat}{\mathbf{B}}
\newcommand{\Cmat}{\mathbf{C}}
\newcommand{\Dmat}{\mathbf{D}}
\newcommand{\Lmat}{\mathbf{L}}
\newcommand{\Smat}{\mathbf{S}}
\newcommand{\Imat}{\mathbf{I}}
\newcommand{\Psib}{\boldsymbol{\Psi}}
\newcommand{\Cov}{\operatorname{Cov}}

\title{Generalized Matheron Variational Implicit Processes}

\author{%
  Luis A. Ortega \\
  Aalborg University \\
  \texttt{laoa@cs.aau.dk}\\
  \And
  Andrés R. Masegosa \\
  Aalborg University\\
  \texttt{arma@cs.aau.dk}\\
  \And
  Thomas D. Nielsen \\
  Aalborg University\\
  \texttt{tdn@cs.aau.dk}
}

\iclrfinalcopy
\begin{document}
\maketitle

\begin{abstract}
Implicit-process priors specify distributions over functions through sample-forward mechanisms such as Bayesian neural networks and stochastic simulators, but their function-space densities are typically unavailable. We introduce Generalized Matheron Variational Implicit Processes (GMVIP), a pathwise variational family for posterior inference with such priors.  For Gaussian-process priors, GMVIP recovers the standard inducing-variable variational GP construction; for general implicit priors, its empirical covariance construction preserves the prior mean and covariance in the population limit. GMVIP constructs posterior samples by drawing a function from the prior and applying a correction anchored at a set of inducing inputs. The effect of this correction away from the inducing inputs is determined directly from prior samples, allowing the posterior to retain the structure and variability of the original implicit process.  The (surrogate) prior and variational posterior use the same pathwise construction and differ only in the distribution of whitened inducing coefficients, yielding a tractable coefficient-space Kullback-Leibler divergence. Experiments on regression, classification, and forecasting with simulator-defined and retrieval-conditioned empirical trajectory priors show that GMVIP is broadly competitive with existing methods.
\end{abstract}

\section{Introduction}

Bayesian neural networks are commonly formulated by placing a prior over
weights and performing posterior inference in parameter space
\citep{mackay1992practical,neal1996bayesian,blundell2015weight,
hernandezlobato2015probabilistic,gal2016dropout}. Although parameter-space
formulations enable familiar inference algorithms, they are high-dimensional
and inherit issues such as non-identifiability---including neuron-permutation
symmetries---strong posterior dependencies, multimodality, and non-injective
mappings from parameters to predictors
\citep{sun2019functional,burt2020understanding,wild2022generalized,
rudner2022tractable}. Since predictions depend only on the induced
input--output map, it is natural to instead formulate Bayesian inference directly in
function space \citep{sun2019functional,ma2019variational,ortega2026uncertainty}.

Function-space inference targets the posterior distribution over functions
rather than over parameters. This perspective aligns the variational object
with the predictive object, but it also introduces a central computational
difficulty: for flexible functional-space priors defined by Bayesian neural networks, neural
samplers, and simulator-defined stochastic processes, the density of the induced
function-space prior is rarely available or tractable
\citep{ma2019variational,ortega2023deep}. Existing approaches address the
intractable process Kullback-Leibler divergence in different ways: finite measurement sets
with Stein score estimates \citep{sun2019functional}, local linear-Gaussian
approximations at finite context sets \citep{rudner2022tractable}, or
alternative discrepancies between Gaussian measures
\citep{wild2022generalized}. \citet{burt2020understanding} analyze when
process-KL objectives are ill-defined and how finite-marginal approximations
behave.

Implicit processes formalize this sample-forward setting.  A draw from an implicit process gives a coherent random function that can be evaluated at any finite input set, but the corresponding finite-dimensional densities are not assumed to be available \citep{ma2019variational}.  Variational Implicit Processes (VIPs) make inference tractable by drawing a finite collection of prior functions and forming a surrogate over linear combinations of those samples \citep{ma2019variational}.  Sparse Implicit Processes introduce inducing-point structure for scalability \citep{rodriguez2022sparse}, and Deep Variational Implicit Processes build deeper implicit-process priors \citep{ortega2023deep}.  Recently, Flow-Transformed Implicit Processes (FTIP) showed that a Gaussian posterior over surrogate coefficients can smooth or collapse asymmetric, heavy-tailed, and multimodal posterior structure, and instead proposed normalizing flows over the finite surrogate variables \citep{ortega2026flow}.  These results suggest that posterior expressiveness is a central design choice in function-space inference.

This paper studies a complementary source of expressiveness. Instead of
changing only the distribution over linear-combination coefficients, we
change the pathwise construction of the surrogate itself. Conditioning
jointly Gaussian variables admits a pathwise representation through  a joint prior
draw plus a covariance-gain correction, commonly called Matheron's rule;
applying it to Gaussian-process marginals yields conditional function samples
\citep{wilson2020efficiently}. The corresponding conditional mean and
covariance follow from standard Gaussian-process conditioning
\citep{rasmussen2006gaussian}. Sparse variational GPs introduce inducing
variables and reduce the KL regularizer to
\(\KL(q(\vu)\|p(\vu))\)
\citep{titsias2009variational,hensman2013gaussian}. We take this
inducing-value viewpoint to implicit processes: sample a prior path, replace
its values at inducing locations, and retain the residual path away from those
locations.

\begin{figure}[t]
    \centering
    \includegraphics[width=0.97\linewidth]{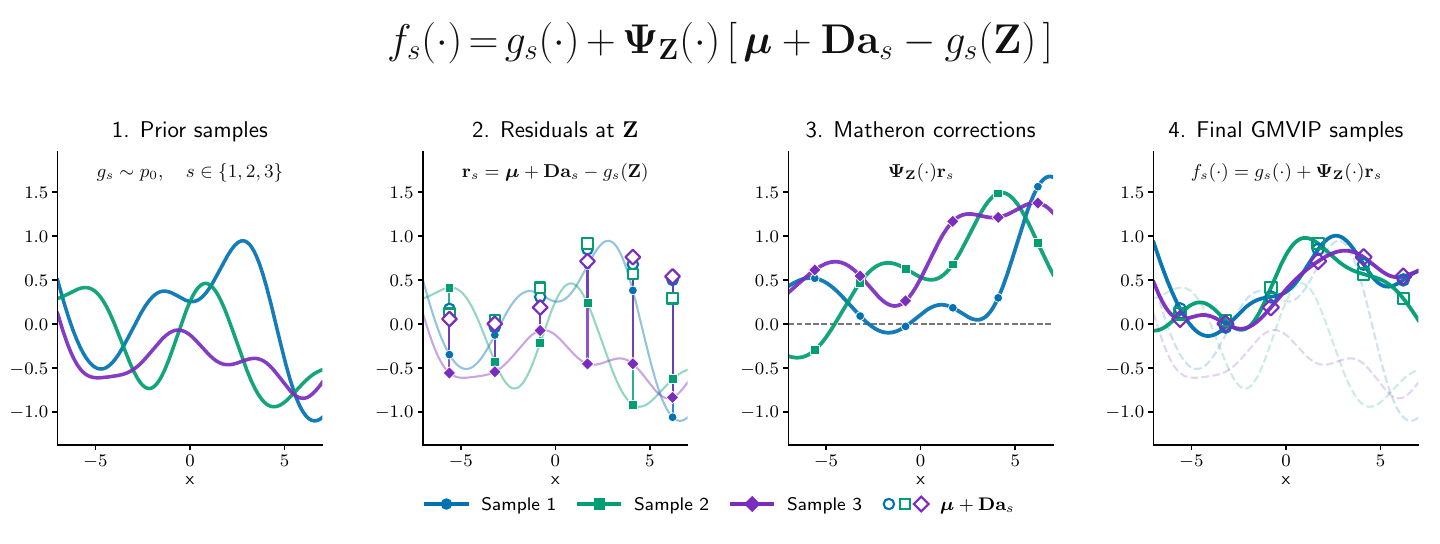}
    \caption{Pathwise construction of a \GMVIP sample. Starting from a prior draw \(g\sim p_0\), we evaluate the path at inducing inputs \(\Z\) and form the residual between the sampled replacement values \(\vmu+\Dmat\va\) and \(g(\Z)\). The Matheron operator \(\Psib_{\Z}\) propagates this residual across the input domain. Adding the resulting smooth correction to \(g\) yields a sample \(f\) that matches the replacement values at \(\Z\).}\label{fig:GMVIP}
\end{figure}

For clarity, we first present the scalar-output construction; the joint
multioutput form is given in Section~\ref{sec:methodology}. Let \(g\sim p_0\)
be a prior function sample, let \(\Z=(\vz_1,\ldots,\vz_M)^\top\) be inducing
locations, and let \(\Psib_{\Z}\) be a replacement operator. For a latent
vector \(\va\in\R^M\), define
\begin{equation}
    f(\cdot)
    = g(\cdot) + \Psib_{\Z}(\cdot)
      \left[\vmu+\Dmat\va-g(\Z)\right].
    \label{eq:intro_map}
\end{equation}
The vector \(\vmu+\Dmat\va\) gives replacement values at the inducing locations, while \(g\) supplies the residual path (Figure~\ref{fig:GMVIP} shows an illustration on the idea).  The surrogate prior uses \(\va\sim\N(\mathbf{0},\Imat_M)\), and the variational posterior replaces this with \(q_\phi(\va)\).  Because the prior and posterior share the same conditional map from \((g,\va)\) to \(f\), the variational regularizer is exactly \(\KL(q_\phi(\va)\|\N(\mathbf{0},\Imat_M))\).  Thus the method keeps the tractability of VIP-style coefficient inference while replacing linear aggregation with a Matheron-style inducing-value replacement. We call the resulting family Generalized Matheron Variational Implicit Processes (\GMVIP).  The construction is modular.  The operator \(\Psib_{\Z}\) can be estimated from empirical prior-sample covariance, giving the closest analogue of GP Matheron conditioning, e.g., \(\Psib_{\Z}=\hat\K(\cdot,\Z)\hat\K(\Z,\Z)^{-1}\) for empirical covariance \(\hat\K\).

Our contributions are: (i) We introduce a general Matheron inducing-value replacement family for variational inference with implicit-process priors.  The family produces coherent posterior function samples and requires only prior sampling and likelihood evaluation. (ii) We show that the augmented KL reduces exactly to a coefficient-space KL, and that under a cardinal right-inverse condition the same identity holds for the induced function marginal.  We also show process coherence for arbitrary query sets. (iii) We show exact GP/SVGP recovery for Gaussian priors and population-level
preservation of the original prior mean and covariance for general implicit
processes.

\section{Related Work}

\paragraph{Function-space variational inference.}
Function-space variational inference replaces approximate posteriors over parameters with approximate posteriors over predictors.  This is especially relevant for Bayesian neural networks, where many parameter configurations can represent the same function.  Functional variational Bayesian neural networks (fBNNs) formulate variational inference directly over the stochastic process induced by a network, typically approximating the process KL on finite measurement sets \citep{sun2019functional}.  Subsequent work clarified the measure-theoretic and optimization issues that arise when the variational object is a stochastic process rather than a finite-dimensional parameter vector \citep{burt2020understanding,wild2022generalized}.  TFSVI constructs tractable finite-dimensional function-space approximations using local linearization and neural network Jacobians \citep{rudner2022tractable}.  \GMVIP shares the function-space objective, but avoids measurement-set density-ratio estimation or linearizations by defining a surrogate prior and posterior with a shared pathwise map.

\paragraph{Implicit processes.}
Implicit processes define stochastic processes through sampling mechanisms
rather than explicit densities \citep{ma2019variational}. VIP constructs
finite surrogates from prior samples and performs variational inference over
surrogate coefficients, yielding a tractable latent KL. Sparse Implicit
Processes (SIP) introduce inducing variables and estimate the intractable
divergence between implicit inducing distributions using adversarial
density-ratio estimation \citep{rodriguez2022sparse}; Deep Variational
Implicit Processes compose implicit-process layers to obtain richer priors
\citep{ortega2023deep}. FTIP makes the VIP surrogate posterior more expressive
by replacing the Gaussian coefficient posterior with a normalizing flow
\citep{ortega2026flow}. Our method is complementary to FTIP. Rather than
enriching \(q(\va)\) with an invertible transformation, we modify the pathwise
surrogate: posterior samples are Matheron replacements of prior paths, not
linear combinations of prior samples. Moreover, VIP/FTIP posterior draws are confined to the finite span \(\mathrm{span}\{g^{(s)}\}_{s=1}^S\) of the \(S\) pre-sampled prior functions, whereas \GMVIP draws a fresh prior path for each sample, preserving functional variation outside this finite-dimensional subspace.

\paragraph{Matheron sampling, inducing variables, and interpolation.}
Applying Matheron's rule to Gaussian-process marginals gives conditional
function draws as a prior path plus a finite-dimensional covariance correction
and underlies efficient pathwise GP posterior sampling
\citep{wilson2020efficiently}; the corresponding Gaussian conditioning
formulas are standard \citep{rasmussen2006gaussian}. Sparse variational GPs
use inducing variables to obtain tractable objectives whose KL term is
\(\KL(q(\vu)\|p(\vu))\)
\citep{titsias2009variational,hensman2013gaussian}. Structured kernel
interpolation, instantiated by KISS-GP, interpolates kernel quantities from
inducing grid points to scale GP inference \citep{wilson2015kernel}.
\GMVIP borrows the pathwise inducing-value structure but applies it to priors
that are only sampleable.

The empirical covariance-gain correction has the same linear-update algebra
as statistical interpolation \citep{lorenc1981global} and the analysis step of
the ensemble Kalman filter
\citep{evensen1994sequential,burgers1998analysis}; ensemble Kalman smoothing
applies analogous ensemble-covariance corrections to earlier states
\citep{evensen2000smoother}. GMVIP differs by optimizing a variational law
over inducing replacement values and regularizing it with a
coefficient-space KL.

\section{Background: Function-Space Variational Inference and Implicit Processes}
\label{sec:background_ips}

Let \(\D=\{(\vx_n,\vy_n)\}_{n=1}^N\) denote the observed dataset, with \(\vx_n\in\Xspace\) and outputs \(\vy_n\in\R^C\). We write \(\X=(\vx_1,\ldots,\vx_N)^\top\) and \(\Y=(\vy_1,\ldots,\vy_N)^\top\), and assume a likelihood factorization
\begin{equation}
    p(\Y\mid f,\X)=\prod_{n=1}^N p(\vy_n\mid f(\vx_n)).
    \label{eq:likelihood_background}
\end{equation}
Bayesian prediction is naturally a distribution over functions, since the likelihood depends on the predictor \(f\) rather than on a particular parameterization of \(f\). This observation motivates function-space variational inference, where the goal is to approximate the posterior process \(p(f\mid\D)\) directly. In its ideal form, one optimizes
\begin{equation}
    \mathcal L(q)
    =
    \E_{q(f)}[\log p(\Y| f,\X)]
    -
    \KL(q(f)\|p_0(f)),
    \label{eq:function_space_elbo_background}
\end{equation}
where \(p_0\) is the prior process. This objective is conceptually attractive
because it compares distributions over predictors, avoiding some of the
redundancies and symmetries of weight-space inference. However, the
process-level KL is rarely available. Existing methods use finite measurement
sets with Stein score estimation \citep{sun2019functional}, analyze or
approximate finite-dimensional marginals \citep{burt2020understanding}, replace
the KL with discrepancies between Gaussian measures
\citep{wild2022generalized}, construct local linear-Gaussian approximations
\citep{rudner2022tractable}, or introduce inducing representations with
adversarial density-ratio estimation \citep{rodriguez2022sparse}. These
approximations introduce additional choices: where to place measurement or
inducing points, how to estimate finite-dimensional densities or ratios, and
how to ensure that the resulting objectives define coherent posterior samples.

\section{Methodology}
\label{sec:methodology}

We introduce \GMVIP as a pathwise variational family for implicit-process priors.
The central idea is to preserve the residual structure of a prior sample while
replacing its values at a finite set of inducing inputs. This gives a posterior
process that remains sample-forward, has coherent joint function samples, and
admits a finite-dimensional KL.

Let \(\Z=(\vz_1,\ldots,\vz_M)^\top\) be inducing inputs and let
\(g\sim p_0\) be a sample from the implicit-process prior. Given a whitened
coefficient vector \(\va\in\R^{d_a}\), where \(d_a=M\) for scalar outputs and
\(d_a=MC\) for a joint \(C\)-output model, define the replacement inducing values
\begin{equation}
    \vu(\va)=\vmu+\Dmat\va ,
    \label{eq:inducing_values_bold}
\end{equation}
where \(\vmu\) and \(\Dmat\) are fixed, prior-estimated, or determined by the
chosen replacement operator. A \GMVIP sample is obtained by replacing the prior
path at \(\Z\) and propagating the correction to arbitrary inputs:
\begin{equation}
    T(g,\va)(\cdot)
    =
    g(\cdot)+\Psib_{\Z}(\cdot)\left[\vu(\va)-g(\Z)\right].
    \label{eq:general_map_bold}
\end{equation}
The operator \(\Psib_{\Z}\) controls how inducing-value corrections are extended
away from \(\Z\). The surrogate prior and posterior share this same pathwise
map:
\begin{align}
    p_{\Z}(g,\va,f)
    &=p_0(g)\,p(\va)\,\delta[f=T(g,\va)], \label{eq:surrogate_prior_posterior_bold}\\
    q_{\Z,\phi}(g,\va,f)
    &=p_0(g)\,q_\phi(\va)\,\delta[f=T(g,\va)] .
    \label{eq:surrogate_prior_posterior_bold2}
\end{align}
Thus inference changes only the law of the finite coefficient vector \(\va\); the
residual path is still supplied by the original implicit prior. We use
\(p(\va)=\N(\mathbf{0},\Imat_{d_a})\) and
\(q_\phi(\va)=\N(\vm_{a},\Smat_{a})\), with \(\vm_a\in\R^{d_a}\) and covariance
\(\Smat_a\in\R^{d_a\times d_a}\). Samples are drawn by reparameterization. For a minibatch
\(\mathcal B\), the Monte Carlo objective is
\begin{equation}
    \widehat{\mathcal L}_t(\phi,\vpsi)
    =
    \frac{N}{|\mathcal B|R_{\mathrm{tr}}}
    \sum_{r=1}^{R_{\mathrm{tr}}}\sum_{n\in\mathcal B}
    \log p_{\vpsi} \big(\vy_n\mid T(g^{(r)},\va^{(r)})(\vx_n)\big)
    -\KL \left(q_\phi(\va)\|p(\va)\right),
    \label{eq:single_elbo_bold}
\end{equation}
with \(g^{(r)}\sim p_0\), \(\va^{(r)}\sim q_\phi\), and
\(R_{\mathrm{tr}}\) Monte Carlo samples per optimization step. Here \(\vpsi\)
denotes the likelihood parameters. The most direct instantiation estimates the Matheron correction from prior
samples. Given a fixed prior bank \(\mathcal G=\{g^{(b)}\}_{b=1}^B\),
\(g^{(b)}\sim p_0\),
define
\begin{align}
    \hat\vmu(\cdot)
    &=
    \frac{1}{B}\sum_{b=1}^B g^{(b)}(\cdot),
    &
    \hat\K(\cdot,\Z)
    &=
    \frac{1}{B-1}\sum_{b=1}^B
    \bigl(g^{(b)}(\cdot)-\hat\vmu(\cdot)\bigr)
    \bigl(g^{(b)}(\Z)-\hat\vmu(\Z)\bigr)^\top .
\end{align}
Let \(\Lmat\) be the Cholesky decomposition of \(\hat\K(\Z,\Z)\). The empirical replacement operator is
\begin{equation}
    \Psib^{\mathrm{emp}}_{\Z}(\cdot)
    =
    \hat\K(\cdot,\Z)\hat\K(\Z,\Z)^{-1},
    \qquad
    \vmu=\hat\vmu(\Z),
    \qquad
    \Dmat=\Lmat .
    \label{eq:empirical_operator_bold}
\end{equation}
Therefore,
\begin{equation}
    f(\cdot)
    =
    g(\cdot)
    +
    \hat\K(\cdot,\Z)\hat\K(\Z,\Z)^{-1}
    \left[
        \hat\vmu(\Z)+\Lmat\va-g(\Z)
    \right].
    \label{eq:empirical_sample_bold}
\end{equation}
This is the prior-matched \GMVIP operator: it uses the covariance geometry
induced by the implicit prior itself. Its quality therefore depends on the
prior-bank estimate, but when this estimate is reliable it provides the closest
analogue of GP Matheron sampling for non-Gaussian implicit processes. The complete training and evaluation procedures are given in
Algorithms~\ref{alg:gmvip-training} and~\ref{alg:gmvip-evaluation} in
Appendix~\ref{app:training-evaluation}. Surrogate priors are already used in
VIP~\citep{ma2019variational}; Appendix~\ref{app:surrogate_prior} studies how
much these priors differ from the original prior.

\paragraph{Multioutput case.}
For vector-valued outputs \(f(\vx)\in\R^{C}\), \GMVIP supports both
joint and channel-wise multioutput parameterizations. In the joint
parameterization, we stack inducing evaluations and coefficients across
outputs and construct the empirical inducing covariance
\(\hat\K(\Z,\Z)\in\R^{MC\times MC}\), including cross-output
covariance blocks. The variational posterior
\(q_\phi(\va)=\N(\vm_a,\Smat_a)\) may likewise use a full
\(MC\times MC\) covariance. For larger output spaces, we also use a channel-wise approximation in which
the empirical operator and variational covariance are block-diagonal across
outputs.

\subsection{Theoretical properties}

We now state the main properties of the construction. Proofs, together with
additional results on prior-distortion bounds,
prior reconstruction error, and sample-path regularity, are given in
Appendix~\ref{app:theory}.

\begin{proposition}[Process validity]
\label{prop:process_validity}
Assume \(p_0\) is a valid stochastic process, that \(\Psib\) is row-consistent and that the same latent pair
\((g,\va)\) is reused for all query inputs. Then, for any coefficient law
\(\nu(\va)\), the finite-dimensional distributions induced by
\[
    f(\cdot)
    =
    g(\cdot)+\Psib_{\Z}(\cdot)\bigl[\vmu+\Dmat\va-g(\Z)\bigr],
    \qquad
    g\sim p_0,\quad \va\sim\nu,
\]
satisfy Kolmogorov consistency. Hence both the \GMVIP surrogate prior and
variational posterior define valid stochastic processes.
\end{proposition}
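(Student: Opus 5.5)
The plan is to realize all finite-dimensional distributions as pushforwards of a single probability measure on one underlying space. Consistency then follows automatically, because marginalizing a pushforward onto a subset of coordinates equals the pushforward onto those coordinates.

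Concretely, let \(\Omega\) carry the product law \(p_0\otimes\nu\) of the pair \((g,\va)\). By assumption \(p_0\) is a valid stochastic process, so by the Kolmogorov extension theorem \(g\) is a random element of \(\R^{\Xspace}\) (or \((\R^C)^{\Xspace}\) in the multioutput case) with the product \(\sigma\)-algebra. Each coordinate map \(g\mapsto g(\vx)\) is measurable, and so is the finite vector \(g(\Z)\).

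First, I would make "row-consistent" precise as the property I need. For every input \(\vx\), the row \(\Psib_{\Z}(\vx)\) is a fixed matrix determined by \(\vx\) and \(\Z\) alone (and possibly by the prior bank, which is held fixed), and not by which other query inputs are evaluated alongside \(\vx\). For the empirical operator \(\hat\K(\vx,\Z)\hat\K(\Z,\Z)^{-1}\) this is immediate, since each row depends only on \(\vx\), the bank and \(\Z\). The objects \(\vmu\) and \(\Dmat\) are likewise fixed by \(\Z\).

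Under this property, for each \(\vx\) the map \(\Phi_{\vx}(g,\va)=g(\vx)+\Psib_{\Z}(\vx)[\vmu+\Dmat\va-g(\Z)]\) is a measurable function of \((g,\va)\). For any finite query set \(\X_*=(\vx_1,\dots,\vx_k)\), the law of \(f(\X_*)\) is defined as the pushforward of \(p_0\otimes\nu\) under \((\Phi_{\vx_1},\dots,\Phi_{\vx_k})\). Reusing the same \((g,\va)\) for all query inputs is exactly what makes this the correct definition.

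Kolmogorov's two conditions then need to be checked. For \textbf{permutation}, permuting the query points permutes the components of the vector of maps, so the pushforward is permuted accordingly. For \textbf{marginalization}, the marginal of the pushforward onto a subset \(\X'\subset\X_*\) is obtained by composing with the coordinate projection, giving \((\Phi_{\vx})_{\vx\in\X'}\). By row-consistency this is exactly the map defining the law at \(\X'\). Kolmogorov's extension theorem then yields a process on \(\R^{\Xspace}\). In fact \(f=T(g,\va)\) is itself a random function, namely the map \((g,\va)\mapsto(\Phi_{\vx}(g,\va))_{\vx}\), which is measurable with respect to the product \(\sigma\)-algebra because each coordinate is.

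Finally, I would specialize the argument to \(\nu=\N(\mathbf{0},\Imat_{d_a})\) and to \(\nu=q_\phi\) to obtain the surrogate prior and the variational posterior.

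The main obstacle is not the algebra but stating row-consistency correctly. If \(\Psib_{\Z}\) were recomputed jointly from the query set, for example with an empirical covariance estimated on \(\X_*\cup\Z\) with some regularization, then \(\Psib_{\Z}(\vx)\) could depend on the other query points and marginalization would fail. I would therefore define row-consistency as "\(\Psib_{\Z}(\vx)\) is a function of \(\vx\) only", and remark that the fixed-bank empirical operator satisfies it.
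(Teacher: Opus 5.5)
Your proposal is correct and follows essentially the same route as the paper. The paper also writes \(f(\X)\) as a measurable function of the shared pair \((g,\va)\) and uses row-consistency in the form \(\Psib_{\Z}(\X_1)=\boldsymbol{\Pi}\Psib_{\Z}(\X_2)\) for selection/permutation matrices \(\boldsymbol{\Pi}\), which is equivalent to your ``each row depends only on \(\vx\)''. It deduces \(f(\X_1)=\boldsymbol{\Pi}f(\X_2)\) pathwise and concludes consistency of the pushforward laws before invoking Kolmogorov extension; your extra remark that \(T(g,\va)\) is already a measurable random element of the product space just makes that final step immediate.
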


This result ensures that \GMVIP is not merely a collection of unrelated
finite-dimensional approximations. The same sampled path \(g\) and the same
coefficient \(\va\) define all evaluations of a function, so restrictions and
permutations of finite query sets are compatible.

\begin{proposition}[Coefficient KL reduction]
\label{prop:coefficient_kl}
Let \(p_{\Z}\) and \(q_{\Z,\phi}\) be defined by
Equations~\eqref{eq:surrogate_prior_posterior_bold} and \eqref{eq:surrogate_prior_posterior_bold2}. Then
\begin{equation}
    \KL \left(q_{\Z,\phi}(g,\va,f)\,\|\,p_{\Z}(g,\va,f)\right)
    =
    \KL \left(q_\phi(\va)\,\|\,p(\va)\right).
    \label{eq:kl_augmented_bold}
\end{equation}
Moreover, if \(q_{\Z,\phi}^{f}\) and \(p_{\Z}^{f}\) denote the corresponding
marginal laws over functions, then
\begin{equation}
    \KL(q_{\Z,\phi}^{f}\,\|\,p_{\Z}^{f})
    \leq
    \KL \left(q_\phi(\va)\,\|\,p(\va)\right).
    \label{eq:kl_marginal_bound_bold}
\end{equation}
If \(\Psib_{\Z}(\Z)=\Imat_{d_a}\) and \(\Dmat\) is invertible, the inequality in
\eqref{eq:kl_marginal_bound_bold} is an equality.
\end{proposition}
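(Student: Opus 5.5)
The proof has three parts: the augmented identity via the chain rule for KL, the marginal bound via data processing, and the equality case via an explicit inverse map from \(f\) back to \(\va\).

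\emph{Augmented identity.} I will use the chain rule for KL divergence on the factorizations \eqref{eq:surrogate_prior_posterior_bold}--\eqref{eq:surrogate_prior_posterior_bold2}. Both joints factor as \(\mathrm{law}(g,\va)\) times the same Markov kernel \(\delta[f=T(g,\va)]\) from \((g,\va)\) to \(f\). The Radon--Nikodym derivative of \(q_{\Z,\phi}\) with respect to \(p_{\Z}\) on the joint space is therefore
\[
\frac{dq_{\Z,\phi}}{dp_{\Z}}(g,\va,f)=\frac{p_0(g)\,q_\phi(\va)}{p_0(g)\,p(\va)}=\frac{q_\phi(\va)}{p(\va)}.
\]
The degenerate delta factor cancels because the conditional of \(f\) given \((g,\va)\) is identical under both laws. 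I will phrase this as the chain rule
\[
\KL(q\|p)=\KL(q_{g,\va}\|p_{g,\va})+\E_{q}\bigl[\KL(q_{f\mid g,\va}\|p_{f\mid g,\va})\bigr],
\]
which avoids dividing by delta densities. The second term vanishes, and the independence of \(g\) and \(\va\) with the shared \(p_0\) factor reduces the first term to \(\KL(q_\phi(\va)\|p(\va))\). This requires \(q_\phi\ll p\), which holds since \(p\) is a nondegenerate Gaussian.

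\emph{Marginal bound.} Inequality \eqref{eq:kl_marginal_bound_bold} follows from the data-processing inequality. The function-space marginals are pushforwards of the joint laws under the projection \((g,\va,f)\mapsto f\), and KL cannot increase under a measurable map. Since the paths live in an infinite-dimensional space, I will apply this to the finite-dimensional marginals on each query set \(\X_*\); these are well defined by Proposition~\ref{prop:process_validity}. The process-level KL is the supremum of these finite-dimensional divergences, so the bound passes to the process.

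\emph{Equality case.} If \(\Psib_{\Z}(\Z)=\Imat_{d_a}\), evaluating \eqref{eq:general_map_bold} at \(\Z\) gives
\[
f(\Z)=g(\Z)+\vmu+\Dmat\va-g(\Z)=\vmu+\Dmat\va,
\]
so \(\va=\Dmat^{-1}(f(\Z)-\vmu)\) is a measurable function of \(f\). The projection \(f\mapsto f(\Z)\) is measurable, being a finite-dimensional evaluation in the product \(\sigma\)-algebra. I then apply data processing a second time: \(\va\) is the image of \(f\) under this map, so
\[
\KL(q_\phi(\va)\|p(\va))\le \KL(q^f_{\Z,\phi}\|p^f_{\Z}).
\]
Combined with the bound from the previous step, this gives equality. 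An equivalent formulation is that the chain rule applied to \((f,\va)\) has vanishing conditional term because \(\va\) is a deterministic function of \(f\). Equality is then inherited on any finite query set containing \(\Z\), hence at the process level.

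The step needing most care is the measure-theoretic handling of the degenerate delta kernels and of infinite-dimensional path space. I will handle it by working throughout with kernels and the chain rule rather than densities, and by restricting to finite-dimensional marginals via Proposition~\ref{prop:process_validity}.
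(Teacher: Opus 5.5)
Your proposal is correct and follows the paper's proof essentially step for step. You use the chain rule with a shared conditional kernel for the augmented identity, data processing under \((g,\va,f)\mapsto f\) for the bound, and the recovery \(\va=\Dmat^{-1}(f(\Z)-\vmu)\) from \(f\) for the equality case. The only difference is cosmetic: you close the equality case with a second data-processing step along \(f\mapsto\va\), whereas the paper argues that the augmented density ratio \(q_\phi(\va)/p(\va)\) then depends on \(f\) alone, so projecting to \(f\) loses no KL information.
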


Since \(p_{\Z}\) and \(q_{\Z,\phi}\) share the same pathwise map
\((g,\va)\mapsto f\), their augmented KL is exactly
\(\KL(q_\phi(\va)\|p(\va))\), avoiding measurement-set estimates of a
functional KL. The marginal function-space KL is upper bounded by this quantity,
and the bound is tight whenever \(\va\) is recoverable from \(f\), for example
when \(\Psib_{\Z}(\Z)=\Imat_{d_a}\) and \(\Dmat\) is invertible, which satisfies for the empirical operator.

\begin{proposition}[GP and SVGP recovery]
\label{prop:gp_svgp_recovery}
For a scalar-valued process, let \(p_0=\GP(\mu,\kappa)\), assume
\(\kappa(\Z,\Z)\succ0\), and choose
\[
    \Psib_{\Z}(\X)=\kappa(\X,\Z)\kappa(\Z,\Z)^{-1},
    \qquad
    \vmu=\mu(\Z),
    \qquad
    \Dmat\Dmat^\top=\kappa(\Z,\Z).
\]
If \(\va\sim\N(\mathbf{0},\Imat_M)\), then \(T(g,\va)\sim\GP(\mu,\kappa)\).
Moreover, for any coefficient law \(q_\phi(\va)\), with
\(\vu=\vmu+\Dmat\va\), the induced posterior is
\[
    q(f)=\int p_0(f | \vu)\,q_\phi(\vu)\, \mathrm d\vu .
\]
Thus \GMVIP recovers both the original GP prior and the standard
inducing-variable GP family. In particular, any Gaussian SVGP posterior
\(q(\vu)=\N(\vm_u,\Smat_u)\) is recovered by taking
\(q_\phi(\va)=\N(\Dmat^{-1}(\vm_u-\vmu),
\Dmat^{-1}\Smat_u\Dmat^{-\top})\).
\end{proposition}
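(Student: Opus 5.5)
The approach is to work with finite-dimensional marginals throughout, using Gaussian joint structure and Proposition~\ref{prop:process_validity} to lift statements about finite sets to the process level. Fix an arbitrary finite query set \(\X\) and write \(\mathbf{K}_{XZ}=\kappa(\X,\Z)\), \(\mathbf{K}_{ZZ}=\kappa(\Z,\Z)\), and \(\Psib=\mathbf{K}_{XZ}\mathbf{K}_{ZZ}^{-1}\). Under \(p_0=\GP(\mu,\kappa)\), the vector \((g(\X),g(\Z))\) is jointly Gaussian, and \(\va\) is independent of it. The core tool is the Matheron decomposition: the residual \(\vr(\X)=g(\X)-\Psib\,g(\Z)\) is Gaussian with mean \(\mu(\X)-\Psib\mu(\Z)\) and covariance \(\mathbf{K}_{XX}-\mathbf{K}_{XZ}\mathbf{K}_{ZZ}^{-1}\mathbf{K}_{ZX}\). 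It is also uncorrelated with, and hence independent of, \(g(\Z)\). Verifying this takes one cross-covariance calculation, \(\mathbf{K}_{XZ}-\Psib\mathbf{K}_{ZZ}=0\).

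With this decomposition, I rewrite the sample as
\[
T(g,\va)(\X)=\vr(\X)+\Psib\,\vu,\qquad \vu=\vmu+\Dmat\va .
\]
Here \(\vr(\X)\) depends only on \(g\), and it is independent of both \(g(\Z)\) and \(\va\).

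\textbf{Prior recovery.} If \(\va\sim\N(\mathbf{0},\Imat_M)\), then \(\vu\sim\N(\mu(\Z),\Dmat\Dmat^\top)=\N(\mu(\Z),\mathbf{K}_{ZZ})\). Hence \(\vu\) has exactly the law of \(g(\Z)\) and is independent of \(\vr(\X)\). So \((\vr(\X),\vu)\) has the same joint law as \((\vr(\X),g(\Z))\). Applying the same linear map to both shows that \(T(g,\va)(\X)\) has the law of \(\vr(\X)+\Psib g(\Z)=g(\X)\), namely \(\N(\mu(\X),\mathbf{K}_{XX})\). For a fully rigorous version, I will run this argument on the augmented set \(\X\cup\Z\). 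That gives equality of the joint law of \((f(\X),f(\Z))\), and note that \(f(\Z)=\vu\) because \(\Psib_{\Z}(\Z)=\Imat\). Since this holds for every finite \(\X\), and the process is consistent by Proposition~\ref{prop:process_validity}, the finite-dimensional distributions coincide with those of \(\GP(\mu,\kappa)\).

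\textbf{Posterior form.} For general \(q_\phi(\va)\), the push-forward \(\vu=\vmu+\Dmat\va\) has some law \(q_\phi(\vu)\), using invertibility of \(\Dmat\) from \(\mathbf{K}_{ZZ}\succ0\). Conditional on \(\vu\), the value \(f(\X)=\vr(\X)+\Psib\vu\) is Gaussian with mean \(\mu(\X)+\Psib(\vu-\mu(\Z))\) and covariance \(\mathbf{K}_{XX}-\mathbf{K}_{XZ}\mathbf{K}_{ZZ}^{-1}\mathbf{K}_{ZX}\). This is exactly the GP conditional \(p_0(f(\X)\mid g(\Z)=\vu)\). Marginalizing over \(\vu\) gives \(q(f)=\int p_0(f\mid\vu)q_\phi(\vu)\,\mathrm d\vu\) at every finite set, and hence as processes by Proposition~\ref{prop:process_validity}.

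\textbf{SVGP recovery.} For the Gaussian case, I take \(q_\phi(\va)=\N(\Dmat^{-1}(\vm_u-\vmu),\Dmat^{-1}\Smat_u\Dmat^{-\top})\). The affine change of variables then gives \(\vu\sim\N(\vm_u,\Smat_u)\), which is the standard Titsias/Hensman family.

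The main obstacle is the independence of the residual from \(g(\Z)\), together with the need to state the result at the process level rather than just at \(\X\). The independence needs joint Gaussianity of \(g\) at \(\X\cup\Z\); the process-level statement is handled by the augmented-set trick and Kolmogorov consistency. Everything else reduces to affine transformations of Gaussians.
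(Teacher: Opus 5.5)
Your proposal is correct and follows essentially the same route as the paper: decompose \(g(\X)\) into \(\kappa(\X,\Z)\kappa(\Z,\Z)^{-1}g(\Z)\) plus a residual independent of \(g(\Z)\), then replace \(g(\Z)\) by the independent inducing value \(\vu=\vmu+\Dmat\va\), read off the GP conditional for general \(q_\phi\), and finish with the affine whitening map. The one cosmetic difference is that you close the prior-recovery step by equality of the joint law of (residual, inducing value) rather than by matching means and covariances, which spares you from separately noting that the sum is Gaussian; your augmented-set and Kolmogorov remarks are harmless but unnecessary, since \(\X\) is already arbitrary.
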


This proposition verifies that the construction is not an ad-hoc interpolation
scheme: with the matching GP operator, \GMVIP reduces exactly to classical
Matheron sampling on sparse variational GPs.

\begin{proposition}[Second-order prior preservation]
\label{prop:second_order_preservation}
For a scalar-valued process, assume \(g\sim p_0\) has finite second moments,
with mean \(\mu\) and covariance
function \(\kappa\). If
\(\Psib_{\Z}(\cdot)=\kappa(\cdot,\Z)\kappa(\Z,\Z)^{-1}\),
\(\vmu=\mu(\Z)\), \(\Dmat\Dmat^\top=\kappa(\Z,\Z)\), and
\(\va\sim\N(\mathbf{0},\Imat_M)\) is independent of \(g\), then the \GMVIP surrogate prior has
the same mean and covariance function as \(p_0\).
\end{proposition}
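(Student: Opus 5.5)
Write \(\Psib(\cdot)=\kappa(\cdot,\Z)\kappa(\Z,\Z)^{-1}\), which is well defined under the implicit assumption \(\kappa(\Z,\Z)\succ0\) made in Proposition~\ref{prop:gp_svgp_recovery}. The plan is a direct moment computation on the pathwise map \(f=T(g,\va)\). Evaluating at an arbitrary finite query set \(\X\), the map is affine in the pair \((g,\va)\):
\[
f(\X)=g(\X)-\Psib(\X)g(\Z)+\Psib(\X)\vmu+\Psib(\X)\Dmat\va .
\]
Finite second moments of \(g\), together with \(\va\sim\N(\mathbf{0},\Imat_M)\), guarantee that every term is square-integrable. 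This justifies taking expectations and covariances termwise.

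\textbf{Step 1 (mean).} Using \(\E[g(\X)]=\mu(\X)\), \(\E[g(\Z)]=\mu(\Z)=\vmu\) and \(\E[\va]=\mathbf 0\), we get
\[
\E f(\X)=\mu(\X)+\Psib(\X)[\vmu-\mu(\Z)]=\mu(\X).
\]

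\textbf{Step 2 (covariance).} By independence of \(\va\) and \(g\), the covariance splits into a \(g\)-part plus an \(\va\)-part. Take two query sets \(\X,\X'\) (or a single set, since the covariance function is determined pointwise). The \(g\)-part is the covariance of \(g(\X)-\Psib(\X)g(\Z)\) with \(g(\X')-\Psib(\X')g(\Z)\), which expands to
\[
\kappa(\X,\X')-\kappa(\X,\Z)\Psib(\X')^\top-\Psib(\X)\kappa(\Z,\X')+\Psib(\X)\kappa(\Z,\Z)\Psib(\X')^\top .
\]
Substituting \(\Psib\), the last three terms are each \(\kappa(\X,\Z)\kappa(\Z,\Z)^{-1}\kappa(\Z,\X')\) with signs \(-,-,+\). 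The \(g\)-part therefore equals the Nyström residual \(\kappa(\X,\X')-\kappa(\X,\Z)\kappa(\Z,\Z)^{-1}\kappa(\Z,\X')\). The \(\va\)-part is \(\Psib(\X)\Dmat\Dmat^\top\Psib(\X')^\top=\kappa(\X,\Z)\kappa(\Z,\Z)^{-1}\kappa(\Z,\X')\), which exactly cancels the subtracted term. Hence the covariance is \(\kappa(\X,\X')\).

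The argument uses no Gaussianity of \(g\), only linearity of \(T\) and second moments. The main point requiring care is the cross-covariance bookkeeping in Step 2, in particular making sure \(\Psib(\X)\) and \(\Psib(\X')\) are handled separately so that the result is the full covariance function rather than only the marginal variances. No other obstacle is expected.
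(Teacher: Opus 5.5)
Your proposal is correct and takes essentially the same route as the paper: write \(f(\X)=g(\X)-\Psib(\X)g(\Z)+\Psib(\X)\vu\), use \(\vmu=\mu(\Z)\) to cancel the mean, and split the covariance by independence into the Nystr\"om residual plus \(\Psib(\X)\kappa(\Z,\Z)\Psib(\X')^\top\), which together give \(\kappa(\X,\X')\). Flagging that \(\kappa(\Z,\Z)\succ0\) is needed for \(\Psib\) to be defined is a useful addition, since the proposition uses the inverse without stating that assumption.
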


For a Gaussian prior, Proposition~\ref{prop:gp_svgp_recovery} gives equality of
the full process law. For a general implicit prior, Proposition~\ref{prop:second_order_preservation}
shows what the empirical Matheron operator preserves the first two moments of the original implicit process when the prior is not
Gaussian. Higher-order
differences are controlled by the quality of the replacement operator and the
choice of inducing set; Appendix~\ref{app:theory} makes this dependence explicit. In these propositions, \(\kappa(\X,\Z)\) denotes the corresponding Gram
matrix, whereas \(\kappa(\vx,\vz)\) denotes a scalar kernel value.

\paragraph{Computational complexity}
For scalar outputs, let \(N\) be the number of input locations at which the function is sampled,
\(M=|\Z|\) the number of inducing locations, and \(B\) the number of prior
samples used to estimate the empirical covariance operator. Let
\(C_{\mathrm{bank}}(N+M,B)\) denote the cost of evaluating these \(B\) prior
samples on \(\X\cup\Z\), and let \(C_g(N+M)\) denote the cost of drawing and
evaluating the coherent prior sample \(g(\X),g(\Z)\) used in the Matheron
update. Constructing the empirical operator costs \(
    O\left(
        C_{\mathrm{bank}}(N+M,B)
        + BNM
        + BM^2
        + NM^2
        + M^3
    \right)\). 
The terms \(BNM\) and \(BM^2\) form the empirical cross-covariance and
inducing covariance from the prior bank, \(M^3\) is the inducing Cholesky
factorization, and \(NM^2\) applies the inverse inducing covariance to the
\(N\times M\) cross-covariance matrix. After this setup is amortized, one
posterior sample costs \(
    O\left(C_g(N+M)+M^2+NM\right)\), 
where \(M^2\) accounts for a dense coefficient transformation and \(NM\)
for the pathwise correction. For a joint \(C\)-output model, replace the matrix
dimension \(M\) by \(d_a=MC\) and let \(N\) count scalar output evaluations.

\begin{remark}[Tunable inducing locations and operators]
\label{rem:tunable_z}
All identities above hold conditionally on \(\Z\), \(\Psib_{\Z}\), \(\vmu\), and
\(\Dmat\). If these quantities are fixed or learned from prior samples only,
\GMVIP is variational inference under a fixed surrogate prior. If they are
optimized using the observed targets, the surrogate prior itself becomes
data-adaptive. This can be useful as empirical-Bayes approximation tuning, and is used in our experiments when suitable.
\end{remark}

\section{Experiments}

In this section we showcase the experimental results obtained in a range of regression and forecasting experiments. Refer to Appendix~\ref{app:details} for further experimental details and Appendix~\ref{app:ablation-studies} for ablation studies on inducing locations and prior samples. Additional classification LeNet experiments are included in
Appendix~\ref{app:classification}. 

The codebase for the experiments can be found in \url{https://github.com/Ludvins/ImplicitProcessZoo}.

\subsection{Synthetic function-space diagnostics}
We adapt the one-dimensional synthetic regression benchmark of
\citet{izmailov2020subspace} to the implicit-process setting. The fixed
dataset is
\(\mathcal{D}=\{(x_i,y_i)\}_{i=1}^{400}\) with scalar inputs and outputs.
The implicit process prior is induced by a BNN. Figure~\ref{fig:synthetic} (Appendix~\ref{app:synthetic-valla}) shows the predictive distributions produced by the different methods on the same input grid. The experiment tests whether each approximation can represent plausible uncertainty in regions with sparse observations and extrapolation.

\subsection{Small and medium regression benchmarks}
\input{tables/uci_rmse}

We evaluate regression methods on the standard UCI suite: Boston, Concrete,
Energy, Kin8nm, Naval, Power, Protein, Wine Red, and Yacht; using a two-layer with 10 units BNN as prior. Each task is scalar
regression with Gaussian likelihood, and uses random 90\%/10\% train/test splits. Table~\ref{tab:uci-all-metrics-gmvip-wandb}
reports RMSE and NLL averaged across random seeds (CRPS and CQM~\citep{ortega2024variational} in Appendix~\ref{app:full_uci}). GMVIP is highly
competitive across the experiment: it obtains the best or second-best mean on several
datasets and metrics, including strong RMSE/CRPS results on Boston, Concrete,
Energy, Kin8nm, Naval, and Protein.  For the BNN regression benchmarks, VIP, FTIP, and GMVIP jointly optimize
the BNN prior parameters and therefore implement empirical-Bayes
approximation tuning. Results with fixed prior in Appendix~\ref{app:ablation-studies}.

GMVIP is particularly strong in RMSE, NLL, and CRPS, while centered
interval calibration is mixed and sometimes favors simpler alternatives. This
suggests that the Matheron posterior better captures predictive
uncertainty while retaining the implicit-prior structure. On some datasets,
notably Power and Yacht, simpler or alternative approximations remain competitive,
indicating that no single posterior family dominates uniformly.
Appendix~\ref{app:ablation-studies} shows that the residual path prevents
severe overconfidence on several small-data tasks, although an
inducing-only representation remains competitive on some datasets; it
also quantifies the role of empirical-Bayes prior adaptation.

\subsection{Forecasting With Implicit and Empirical Priors}

We evaluate GMVIP in two complementary forecasting settings with
sample-defined priors: a simulator-induced Lotka--Volterra prior and a
retrieval-conditioned empirical prior for electricity load. Full protocols
are provided in Appendices~\ref{app:lv-implicit-prior}
and~\ref{app:eld-forecasting}. The simulator prior remain fixed, and the
electricity prior is fixed after target-specific retrieval.

\paragraph{Lotka--Volterra.}
The implicit prior is induced by the predator--prey system
\[
\dot u=\alpha u-\beta uv,
\qquad
\dot v=\delta uv-\gamma v,
\]
by sampling physical parameters and initial conditions and solving the ODE.
Each method observes noisy values for \(t\leq15\), receives no data from the
gap \(15<t\leq20\), and is evaluated only on the forecast interval
\(20<t\leq30\). GMVIP uses the joint multioutput construction.

\begin{figure}[t]
    \centering
    \includegraphics[width=0.98\linewidth]{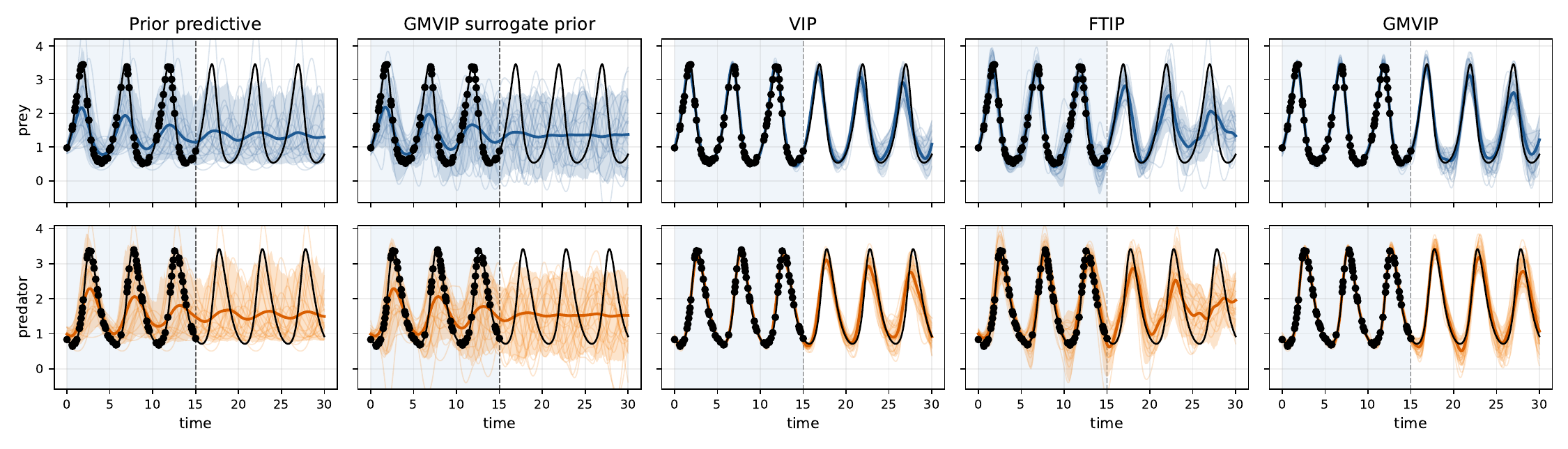}
    \caption{Lotka--Volterra forecasting with an implicit ODE prior.
    Black points are noisy observations. Curves and bands show posterior predictions from
    the final scheduled optimization iterate.}
    \label{fig:volterra}
\end{figure}

\begin{table*}[t]
\centering
\caption{Mean \(\pm\) standard error Lotka--Volterra results on the
held-out test interval \(t\in(20,30]\) across 20 target trajectories.
VIP and FTIP use \(S=20\). VIP is trained for 800 iterations; FTIP uses a
400-iteration VIP warm start followed by 400 fine-tuning iterations. Best results are shown in
\textcolor{purple}{\textbf{purple}} and second-best results in
\textcolor{teal}{\textbf{teal}}. Coverage is ranked by proximity to
the nominal \(0.90\) level.}
\label{tab:lotka_volterra_mean_se_results}
\scalebox{0.9}{
\begin{tabular}{lccccc}
\toprule
Method
& RMSE $\downarrow$
& NLL $\downarrow$
& CRPS $\downarrow$
& Cov$_{90}$
& ODE residual $\downarrow$ \\
\midrule
Prior predictive
& $0.63 \pm 0.06$
& $\second{0.90 \pm 0.11}$
& $0.36 \pm 0.04$
& $\best{0.90 \pm 0.03}$
& $\best{\approx 10^{-6}}$\\

GMVIP sur.\ prior
& $0.63 \pm 0.06$
& $1.95 \pm 0.93$
& $0.37 \pm 0.03$
& $0.93 \pm 0.02$
& $0.31 \pm 0.00$ \\

VIP
& $\second{0.50 \pm 0.08}$
& $11.32 \pm 5.01$
& $\second{0.31 \pm 0.05}$
& $0.72 \pm 0.06$
& $0.65 \pm 0.03$ \\

FTIP
& $0.57 \pm 0.08$
& $1.04 \pm 0.22$
& $0.34 \pm 0.05$
& $0.82 \pm 0.04$
& $0.62 \pm 0.03$ \\

GMVIP
& $\best{0.32 \pm 0.06}$
& $\best{0.51 \pm 0.29}$
& $\best{0.20 \pm 0.04}$
& $\second{0.90 \pm 0.04}$
& $\second{0.31 \pm 0.02}$ \\
\bottomrule
\end{tabular}}
\end{table*}

Figure~\ref{fig:volterra} qualitatively illustrates the differences between
the methods. The prior predictive and its GMVIP surrogate capture the
oscillatory structure of the ODE prior but are not adapted to the observed
trajectory. VIP and FTIP condition on the observations, although their
forecasts exhibit increasing phase and amplitude errors over the held-out
interval. In contrast, GMVIP more consistently preserves the coupled
oscillatory dynamics of both prey and predator over the forecasting horizon.

Quantitatively, Table~\ref{tab:lotka_volterra_mean_se_results} shows that
GMVIP achieves the lowest mean RMSE, NLL, and CRPS at \(S=20\),
while the prior predictive has coverage closest to \(0.90\). GMVIP's mean
RMSE of \(0.32\) represents reductions of \(35.5\%\) relative to VIP and
\(43.6\%\) relative to FTIP. The basis-size analysis in Appendix~\ref{app:lv-basis-sweep} shows that
increasing \(S\) improves VIP and FTIP's point forecasts but does not
overtake GMVIP on the aggregate metrics. GMVIP has the lowest mean RMSE, NLL,
CRPS, and ODE residual among the learned configurations, as well as coverage
closest to \(0.90\).

\paragraph{Electricity load.}
We next consider day-ahead forecasting on the Electricity Load
dataset~\citep{trindade2015electricityloaddiagrams}. Each target contains
48 hours at 15-minute resolution: the first 24 hours are observed and the
next 24 hours are held out. For each target, we retrieve \(2048\)
historical trajectories from 2011--2013 using calendar compatibility and
nearest-neighbor similarity over the observed context; test targets are
drawn from 2014. Because the observed prefix is used both to retrieve the historical bank
and to fit the posterior, this experiment is a two-stage empirical-Bayes
procedure rather than inference under a fixed, target-independent prior.
Retrieval never accesses the held-out forecast day, and every method
receives the same retrieved bank. GMVIP uses \(M=96\) inducing locations,
with \(\Z=\X_{\mathrm{obs}}\): one at each observed context point and none
in the forecast interval.

\begin{figure}[t]
    \centering
    \includegraphics[width=0.98\linewidth]{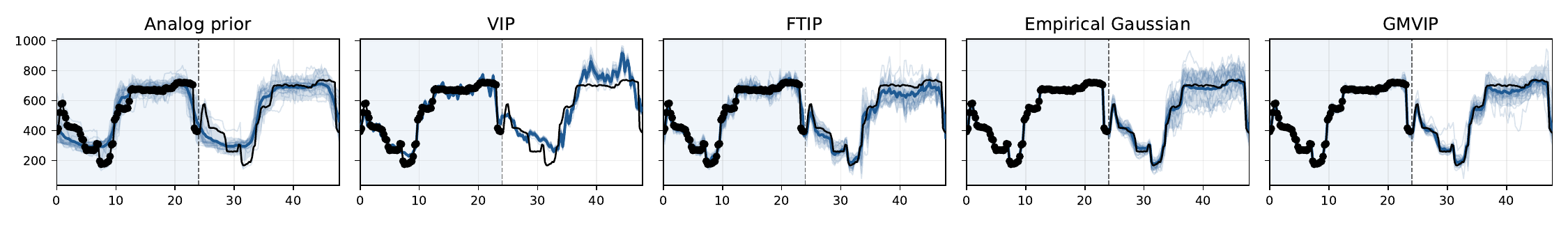}
    \caption{Electricity-load forecasting with a retrieval-conditioned
    empirical prior. Methods observe the first 24 hours and forecast the
    following 24 hours. The panels show VIP, FTIP, and GMVIP on the same
    target window. Black points are observations, the black curve is the
    held-out target, and blue curves and bands show posterior predictions.}
    \label{fig:forecast}
\end{figure}

\begin{table*}[t]
\centering
\caption{Electricity-load results across \(75\) seed--target windows.
Entries are means \(\pm\) standard errors across windows.
Coverage and its standard error are reported in percentage points.
Best results are shown in
\textcolor{purple}{\textbf{purple}} and second-best results in
\textcolor{teal}{\textbf{teal}}. Coverage is ranked by proximity to its
nominal level; lower is better otherwise.}
\label{tab:eld-results}
\small
\resizebox{\textwidth}{!}{
\begin{tabular}{lcccccc}
\toprule
Method
& RMSE \(\downarrow\)
& NLL \(\downarrow\)
& CRPS \(\downarrow\)
& CQM \(\downarrow\)
& Cov.\ 80\%
& Cov.\ 90\% \\
\midrule
Analog prior
& \(55.45 \pm 15.91\)
& \(\second{5.31 \pm 0.44}\)
& \(30.73 \pm 9.04\)
& \(\second{0.12 \pm 0.01}\)
& \(74.18 \pm 2.14\)
& \(85.96 \pm 1.70\) \\

VIP
& \(60.93 \pm 18.61\)
& \(99.19 \pm 55.22\)
& \(40.60 \pm 12.83\)
& \(0.38 \pm 0.01\)
& \(20.94 \pm 1.16\)
& \(26.26 \pm 1.42\) \\

FTIP
& \(68.02 \pm 22.10\)
& \(10.03 \pm 4.17\)
& \(41.28 \pm 13.99\)
& \(0.17 \pm 0.01\)
& \(57.25 \pm 2.03\)
& \(68.06 \pm 1.99\) \\

Empirical Gaussian
& \(\best{48.44 \pm 15.27}\)
& \(6.63 \pm 0.99\)
& \(\second{28.37 \pm 9.06}\)
& \(0.13 \pm 0.01\)
& \(\best{79.32 \pm 2.16}\)
& \(\best{86.62 \pm 1.78}\) \\

GMVIP
& \(\second{49.06 \pm 15.06}\)
& \(\best{4.35 \pm 0.14}\)
& \(\best{27.48 \pm 8.74}\)
& \(\best{0.11 \pm 0.01}\)
& \(\second{76.22 \pm 2.11}\)
& \(\second{86.53 \pm 1.73}\) \\
\bottomrule
\end{tabular}}
\end{table*}

Figure~\ref{fig:forecast} provides a representative qualitative comparison.
VIP and FTIP follow the broad daily pattern but show larger forecast spread
and noticeable errors around the load minimum and subsequent recovery. GMVIP
more closely reproduces the sharp decline after the observed context and the
later return to the higher-load regime, while maintaining coherent
uncertainty across the forecast horizon.

Quantitatively, Table~\ref{tab:eld-results} reports means and sample standard
errors across the \(75\) windows; the large standard errors show
substantial target heterogeneity. The Empirical Gaussian has the lowest mean
RMSE, with GMVIP close behind (\(48.44\) versus \(49.06\)). GMVIP obtains the
lowest mean mixture NLL, CRPS, and CQM. The Empirical Gaussian has coverage
closest to the nominal \(80\%\) and \(90\%\) levels, with GMVIP second on both.
VIP is strongly under-dispersed, whereas FTIP recovers part of the missing
uncertainty. Overall, GMVIP provides the strongest distributional scores,
while analytic Gaussian conditioning has a marginal advantage in mean point
error.

\subsection{Large Scale Regression}

We evaluate on the
YearPredictionMSD~\citep{bertinmahieux2011yearpredictionmsd} and Airline
benchmark derived from the U.S. Bureau of Transportation Statistics on-time
performance data~\citep{bts_airline_ontime} and used in prior large-scale GP
work~\citep{dutordoir2020sparse}.
These tasks were also considered by
\citet{ortega2024variational}, but we use our own shared protocol. All
methods use the same preprocessing and fixed train/test splits, with inputs
standardized using training-set statistics and targets normalized during
training. We train for a matched budget of \(60\,000\) stochastic iterations
using two-hidden-layer MLPs with \(50\) units for Year and \(100\) for
Airlin.

Table~\ref{tab:large-regression-all-metrics} shows that no single method dominates all metrics. On YearPredictionMSD, FBNN gives the best predictive accuracy and distributional scores, with the lowest RMSE, NLL, and CRPS. GMVIP remains competitive, ranking close to the best methods in RMSE and CRPS, but does not improve over FBNN on this dataset. On Airline, the picture changes: GMVIP obtains the best RMSE and ties for the best rounded NLL, while also achieving the second-best CRPS. This indicates that the empirical Matheron correction is particularly effective in the larger Airline setting, where adapting function-space prior samples through learned inducing corrections improves predictive accuracy.

Calibration, as measured by CQM, favors VIP and FTIP on both datasets. These methods obtain the lowest CQM values, while GMVIP is second-best on Airline and tied for second-best on Year. Overall, the results suggest a tradeoff: VIP/FTIP provide the strongest centered interval calibration, FBNN is strongest on Year, and GMVIP gives the best or near-best predictive performance on the larger Airline benchmark while retaining competitive calibration.
\input{tables/regression}

\section{Discussion and Conclusion}
\label{sec:discussion_conclusion}

We introduced GMVIP, a pathwise variational family for implicit-process priors.
Instead of forming posterior samples as linear combinations of prior functions,
GMVIP draws a coherent prior path and replaces its values at inducing locations
through an empirical Matheron-style correction. Since the surrogate prior and
posterior share the same pathwise map and differ only in the law of the inducing
coefficient, the augmented KL reduces to a finite-dimensional coefficient KL.

The construction recovers the standard inducing-variable variational GP family
when the prior is Gaussian and the matching covariance operator is used. For
general implicit priors, empirical GMVIP should be understood as inference under
a Matheron surrogate prior: it preserves the prior mean and covariance in the
population limit, but not necessarily the full process law. Empirically, GMVIP is competitive across regression and
classification. In Lotka-Volterra, GMVIP has the lowest mean RMSE and CRPS
and the lowest mean mixture NLL among the learned methods in the matched
\(S=20\) comparison. It also retains the best mean RMSE, NLL, CRPS,
calibration, and ODE residual when compared with the larger VIP and FTIP
bases. These comparisons vary substantially by target, and mean NLL remains
sensitive to one difficult high-amplitude trajectory. Thus the simulator
results support the value of coherent prior residual paths while showing that
larger coefficient bases narrow the point-forecasting gap. In electricity,
GMVIP has the lowest mean NLL, CRPS, and CQM, while the Empirical Gaussian has
the lowest mean RMSE and coverage closest to nominal.

\paragraph{Limitations} GMVIP performs inference under the surrogate prior induced by the replacement
map, which may differ from the original implicit-process prior for non-Gaussian
or highly structured processes. The empirical operator also depends on the
quality of the prior bank and on the conditioning of the inducing covariance, so
poor inducing locations or too few prior samples can degrade performance. Finally,
empirical GMVIP is more expensive than coefficient-only VIP-style methods because
it requires prior-bank evaluations, covariance estimation, and a Matheron
correction. The present experiments use a Gaussian posterior over inducing
coefficients; richer posterior families may be needed for strongly multimodal
posteriors.

\subsection*{AI use statement}

In this work, we used generative AI tools to provide feedback on research methodology and experimental design and to assist with the interpretation and presentation of experimental results. We did not use generative AI tools to generate datasets, formulate mathematical claims, develop theoretical models, provide critical ingredients for or assist in writing proofs, propose hypotheses, implement the proposed methods, clean or reformat datasets, or perform qualitative or thematic data analysis. Translation and research tasks involving surveys or interviews were not applicable to this work.

Additionally, we used generative AI tools for brainstorming, searching for and summarizing potentially relevant literature, suggesting improvements to the structure and presentation of the paper, and drafting or editing text for clarity and readability. All methodological suggestions and interpretations produced with AI assistance were independently evaluated by the authors. All references were manually verified against their original sources, all experimental results and numerical claims were checked against the outputs of experiments conducted by the authors, and all AI-assisted text was reviewed and revised by the authors. We take responsibility for the final content of this work, including text, claims, and artifacts produced with the aid of generative AI.

\section*{Reproducibility Statement}

We provide the details required to reproduce both the theoretical and empirical
results of this work. Complete proofs of the main theoretical claims, together
with additional properties of the proposed construction, are given in
Appendix~\ref{app:theory}. The \GMVIP training and evaluation procedures are
specified in Algorithms~\ref{alg:gmvip-training}
and~\ref{alg:gmvip-evaluation} in
Appendix~\ref{app:training-evaluation}. Experimental protocols are provided in
Appendix~\ref{app:details}, including the UCI regression experiments
(Appendix~\ref{app:uci-regression}), the Lotka--Volterra implicit-prior
experiment (Appendix~\ref{app:lv-implicit-prior}), the electricity-load
forecasting experiment (Appendix~\ref{app:eld-forecasting}), the large-scale
regression experiments (Appendix~\ref{app:large-regression}), and the LeNet
classification experiments (Appendix~\ref{app:multiclass}). These sections
specify dataset preprocessing and splits, model architectures, optimization
settings, random seeds, Monte Carlo sample counts, inducing-point
configurations, prior-bank sizes, likelihoods, and evaluation metrics.
Additional reproducibility analyses are reported for surrogate-prior fidelity
in Appendix~\ref{app:surrogate_prior}, classification experiments in
Appendix~\ref{app:classification}, full UCI results in
Appendix~\ref{app:full_uci}, ablation studies in
Appendix~\ref{app:ablation-studies}, and Lotka--Volterra basis-size sensitivity
in Appendix~\ref{app:lv-basis-sweep}. All reported predictive results are
evaluated on held-out data according to the protocols specified in these
sections.


\bibliography{references}
\bibliographystyle{plainnat}

\newpage
\appendix

\section{Mathematical Derivations}
\label{app:theory}

This appendix contains the proofs of the main results in
Section~\ref{sec:methodology}, together with additional properties of the
replacement operator used in \GMVIP.

\subsection{Proofs of the main results}

\begin{proof}[Proof of Proposition~\ref{prop:process_validity}]
Let \(\X=(\vx_1,\ldots,\vx_n)\) be any finite ordered tuple. The induced
finite-dimensional random vector is
\[
    f(\X)
    =
    g(\X)+\Psib_{\Z}(\X)\bigl[\vmu+\Dmat\va-g(\Z)\bigr].
\]
It is a measurable function of \((g(\X),g(\Z),\va)\), and therefore defines a
probability law.

It remains to verify projective consistency. Let \(\X_1\) be obtained from
\(\X_2\) by a selection or permutation matrix \(\boldsymbol{\Pi}\). For
multioutput functions, the same symbol denotes the block operator
\(\boldsymbol{\Pi}\otimes\Imat_C\) acting on stacked evaluations. The replacement
operators used in this work are row-consistent, meaning
\(\Psib_{\Z}(\X_1)=\boldsymbol{\Pi}\Psib_{\Z}(\X_2)\). Hence
\[
\begin{aligned}
    f(\X_1)
    &=
    g(\X_1)+\Psib_{\Z}(\X_1)\bigl[\vmu+\Dmat\va-g(\Z)\bigr] \\
    &=
    \boldsymbol{\Pi}g(\X_2)+\boldsymbol{\Pi}\Psib_{\Z}(\X_2)
    \bigl[\vmu+\Dmat\va-g(\Z)\bigr]
    =
    \boldsymbol{\Pi}f(\X_2).
\end{aligned}
\]
Since \(g\sim p_0\) is a valid stochastic process, its finite-dimensional laws
are Kolmogorov-consistent. Because the same global pair \((g,\va)\) is reused
for all query sets, the induced laws satisfy
\(P^{\nu}_{\X_1}=\boldsymbol{\Pi}_{\#}P^{\nu}_{\X_2}\), where
\(\boldsymbol{\Pi}_{\#}\) denotes
pushforward by the restriction/permutation map. Thus the family
\(\{P^\nu_{\X}\}_{\X}\) satisfies Kolmogorov consistency. By the Kolmogorov
extension theorem, it defines a valid stochastic process. Taking
\(\nu=p(\va)\) gives the surrogate prior and taking \(\nu=q_\phi(\va)\) gives
the variational posterior.
\end{proof}

\begin{proof}[Proof of Proposition~\ref{prop:coefficient_kl}]
By construction,
\[
    p_{\Z}(g,\va,f)
    =
    p_0(g)p(\va)\delta[f=T(g,\va)],
    \qquad
    q_{\Z,\phi}(g,\va,f)
    =
    p_0(g)q_\phi(\va)\delta[f=T(g,\va)].
\]
Thus the two measures have the same conditional law of \((g,f)\) given \(\va\).
Applying the chain rule for KL divergences,
\[
\begin{aligned}
    \KL(q_{\Z,\phi}(g,\va,f)\|p_{\Z}(g,\va,f))
    &=
    \KL(q_\phi(\va)\|p(\va))  \\
    &\quad+
    \E_{q_\phi(\va)}
    \KL(q_{\Z,\phi}(g,f\mid\va)\|p_{\Z}(g,f\mid\va)).
\end{aligned}
\]
The second term is zero because both conditionals equal
\(p_0(g)\delta[f=T(g,\va)]\). This proves
\eqref{eq:kl_augmented_bold}.

The marginal function-space bound follows by data processing under the projection
\((g,\va,f)\mapsto f\):
\[
    \KL(q_{\Z,\phi}^{f}\|p_{\Z}^{f})
    \leq
    \KL(q_{\Z,\phi}(g,\va,f)\|p_{\Z}(g,\va,f))
    =
    \KL(q_\phi(\va)\|p(\va)).
\]
Finally, if \(\Psib_{\Z}(\Z)=\Imat_{d_a}\) and \(\Dmat\) is invertible, then evaluating the
replacement map at the inducing inputs gives
\[
    f(\Z)
    =
    g(\Z)+\vmu+\Dmat\va-g(\Z)
    =
    \vmu+\Dmat\va,
\]
so \(\va=\Dmat^{-1}(f(\Z)-\vmu)\) is a measurable function of \(f\). Hence the
density ratio between the augmented posterior and prior depends only on \(f\),
and the projection to \(f\) loses no KL information. Therefore the above
inequality is an equality in this case.
\end{proof}

Proposition~\ref{prop:coefficient_kl} applies exactly to the unregularized cardinal operator. In
the numerical implementation, diagonal jitter makes cardinality and the
corresponding marginal-KL equality approximate up to numerical
regularization.

\begin{proof}[Proof of Proposition~\ref{prop:gp_svgp_recovery}]
Fix a finite input set \(\X\). The GP conditional decomposition gives
\[
    g(\X)
    =
    \mu(\X)
    +
    \kappa(\X,\Z)\kappa(\Z,\Z)^{-1}
    \bigl(g(\Z)-\vmu\bigr)
    +
    \vr_{\X},
\]
where \(\vr_{\X}\) is independent of \(g(\Z)\) and has covariance
\[
    \kappa(\X,\X)
    -
    \kappa(\X,\Z)\kappa(\Z,\Z)^{-1}\kappa(\Z,\X).
\]
With \(\vu=\vmu+\Dmat\va\), the transformed sample is
\[
    T(g,\va)(\X)
    =
    \mu(\X)
    +
    \kappa(\X,\Z)\kappa(\Z,\Z)^{-1}
    \bigl(\vu-\vmu\bigr)
    +
    \vr_{\X}.
\]
If \(\va\sim\N(\mathbf{0},\Imat_M)\), then
\(\vu\sim\N(\vmu,\kappa(\Z,\Z))\), independently of \(\vr_{\X}\). Therefore
\(T(g,\va)(\X)\) has mean \(\mu(\X)\) and covariance \(\kappa(\X,\X)\). Since
this holds for every finite \(\X\), the transformed process is exactly
\(\GP(\mu,\kappa)\).

For arbitrary \(q_\phi(\va)\), the same expression is the GP conditional
\(p_0(f(\X)\mid \vu)\). Integrating over the pushforward law \(q_\phi(\vu)\)
gives \(q(f)=\int p_0(f\mid\vu)q_\phi(\vu)d\vu\). Finally, the correspondence
with a Gaussian SVGP posterior \(q(\vu)=\N(\vm_u,\Smat_u)\) follows from the
affine whitening relation \(\vu=\vmu+\Dmat\va\), which gives
\(q_\phi(\va)=\N(\Dmat^{-1}(\vm_u-\vmu),
\Dmat^{-1}\Smat_u\Dmat^{-\top})\).
\end{proof}

\begin{proof}[Proof of Proposition~\ref{prop:second_order_preservation}]
Fix finite input sets \(\X\) and \(\X'\), and write
\(\Bmat_{\X}=\kappa(\X,\Z)\kappa(\Z,\Z)^{-1}\). The transformed sample can be written as
\[
    f(\X)=g(\X)-\Bmat_{\X}g(\Z)+\Bmat_{\X}\vu,
    \qquad
    \vu=\vmu+\Dmat\va.
\]
Since \(\vmu=\mu(\Z)\), \(\Dmat\Dmat^\top=\kappa(\Z,\Z)\), and
\(\va\sim\N(\mathbf{0},\Imat_M)\), we have
\(\E[\vu]=\mu(\Z)\) and \(\Cov(\vu)=\kappa(\Z,\Z)\). Moreover, \(\vu\) is
independent of \(g\). Hence
\[
    \E[f(\X)]
    =
    \mu(\X)-\Bmat_{\X}\mu(\Z)+\Bmat_{\X}\mu(\Z)
    =
    \mu(\X).
\]
For the covariance, independence gives
\[
\begin{aligned}
    \Cov(f(\X),f(\X'))
    &=
    \Cov(g(\X)-\Bmat_{\X}g(\Z),\,g(\X')-\Bmat_{\X'}g(\Z)) \\
    &\quad+
    \Bmat_{\X}\kappa(\Z,\Z)\Bmat_{\X'}^\top .
\end{aligned}
\]
Expanding the first term and using
\(\Bmat_{\X}=\kappa(\X,\Z)\kappa(\Z,\Z)^{-1}\) yields
\[
    \Cov(g(\X)-\Bmat_{\X}g(\Z),\,g(\X')-\Bmat_{\X'}g(\Z))
    =
    \kappa(\X,\X')-\Bmat_{\X}\kappa(\Z,\X').
\]
The second term satisfies
\(\Bmat_{\X}\kappa(\Z,\Z)\Bmat_{\X'}^\top
=\Bmat_{\X}\kappa(\Z,\X')\). Therefore
\(\Cov(f(\X),f(\X'))=\kappa(\X,\X')\). Since this holds for arbitrary finite
\(\X,\X'\), the surrogate prior has the same mean and covariance function as
\(p_0\).
\end{proof}

\subsection{Additional theoretical properties}

\begin{proposition}[Finite-dimensional prior distortion]
\label{prop:prior_distortion}
Fix a finite input set \(\X\), write \(\Bmat_{\X}=\Psib_{\Z}(\X)\), and define the
prior residual \(\vr_{\X}=g(\X)-\Bmat_{\X}g(\Z)\). Let the surrogate inducing variable
\(\vu\) be independent of \(g\), with law \(P_{\vu}\). Then, whenever the KLs
are well-defined,
\[
    \KL \left(p_0(g(\X))\,\|\,p_{\Z}(f(\X))\right)
    \leq
    I(\vr_{\X};g(\Z))
    +
    \KL \left(p_0(g(\Z))\,\|\,P_{\vu}\right).
\]
In particular, if \(\vu\overset{d}{=}g(\Z)\), the distortion is bounded by
\(I(\vr_{\X};g(\Z))\), and it is zero whenever \(\vr_{\X}\) is independent of
\(g(\Z)\).
\end{proposition}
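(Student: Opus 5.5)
The plan is to write both finite-dimensional laws as pushforwards of two joint laws under one common map, and then apply data processing followed by the chain rule for KL. First I will write the original and surrogate vectors in a common form. By definition of the residual,
\[
    g(\X)=\vr_{\X}+\Bmat_{\X}g(\Z).
\]
With \(\vu=\vmu+\Dmat\va\), Equation~\eqref{eq:general_map_bold} gives
\[
    f(\X)=\vr_{\X}+\Bmat_{\X}\vu.
\]
Define the measurable map \(\Phi(\vr,\mathbf{w})=\vr+\Bmat_{\X}\mathbf{w}\). Then:
\begin{itemize}
    \item \(p_0(g(\X))\) is the pushforward under \(\Phi\) of the joint law \(P_{(\vr_{\X},g(\Z))}\);
    \item \(p_{\Z}(f(\X))\) is the pushforward under \(\Phi\) of the product law \(P_{\vr_{\X}}\otimes P_{\vu}\), since \(\vu\) is independent of \(g\) and hence of \(\vr_{\X}\).
\end{itemize}
The key point is that both vectors are images of the same deterministic map. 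Only the law of its input differs: it is the dependent pair in one case and the independent pair in the other.

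Next I apply the data-processing inequality for KL under \(\Phi\), as in the proof of Proposition~\ref{prop:coefficient_kl}:
\[
    \KL\bigl(p_0(g(\X))\,\|\,p_{\Z}(f(\X))\bigr)
    \leq
    \KL\bigl(P_{(\vr_{\X},g(\Z))}\,\|\,P_{\vr_{\X}}\otimes P_{\vu}\bigr).
\]
To split the right-hand side, I insert the intermediate measure \(P_{\vr_{\X}}\otimes P_{g(\Z)}\). The Radon--Nikodym derivative factorizes as
\[
    \frac{dP_{(\vr,g(\Z))}}{d(P_{\vr}\otimes P_{\vu})}
    =
    \frac{dP_{(\vr,g(\Z))}}{d(P_{\vr}\otimes P_{g(\Z)})}
    \cdot
    \frac{dP_{g(\Z)}}{dP_{\vu}}(g(\Z)).
\]
Taking the log and the expectation under \(P_{(\vr,g(\Z))}\), the first factor contributes \(I(\vr_{\X};g(\Z))\). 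The second factor depends only on \(g(\Z)\), so it contributes \(\KL(p_0(g(\Z))\|P_{\vu})\). Equivalently, this is the chain rule with the conditioning on the \(\mathbf{w}\)-coordinate: \(\KL(P_{g(\Z)}\|P_{\vu})+\E_{g(\Z)}\KL(P_{\vr\mid g(\Z)}\|P_{\vr})\). This gives the stated bound.

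The particular cases follow at once:
\begin{itemize}
    \item If \(\vu\overset{d}{=}g(\Z)\), the second term vanishes.
    \item If in addition \(\vr_{\X}\perp g(\Z)\), the mutual information is zero, so the KL is zero.
\end{itemize}

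The main obstacle is measure-theoretic rigor rather than algebra. The factorization of the density ratio requires the absolute continuities \(P_{(\vr,g(\Z))}\ll P_{\vr}\otimes P_{g(\Z)}\) and \(P_{g(\Z)}\ll P_{\vu}\), since \(\Phi\) is typically non-injective and the laws may be degenerate (for instance, \(\vr_{\X}\) may be singular when \(\X\) overlaps \(\Z\)). I would handle this using the hypothesis ``whenever the KLs are well-defined''. If either term on the right is \(+\infty\), the bound is trivial. Otherwise both absolute continuities hold, and the chain rule for general measures, via disintegration, applies directly.
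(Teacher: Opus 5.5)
Your proposal is correct and follows essentially the same route as the paper. You write both laws as pushforwards of \(P_{(\vr_{\X},g(\Z))}\) and \(P_{\vr_{\X}}\otimes P_{\vu}\) under the common map \((\vr,\vw)\mapsto\vr+\Bmat_{\X}\vw\), apply data processing, and split the joint KL through the intermediate product \(P_{\vr_{\X}}\otimes P_{g(\Z)}\) into \(I(\vr_{\X};g(\Z))\) plus \(\KL(p_0(g(\Z))\,\|\,P_{\vu})\); your remarks on absolute continuity and the infinite case usefully make precise the paper's ``whenever the KLs are well-defined'' proviso.
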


\begin{proof}
Let \(H(\vr,\vw)=\vr+\Bmat_{\X}\vw\). Under the original prior,
\(g(\X)=H(\vr_{\X},g(\Z))\), where \((\vr_{\X},g(\Z))\) has its joint prior law.
Under the surrogate prior, \(f(\X)=H(\vr_{\X},\vu)\), where the joint law is
\(P_{\vr_{\X}}\otimes P_{\vu}\), because \(\vu\) is independent of \(g\). By data
processing for KL divergences,
\[
    \KL(p_0(g(\X))\|p_{\Z}(f(\X)))
    \leq
    \KL(P_{\vr_{\X},g(\Z)}\|P_{\vr_{\X}}\otimes P_{\vu}).
\]
The right-hand side decomposes as
\[
    \KL(P_{\vr_{\X},g(\Z)}\|P_{\vr_{\X}}\otimes P_{g(\Z)})
    +
    \KL(P_{g(\Z)}\|P_{\vu})
    =
    I(\vr_{\X};g(\Z))
    +
    \KL(p_0(g(\Z))\|P_{\vu}).
\]
If \(P_{\vu}=P_{g(\Z)}\), the second term vanishes. If, in addition,
\(\vr_{\X}\) and \(g(\Z)\) are independent, the mutual information term also
vanishes.
\end{proof}

The finite-dimensional KL distortion bound may be vacuous for singular or
empirical implicit priors; guarantees based on Wasserstein or other integral
probability metrics are an important direction for future work.

\begin{proposition}[Prior reconstruction error]
\label{prop:prior_reconstruction_error}
Assume \(g\sim p_0\) has mean \(\mu\), covariance function \(\kappa\), and
\(\kappa(\Z,\Z)\succ0\). With
\(\Psib_{\Z}(\U)=\kappa(\U,\Z)\kappa(\Z,\Z)^{-1}\), the mean-squared residual
reconstruction error is
\[
    \E \left[
        \left\|
        g(\U)-\mu(\U)
        -
        \Psib_{\Z}(\U)\bigl(g(\Z)-\mu(\Z)\bigr)
        \right\|_2^2
    \right]
    =
    \tr \left(
        \kappa(\U,\U)
        -
        \kappa(\U,\Z)\kappa(\Z,\Z)^{-1}\kappa(\Z,\U)
    \right).
\]
\end{proposition}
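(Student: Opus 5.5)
Write the centered path \(\tilde g=g-\mu\) and set \(\Bmat=\Psib_{\Z}(\U)=\kappa(\U,\Z)\kappa(\Z,\Z)^{-1}\). The reconstruction residual is then the linear image
\[
\vr=\tilde g(\U)-\Bmat\,\tilde g(\Z)=\begin{pmatrix}\Imat & -\Bmat\end{pmatrix}\begin{pmatrix}\tilde g(\U)\\ \tilde g(\Z)\end{pmatrix}.
\]
The plan is to express the mean-squared norm as a trace of a second-moment matrix. Since \(\vr\) has mean zero, \(\E\|\vr\|_2^2=\tr\,\E[\vr\vr^\top]=\tr\Cov(\vr)\), and this is finite because \(g\) has finite second moments (the covariance function exists).

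The stacked vector \((\tilde g(\U),\tilde g(\Z))\) has covariance with blocks \(\kappa(\U,\U)\), \(\kappa(\U,\Z)\), \(\kappa(\Z,\U)\) and \(\kappa(\Z,\Z)\). Bilinearity therefore gives
\[
\Cov(\vr)=\kappa(\U,\U)-\Bmat\kappa(\Z,\U)-\kappa(\U,\Z)\Bmat^\top+\Bmat\kappa(\Z,\Z)\Bmat^\top .
\]
This is the same expansion as in the proof of Proposition~\ref{prop:second_order_preservation}.

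Next, substitute \(\Bmat\). Because \(\kappa(\Z,\Z)\succ0\) is invertible and symmetric, each of the three correction terms equals \(\kappa(\U,\Z)\kappa(\Z,\Z)^{-1}\kappa(\Z,\U)\). For the quadratic term this follows from \(\Bmat\kappa(\Z,\Z)\Bmat^\top=\kappa(\U,\Z)\kappa(\Z,\Z)^{-1}\kappa(\Z,\U)\). The covariance therefore collapses to the Schur complement \(\kappa(\U,\U)-\kappa(\U,\Z)\kappa(\Z,\Z)^{-1}\kappa(\Z,\U)\), and taking the trace gives the claim.

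The only real obstacle is bookkeeping. One must check that \(\kappa(\U,\Z)^\top=\kappa(\Z,\U)\) and that \(\kappa(\Z,\Z)^{-1}\) is symmetric, so that the two cross terms coincide. One must also confirm that no Gaussianity or independence is used: the argument requires only the first two moments. In the multioutput case the same computation goes through verbatim with stacked evaluations.
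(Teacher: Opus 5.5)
Your proposal is correct and follows essentially the same route as the paper's proof: center the path, note that the residual has zero mean so its expected squared norm equals \(\tr\Cov\) of the residual, expand the covariance bilinearly, and collapse the three correction terms to the Schur complement \(\kappa(\U,\U)-\kappa(\U,\Z)\kappa(\Z,\Z)^{-1}\kappa(\Z,\U)\). You also state explicitly two points the paper leaves implicit: the symmetry of \(\kappa(\Z,\Z)^{-1}\), and the fact that only second moments are used.
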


\begin{proof}
Let \(\tilde g(\U)=g(\U)-\mu(\U)\), \(\tilde g(\Z)=g(\Z)-\mu(\Z)\), and
\(\Bmat_{\U}=\kappa(\U,\Z)\kappa(\Z,\Z)^{-1}\). The residual is
\(\mathbf{e}_{\U}=\tilde g(\U)-\Bmat_{\U}\tilde g(\Z)\). Since \(\E[\mathbf{e}_{\U}]=\mathbf{0}\),
\(\E\|\mathbf{e}_{\U}\|_2^2=\tr(\Cov(\mathbf{e}_{\U}))\). Expanding the covariance gives
\[
\begin{aligned}
    \Cov(\mathbf{e}_{\U})
    &=
    \kappa(\U,\U)
    -
    \Bmat_{\U}\kappa(\Z,\U)
    -
    \kappa(\U,\Z)\Bmat_{\U}^{\top}
    +
    \Bmat_{\U}\kappa(\Z,\Z)\Bmat_{\U}^{\top}  \\
    &=
    \kappa(\U,\U)
    -
    \kappa(\U,\Z)\kappa(\Z,\Z)^{-1}\kappa(\Z,\U).
\end{aligned}
\]
Taking the trace proves the result.
\end{proof}

\begin{proposition}[Path-regularity inheritance]
\label{prop:path_regular_inheritance}
Assume \(g\) has almost surely continuous sample paths and
\(\vx\mapsto\Psib_{\Z}(\vx)\) is continuous. Then every \GMVIP sample
\[
    f(\vx)=g(\vx)+\Psib_{\Z}(\vx)\bigl[\vmu+\Dmat\va-g(\Z)\bigr]
\]
is almost surely continuous. The same statement holds for differentiability or
higher-order smoothness whenever \(g\) and \(\vx\mapsto\Psib_{\Z}(\vx)\) have the
corresponding regularity.
\end{proposition}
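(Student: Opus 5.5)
The plan is to treat the statement as a pathwise fact about compositions of continuous maps, working \(\omega\)-by-\(\omega\) on the underlying probability space.

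First, fix the null set. Let \(\Omega_0\) be the event of probability one on which \(\vx\mapsto g(\vx)\) is continuous. Outside it nothing is claimed, so every subsequent step is carried out on a fixed \(\omega\in\Omega_0\) with an arbitrary realization \(\va\). No condition on the law of \(\va\) is needed; the coefficient law \(\nu\) of Proposition~\ref{prop:process_validity} can be anything.

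Next, isolate the finite-dimensional correction. For fixed \(\omega\), the vector
\[
\vw=\vmu+\Dmat\va-g(\Z)\in\R^{d_a}
\]
is a constant. It does not depend on the query input \(\vx\), because \(g(\Z)\) is just a finite evaluation of the path. This is the key observation: all randomness entering the correction is frozen into a finite vector. As in the proof of Proposition~\ref{prop:process_validity}, the same pair \((g,\va)\) is used at every input.

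The sample then decomposes as
\[
f(\vx)=g(\vx)+\Psib_{\Z}(\vx)\vw .
\]
The second term is a continuous matrix-valued map (a row of size \(1\times d_a\), or a \(C\times d_a\) block in the multioutput case) applied to a constant vector. Each coordinate is therefore a finite linear combination \(\sum_j \Psib_{\Z}(\vx)_{ij}w_j\) of continuous functions, hence continuous. Adding the continuous path \(g\) gives continuity of \(f\) on \(\Omega_0\), that is, almost surely.

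The smoothness version follows by the same argument. Differentiability, or \(C^k\) regularity, is preserved under finite linear combinations with constant coefficients and under sums. On the full-probability event where \(g\) has the stated regularity, \(f\) inherits it, with \(\partial^\alpha f(\vx)=\partial^\alpha g(\vx)+\partial^\alpha\Psib_{\Z}(\vx)\vw\).

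The only delicate point, which I expect to be the main (mild) obstacle, is measure-theoretic. The process built through Kolmogorov extension in Proposition~\ref{prop:process_validity} is defined only through its finite-dimensional laws, and sample-path continuity is not a property of those laws. I would therefore state the result for the concrete pathwise version \(f=T(g,\va)\) constructed on the same space as \(g\) and \(\va\). This version has the finite-dimensional distributions of Proposition~\ref{prop:process_validity}, so it is a continuous (or \(C^k\)) modification of the GMVIP process. I would also note that for the empirical operator \(\Psib^{\mathrm{emp}}_{\Z}(\vx)=\hat\K(\vx,\Z)\hat\K(\Z,\Z)^{-1}\), the continuity hypothesis on \(\Psib_{\Z}\) holds automatically whenever the bank paths \(g^{(b)}\) are continuous. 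This is because \(\hat\K(\vx,\Z)\) is a finite combination of \(g^{(b)}(\vx)\) and \(\hat\vmu(\vx)\).
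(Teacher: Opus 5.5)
Your proof is correct and follows the paper's argument. Like the paper, you freeze the finite vector \(\vmu+\Dmat\va-g(\Z)\) for a fixed realization, write \(f\) as \(g\) plus a finite constant-coefficient combination of the columns of \(\Psib_{\Z}\), and conclude continuity or \(C^k\) regularity by closure under sums and finite linear combinations; your extra remarks, that continuity should be stated for the concrete pathwise version \(T(g,\va)\) rather than the Kolmogorov-extension process and that \(\Psib^{\mathrm{emp}}_{\Z}\) is automatically continuous when the bank paths are, are correct refinements the paper does not spell out.
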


\begin{proof}
For a fixed draw of \(\va\) and \(g(\Z)\), the vector
\(\mathbf{c}=\vmu+\Dmat\va-g(\Z)\) is finite-dimensional and constant in \(\vx\). Hence
\[
    f(\vx)=g(\vx)+\sum_{j=1}^{d_a} \Psib_{\Z,j}(\vx)c_j .
\]
A finite linear combination of continuous functions is continuous, and adding it
to the continuous path \(g\) preserves continuity. Here
\(\Psib_{\Z,j}(\vx)\) is the \(j\)-th column of the operator (a scalar in the
scalar-output case and a \(C\)-vector in the joint case). The same argument
applies to derivatives of any order for which both \(g\) and these columns of
\(\Psib_{\Z}\) possess the required regularity.
\end{proof}

\section{Optimization and Evaluation Procedures}
\label{app:training-evaluation}

Let \(\mathcal G=\{g^{(b)}\}_{b=1}^{B}\) denote the prior trajectory bank.
From this bank, we estimate the empirical mean \(\hat\vmu\) and covariance
\(\hat\K\). Assuming \(\hat\K(\Z,\Z)\succ0\), we set
\[
\Dmat\Dmat^\top
=
\hat\K(\Z,\Z),
\qquad
\Psib_{\Z}(\X)
=
\hat\K(\X,\Z)
\hat\K(\Z,\Z)^{-1}.
\]
The inducing values and posterior path are
\[
\vmu=\hat\vmu(\Z),
\qquad
\vu(\va)=\vmu+\Dmat\va,
\qquad
f(\X)
=
g(\X)+\Psib_{\Z}(\X)\bigl[\vu(\va)-g(\Z)\bigr].
\]
For vector-valued outputs, inducing evaluations and coefficients are
vectorized so that \(\Dmat\) and \(q_\phi(\va)\) retain the joint
cross-output covariance. In the electricity experiment,
\(\Z=\X_{\mathrm{obs}}\) contains the \(96\) observed locations and does
not extend into the held-out interval.

Algorithm~\ref{alg:gmvip-training} optimizes the coefficient posterior and
likelihood parameters using only the observed partition. After training, Algorithm~\ref{alg:gmvip-evaluation} draws coherent
posterior paths and computes metrics exclusively on the held-out interval.

When the prior parameters or inducing locations are optimized, we keep the
bank's base random draws fixed, reevaluate the bank under the current
parameters, and recompute the empirical moments and replacement operator
at each optimization step. Gradients are propagated through these
operations.

\begin{algorithm}[t]
\caption{\GMVIP training loop}
\label{alg:gmvip-training}
\begin{algorithmic}[1]
\Require Training data
\(\D=(\X,\Y)\), prior sampler \(p_0(g)\), operator
\((\Z,\vmu,\Dmat,\Psib_{\Z})\), iterations \(T_{\mathrm{opt}}\), samples
\(R_{\mathrm{tr}}\).
\Ensure Variational posterior \(q_\phi(\va)\) and likelihood parameters
\(\vpsi\)

\State Initialize
\(q_\phi(\va)=\N(\vm_a,\Smat_a)\) and \(\vpsi\)
\For{\(t=1,\ldots,T_{\mathrm{opt}}\)}
    \State Draw a minibatch
    \((\X_{\mathcal B},\Y_{\mathcal B})\)
    \For{\(r=1,\ldots,R_{\mathrm{tr}}\)}
        \State Draw \(\va^{(r)}\sim q_\phi(\va)\)
        \State Draw one coherent path \(g^{(r)}\sim p_0(g)\) at
        \(\X_{\mathcal B}\cup\Z\)
        \State
        \(\vu^{(r)}\gets\vmu+\Dmat\va^{(r)}\)
        \State
        \(\displaystyle
        \vf_{\mathcal B}^{(r)}
        \gets
        g^{(r)}(\X_{\mathcal B})
        +
        \Psib_{\Z}(\X_{\mathcal B})
        \bigl[\vu^{(r)}-g^{(r)}(\Z)\bigr]
        \)
    \EndFor
    \State
    \(\displaystyle
    \widehat{\mathcal L}_t
    \gets
    \frac{N}{|\mathcal B|R_{\mathrm{tr}}}
    \sum_{r=1}^{R_{\mathrm{tr}}}
    \log p_{\vpsi}
        \bigl(\Y_{\mathcal B}\mid \vf_{\mathcal B}^{(r)}\bigr)
    -
    \KL\!\left(
    q_\phi(\va)\,\|\,\N(\mathbf{0},\Imat_{d_a})
    \right)
    \)
    \State Update \((\phi,\vpsi)\) with an Adam step on
    \(-\widehat{\mathcal L}_t\)
\EndFor
\State \Return \(q_\phi(\va)\) and \(\vpsi\)
\end{algorithmic}
\end{algorithm}

\begin{algorithm}[t]
\caption{\GMVIP evaluation loop}
\label{alg:gmvip-evaluation}
\begin{algorithmic}[1]
\Require Fitted \(q_\phi(\va)\), prior sampler \(p_0(g)\), operator
\((\Z,\vmu,\Dmat,\Psib_{\Z})\), test inputs \(\X_*\), targets \(\Y_*\),
likelihood parameters \(\vpsi\), and samples \(R_{\mathrm{ev}}\)
\Ensure Posterior samples and target-level metrics

\State Disable gradient computation
\For{\(r=1,\ldots,R_{\mathrm{ev}}\)}
    \State Draw \(\va^{(r)}\sim q_\phi(\va)\)
    \State Draw one coherent path \(g^{(r)}\sim p_0(g)\) at
    \(\X_*\cup\Z\)
    \State \(\vu^{(r)}\gets\vmu+\Dmat\va^{(r)}\)
    \State
    \(\displaystyle
    \vf_*^{(r)}
    \gets
    g^{(r)}(\X_*)
    +
    \Psib_{\Z}(\X_*)
    \bigl[\vu^{(r)}-g^{(r)}(\Z)\bigr]
    \)
\EndFor
\State Compute
\(\bar{\vf}_*=
R_{\mathrm{ev}}^{-1}\sum_{r=1}^{R_{\mathrm{ev}}} \vf_*^{(r)}\)
and the empirical predictive quantiles
\State Compute RMSE, NLL, CRPS, CQM, and coverage on
\((\X_*,\Y_*)\)
\State Store the posterior samples and target-level metrics
\State \Return
\(\{\vf_*^{(r)}\}_{r=1}^{R_{\mathrm{ev}}}\) and the metrics
\end{algorithmic}
\end{algorithm}

For sample-based \(C\)-output regression with a diagonal Gaussian likelihood,
the electricity and Lotka--Volterra NLL is computed from the
equal-weight Gaussian mixture
\[
-\frac{1}{N_*C}
\sum_{i=1}^{N_*}
\sum_{c=1}^{C}
\log\left[
\frac{1}{R_{\mathrm{ev}}}
\sum_{r=1}^{R_{\mathrm{ev}}}
\N\!\left(
y_{*,ic};
\bigl[\vf_*^{(r)}\bigr]_{ic},
\sigma_{y,c}^2
\right)
\right].
\]
The mixture is evaluated with a stable log-sum-exp calculation. In the
Lotka--Volterra and electricity experiments,
\(R_{\mathrm{ev}}=1024\). For VIP, FTIP, and GMVIP,
\(\sigma_{y,c}\) is learned for each output channel and initialized at the
nominal observation noise. Training-free prior baselines and analytic Gaussian
baselines retain the fixed nominal likelihood.
The UCI and large-regression experiments evaluate each method's predictive
density; for latent-sample methods this has the same mixture form above.
CRPS is computed from the empirical posterior samples, and coverage uses
their central predictive intervals. For the scalar benchmarks, let \(u_i\)
denote the PIT values and define
\(\gamma(\ell)=N_*^{-1}\sum_i
\mathbf{1}\{|2u_i-1|\leq\ell\}\). The regression benchmarks use
\[
\operatorname{CQM}
=
\int_0^1 |\gamma(\ell)-\ell|\,d\ell,
\]
evaluated by trapezoidal quadrature on \(200\) equally spaced levels. The
electricity experiment instead uses its predefined nine-level quadrature,
\[
\operatorname{CQM}
=
\frac{1}{9}
\sum_{\ell\in\{0.1,\ldots,0.9\}}
\left|\hat c_\ell-\ell\right|.
\]
Here \(\hat c_\ell\) is the empirical coverage of the central predictive
interval with nominal level \(\ell\).

Electricity metrics are computed independently for each of the \(75\)
held-out windows and summarized by the mean and sample standard errors. Lotka--Volterra metrics are evaluated only on the
held-out test interval and summarized by the mean and standard error across
the \(20\) target trajectories.

\section{Experimental Details}\label{app:details}

\subsection{Synthetic Regression Experiment}
\label{app:synthetic-valla}

\begin{figure}[t]
    \centering
    \includegraphics[width=0.97\linewidth]{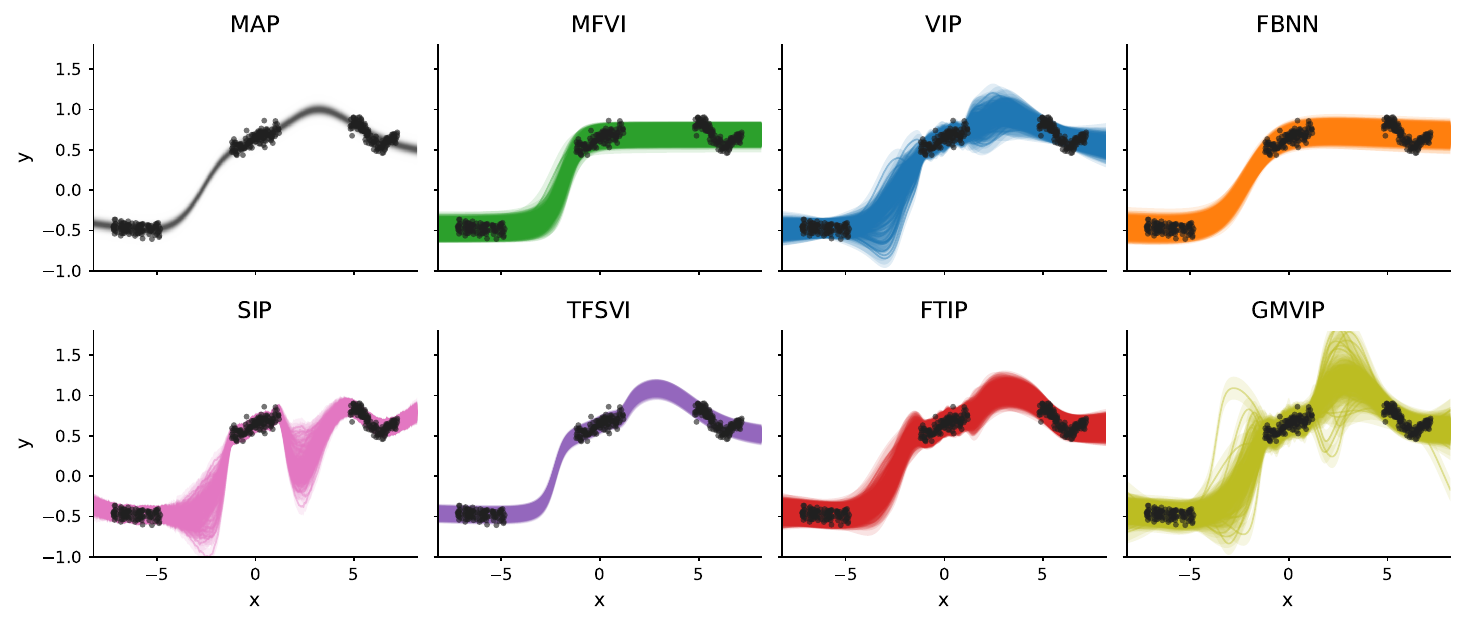}
    \caption{Synthetic one-dimensional regression.
    Black points denote observed data. Each panel shows the predictive behavior
    of one method on the same input grid. MAP gives a deterministic fit, while
    the Bayesian and function-space methods represent posterior uncertainty
    through predictive samples with learned Gaussian noise.}
    \label{fig:synthetic}
\end{figure}

We adapt the synthetic regression dataset introduced by
\citet{izmailov2020subspace} and subsequently used by
\citet{ortega2024variational}. The fixed realization contains \(N=400\)
scalar observations
\[
\mathcal{D}=\{(x_i,y_i)\}_{i=1}^{N},
\qquad x_i,y_i\in\R.
\]
The inputs lie approximately in \([-7.20,7.18]\), and the outputs in
\([-0.61,0.90]\). All observed points are used for training, and posterior
predictions are evaluated on a dense one-dimensional grid.

\paragraph{Implicit Process Prior.}
The implicit process prior is induced by a Bayesian neural network. Sampling
weights and biases \(\vw\sim p(\vw)\) and evaluating the network gives
\[
f_{\vw}(x)=\operatorname{BNN}_{\vw}(x), \qquad f\sim p(f).
\]
The BNN has two hidden layers of width \((10,10)\), \(\tanh\) activations, and a
scalar output. In the frozen-prior setting, Bayesian linear layers use zero-mean
Gaussian weights and biases with unit standard deviation,
\[
\vw_\ell \sim \N\!\left(\mathbf{0},\Imat_{\dim(\vw_\ell)}\right),
\qquad
\vb_\ell \sim \N\!\left(\mathbf{0},\Imat_{\dim(\vb_\ell)}\right).
\]

\paragraph{Learning Problem.}
We use Gaussian regression,
\[
y_i = f(x_i)+\varepsilon_i,
\qquad
\varepsilon_i\sim\N(0,\sigma_y^2),
\]
and infer the posterior predictive distribution
\[
p(f(x_\star)\mid\mathcal{D})
\]
on the plotting grid.

\paragraph{Methods and Visualization.}
Figure~\ref{fig:synthetic} compares MAP, MFVI, VIP, FBNN, SIP, TFSVI,
FTIP, and GMVIP on the same grid. Black points are observations. MAP gives a
deterministic curve, while the other methods show posterior uncertainty through
predictive samples or bands. VIP learns a variational distribution over finite
regression coefficients using samples from the BNN-induced prior; GMVIP uses the
same function prior but constructs a generalized Matheron posterior through
inducing evaluations.

\subsection{UCI Regression Experiments}
\label{app:uci-regression}

We evaluate on nine scalar UCI regression datasets:
\[
\{\text{Boston},\text{Concrete},\text{Energy},\text{Kin8nm},\text{Naval},
\text{Power},\text{Protein},\text{Wine Red},\text{Yacht}\}.
\]
For each seed, we use a random \(90/10\) train/test split. Inputs are
standardized using training-split statistics. Targets are normalized during
training and all reported metrics are transformed back to the original target
scale. Naval RMSE and CRPS are reported in units of \(10^{-4}\).

\paragraph{Sweep design.}
We run five seeds, \(\{0,1,2,3,4\}\), for each dataset and method variant. The reported variants are MAP, MFVI, FBNN, TFSVI, VIP, FTIP, GMVIP, and SIP. We use \(T_{\mathrm{opt}}=30\,000\) iterations for Boston, Concrete,
Energy, and Protein, and \(T_{\mathrm{opt}}=60\,000\) iterations for Kin8nm, Naval, Power, Wine
Red, and Yacht.

\paragraph{Shared training configuration.}
All runs use batch size \(100\) and Adam with initial learning rate \(10^{-3}\).
A cosine schedule anneals the learning rate to \(10^{-5}\) over the training
budget. The shared network architecture is a Bayesian
MLP with two hidden layers of width \((10,10)\), \(\tanh\) activations, scalar
output.

\paragraph{Likelihood and predictive distribution.}
All probabilistic methods use the Gaussian regression likelihood
\[
y_i = f(x_i) + \varepsilon_i,\qquad
\varepsilon_i\sim\N(0,\sigma_y^2),
\]
with a learned observation-noise parameter where applicable. Predictive
distributions are evaluated from the method-specific posterior predictive
samples or Gaussian predictive moments.

\paragraph{Method-specific settings.}
MAP is trained as a deterministic neural-network baseline with an \(L_2\)
penalty. MFVI uses \(R_{\mathrm{tr}}=512\) Monte Carlo samples during training
and \(R_{\mathrm{ev}}=512\) evaluation samples. FBNN uses a BNN functional
prior, freezes the prior parameters, uses \(512\) regression coefficients, and
evaluates with \(R_{\mathrm{ev}}=512\) samples. TFSVI uses
\(R_{\mathrm{tr}}=512\) training samples and \(R_{\mathrm{ev}}=512\)
evaluation samples.

VIP uses \(S=20\) sampled-function basis coefficients. FTIP uses the same
\(S=20\) basis, a \texttt{spline\_1x1} normalizing flow,
\(R_{\mathrm{tr}}=512\) flow samples during training,
\(R_{\mathrm{ev}}=512\) evaluation samples, and no VIP warm start, so the optimizer
budget is not inflated by pretraining.

SIP uses \(M=100\) inducing inputs initialized by \(k\)-means, learns the inducing
locations, and uses \(B=512\) prior samples, \(R_{\mathrm{tr}}=512\) training
samples, and \(R_{\mathrm{ev}}=512\) evaluation samples.
GMVIP uses the empirical Matheron operator, a Gaussian inducing-coefficient
posterior, a tunable BNN prior, \(M=100\) inducing inputs initialized by
\(k\)-means, learned inducing locations, \(B=512\) operator-bank samples,
\(R_{\mathrm{tr}}=512\) training samples, \(R_{\mathrm{ev}}=512\) evaluation
samples, prior-sample mean mode, and prior-Cholesky inducing scale.

\paragraph{Metrics.}
Tables~\ref{tab:uci-all-metrics-gmvip-wandb}and\ref{tab:uci-all-metrics-gmvip-wandb2} reports RMSE, negative log-likelihood (NLL),
continuous ranked probability score (CRPS), and centered quantile metric (CQM).
All metrics are computed on the held-out test split after undoing target
normalization. NLL evaluates the predictive density at the observed targets. CRPS measures the
quality of the full one-dimensional predictive distribution using posterior
predictive samples. CQM is computed from probability integral transform values
and measures centered predictive-interval calibration error. Lower is better for
all reported metrics.

\subsection{Lotka--Volterra Implicit-Prior Experiment}
\label{app:lv-implicit-prior}

We define an implicit process prior through the Lotka--Volterra predator--prey system. For parameters
\[
\veta = (\alpha,\beta,\delta,\gamma,u_0,v_0),
\]
the physical-time state
\[
\vs_{\veta}(t) = \bigl(u_{\veta}(t), v_{\veta}(t)\bigr)
\]
is the solution of
\[
\frac{du}{dt} = \alpha u - \beta u v,
\]
\[
\frac{dv}{dt} = \delta u v - \gamma v,
\]
with initial condition
\[
u(0)=u_0,\qquad v(0)=v_0.
\]
Here \(u(t)\) denotes prey population and \(v(t)\) denotes predator population.
We use the normalized regression coordinate \(x=2t/30-1\), with inverse
\(t(x)=15(x+1)\), and define
\(f_{\veta}(x)=\vs_{\veta}(t(x))\). The implicit prior is the push-forward of
a parameter distribution through the ODE solver:
\[
\veta \sim p(\veta),\qquad
\vs_{\veta}=\operatorname{ODESolve}(\veta),\qquad
f_{\veta}(x)=\vs_{\veta}(t(x)),
\]
and therefore
\[
f_{\veta} \sim p(f).
\]
In the experiment we use
\[
\alpha \sim \operatorname{LogNormal}(\log 1.5, 0.15^2),
\qquad
\beta \sim \operatorname{LogNormal}(\log 1.0, 0.15^2),
\]
\[
\delta \sim \operatorname{LogNormal}(\log 0.75, 0.15^2),
\qquad
\gamma \sim \operatorname{LogNormal}(\log 1.0, 0.15^2),
\]
and
\[
u_0,v_0 \sim \operatorname{Uniform}(0.8,1.2).
\]

The inference methods do not receive an explicit variational distribution over
\(\veta\). The prior is used as a black-box function sampler: one may draw
\[
f^{(s)} \sim p(f)
\]
and evaluate \(f^{(s)}(x)\) at requested inputs, but the simulator parameters are not optimized.

\paragraph{Learning Problem}

Each target trajectory is generated by sampling a held-out parameter vector
\(
\veta^\star \sim p(\veta)
\)
and solving the ODE on
\(
t \in [0,30]
\).
The regression input is the normalized time defined above, so that
\(x \in [-1,1]\). The output is vector-valued:
\(
f(x) = \bigl(u(t), v(t)\bigr) \in \R^2
\).

Training observations are noisy samples from the target trajectory:
\[
\vy_i = f_{\veta^\star}(x_i) + \veps_i,
\qquad
\veps_i \sim \N(\mathbf{0},\boldsymbol{\Sigma}_y),
\]
using \(N_{\mathrm{obs}}=80\) training times restricted to \(t_i\le 15\).
For output channel \(c\), the physical observation-noise standard deviation is
\(0.03\) times the standard deviation of its clean training values; the
likelihood uses this value after target normalization. The interval
\(15<t\leq20\) is deliberately unused: it supplies neither observations,
validation scores, nor checkpoint selection. Test points are clean trajectory
values on \(t\in(20,30]\). Thus the task is sparse forecasting from early
predator--prey observations rather than interpolation over the full
trajectory. The reported aggregate uses 20 deterministic held-out targets.
Outputs are normalized separately for each target using the noisy
training observations. For each trainable method, the normalized Gaussian
observation-noise scale is initialized from the known simulator noise and
learned separately for prey and predator. The training-free prior predictive
and GMVIP surrogate-prior controls use the known simulator noise.
\paragraph{Methods}

This experiment is designed for implicit-process inference. The object we want to use as a prior is the simulator-induced distribution
\(
p(f)
\),
not a neural-network generator with trainable weights. Standard MAP or mean-field variational inference baselines require introducing an explicit parametric generator and fitting a distribution over its parameters. That changes the problem: inference is then performed over the surrogate generator rather than directly with the ODE-induced prior.

VIP, FTIP, and GMVIP are appropriate methods in this setting because they require only samples from
\(
f \sim p(f)
\).
VIP constructs its approximation from a finite collection of sampled prior functions. FTIP starts from a VIP approximation and then applies a learnable transport map in function-sample space. GMVIP also uses simulator samples, but forms a generalized Matheron posterior representation through inducing evaluations.

These are the headline configurations; Appendix~\ref{app:lv-basis-sweep}
reports sensitivity to the sampled-function basis size \(S\) for VIP and
FTIP.

In the reported setting, the standalone VIP baseline uses
\(
S = 20
\)
sampled prior functions and is trained for \(T_{\mathrm{opt}}=800\)
optimization steps. FTIP is warm-started from a VIP model using
\(
S = 20
\)
sampled prior functions, then fine-tuned with a coupling affine flow using
\(R_{\mathrm{tr}}=8\) Monte Carlo samples per step for
\(T_{\mathrm{opt}}=400\) additional optimization steps. Appendix~\ref{app:lv-basis-sweep} reports sensitivity to
\(S\in\{20,64,128,256\}\) under the same forecasting protocol. GMVIP estimates its empirical operator from
\(B=512\) simulator trajectories, uses
\(M=96\) shared inducing times, and is trained for
\(T_{\mathrm{opt}}=800\) optimization steps with
\(R_{\mathrm{tr}}=4\). All trained models are evaluated at the exact final
scheduled iterate: no validation loss, early stopping, or checkpoint
selection is used. All methods use \(R_{\mathrm{ev}}=1024\) posterior samples
for evaluation, including the equal-weight Gaussian-mixture NLL and the
reported trajectory plot.

\paragraph{Joint multioutput parameterization.}
GMVIP uses \(M=96\) shared inducing times, uniformly spaced over
\(t\in[0,30]\). These are input locations only. Prey and predator
inducing values are stacked into a vector of dimension
\(2M=192\). The empirical inducing covariance
\(\hat\K(\Z,\Z)\) is a full \(192\times192\) matrix containing
prey--prey, predator--predator, and prey--predator covariance blocks.
The variational posterior over the corresponding whitened inducing
coefficients also uses a full \(192\times192\) covariance. Thus GMVIP
explicitly models posterior dependence across both time and the two state
variables in this experiment.

\paragraph{ODE residual.}
The ODE residual is evaluated in physical state and time units on the held-out
interval \(t\in(20,30]\). It measures whether each posterior trajectory is
consistent with \emph{some} Lotka--Volterra parameter vector. It does not
use the parameters that generated the target trajectory. For each of up
to \(64\) posterior trajectories
\[
f^{(s)}(t_i)=\bigl(u_i^{(s)},v_i^{(s)}\bigr),
\]
we estimate a separate parameter vector
\[
\widehat{\boldsymbol{\lambda}}^{(s)}
=
\left(
\widehat\alpha^{(s)},
\widehat\beta^{(s)},
\widehat\delta^{(s)},
\widehat\gamma^{(s)}
\right)
\]
by unconstrained least squares. Derivatives are approximated at the interior test-grid points using
centered finite differences:
\[
\dot u_i^{(s)}
=
\frac{u_{i+1}^{(s)}-u_{i-1}^{(s)}}
     {t_{i+1}-t_{i-1}},
\qquad
\dot v_i^{(s)}
=
\frac{v_{i+1}^{(s)}-v_{i-1}^{(s)}}
     {t_{i+1}-t_{i-1}}.
\]
For the regression and right-hand-side evaluation, states are clipped as
\[
u_i^+=\max(u_i^{(s)},10^{-8}),
\qquad
v_i^+=\max(v_i^{(s)},10^{-8}).
\]
The prey and predator parameters are then fitted independently:
\[
(\widehat\alpha^{(s)},\widehat\beta^{(s)})
=
\arg\min_{\alpha,\beta}
\sum_{i=2}^{N_t-1}
\left[
\dot u_i^{(s)}
-\left(\alpha u_i^+-\beta u_i^+v_i^+\right)
\right]^2,
\]
\[
(\widehat\delta^{(s)},\widehat\gamma^{(s)})
=
\arg\min_{\delta,\gamma}
\sum_{i=2}^{N_t-1}
\left[
\dot v_i^{(s)}
-\left(\delta u_i^+v_i^+-\gamma v_i^+\right)
\right]^2.
\]
No positivity constraints are imposed on the fitted parameters. For posterior trajectory \(s\), define
\[
\mathbf{F}_i^{(s)}
=
\begin{bmatrix}
\widehat\alpha^{(s)}u_i^+
-\widehat\beta^{(s)}u_i^+v_i^+\\
\widehat\delta^{(s)}u_i^+v_i^+
-\widehat\gamma^{(s)}v_i^+
\end{bmatrix},
\qquad
\dot{\mathbf{f}}_i^{(s)}
=
\begin{bmatrix}
\dot u_i^{(s)}\\
\dot v_i^{(s)}
\end{bmatrix}.
\]
Its normalized residual is
\[
\mathcal R^{(s)}
=
\frac{
\displaystyle
\frac{1}{2(N_t-2)}
\sum_{i=2}^{N_t-1}
\left\|
\dot{\mathbf{f}}_i^{(s)}-\mathbf{F}_i^{(s)}
\right\|_2^2
}{
\displaystyle
\max\left\{
\frac{1}{2(N_t-2)}
\sum_{i=2}^{N_t-1}
\left\|\dot{\mathbf{f}}_i^{(s)}\right\|_2^2,
10^{-12}
\right\}
}.
\]
The numerator and denominator therefore average over both prey and
predator and over all interior test-grid points. The normalization makes
the reported residual dimensionless. Finally, the per-target ODE
residual is
\[
E_{\mathrm{ODE}}
=
\frac{1}{R_{\mathrm{ev}}'}
\sum_{s=1}^{R_{\mathrm{ev}}'}\mathcal R^{(s)},
\qquad
R_{\mathrm{ev}}'=\min(R_{\mathrm{ev}},64).
\]
Although the input trajectories and derivatives are evaluated in
physical units, only \(E_{\mathrm{ODE}}\), the normalized residual
ratio, is reported.

\subsection{Electricity-Load Empirical-Prior Forecasting Experiment}
\label{app:eld-forecasting}

We define a real-data empirical process prior using the
Electricity Load  dataset~\citep{trindade2015electricityloaddiagrams}.
The dataset contains electricity-consumption trajectories sampled every
\(15\) minutes. Let
\[
y_{c,\tau}(t_j), \qquad j=0,\ldots,191,
\]
denote the \(48\)-hour load window for customer \(c\), starting at calendar time
\(\tau\), with
\[
t_j = 0.25j \quad \text{hours}.
\]
For each target window, the first \(24\) hours are observed and the next
\(24\) hours are held out for forecasting. Thus each window has
\[
N_t=192,\qquad N_{\mathrm{obs}}=96,\qquad N_{\mathrm{test}}=96
\]
quarter-hourly points.

Unlike the ODE experiments, the prior is not generated by a known simulator.
Instead, the prior is an empirical distribution over historical trajectories.
For a target window \(y^\star\), we construct a prior bank
\[
\mathcal{G}(y^\star)
=
\{f^{(1)},\ldots,f^{(B)}\},
\qquad B=2048,
\]
from historical windows in the years \(2011\)--\(2013\). Test targets are sampled
from \(2014\), so the prior bank and test targets are separated by calendar year.

All windows are normalized using statistics from their observed prefix. For a
window \(y\), define
\[
\mu_y = \frac{1}{N_{\mathrm{obs}}}\sum_{j=0}^{N_{\mathrm{obs}}-1} y(t_j),
\qquad
s_y =
\max\left\{
\operatorname{std}\bigl(y(t_0),\ldots,y(t_{N_{\mathrm{obs}}-1})\bigr),
10^{-3}
\right\}.
\]
The normalized trajectory is
\[
\tilde y(t_j)=\frac{y(t_j)-\mu_y}{s_y}.
\]
Each historical prior trajectory is normalized by its own observed prefix
statistics before being added to the bank. The bank is selected by a calendar-and-prefix nearest-neighbor rule. Candidate
historical windows must come from prior years, match the target's
weekday/weekend status, and have month in
\[
\{m^\star-1,m^\star,m^\star+1\},
\]
with months interpreted cyclically. Among these candidates, we compute the
normalized prefix distance
\[
d(y,y^\star)
=
\frac{1}{N_{\mathrm{obs}}}
\sum_{j=0}^{N_{\mathrm{obs}}-1}
\left(\tilde y(t_j)-\tilde y^\star(t_j)\right)^2,
\]
and select the \(B=2048\) nearest historical windows. The empirical prior is
therefore
\[
f \sim p_{\mathrm{emp}}(f\mid y^\star),
\qquad
p_{\mathrm{emp}}(f\mid y^\star)
=
\frac{1}{B}\sum_{b=1}^B \delta_{f^{(b)}}(f).
\]
This is an implicit prior: the methods can draw trajectories from the historical
bank and evaluate them at requested time points, but no tractable density over
functions or latent calendar factors is provided.

\paragraph{Learning Problem}

For each target, the regression input is normalized time
\[
x_j = 2t_j/48 - 1,
\qquad x_j\in[-1,1].
\]
The output is scalar normalized load,
\[
f(x_j)=\tilde y^\star(t_j)\in\R.
\]
Training observations are the first \(24\) hours of the target window:
\[
\mathcal{D}_{\mathrm{obs}}
=
\{(x_j,\tilde y^\star(t_j)):j=0,\ldots,95\}.
\]
The held-out forecasting grid is
\[
t_j\in[24,48),\qquad j=96,\ldots,191.
\]
Metrics are computed after transforming predictions back to the original load
scale using the target-window prefix statistics \((\mu_{y^\star},s_{y^\star})\).

We use a Gaussian observation model in normalized units,
\[
y_i = f(x_i)+\varepsilon_i,
\qquad
\varepsilon_i\sim\N(0,\sigma_y^2),
\]
with
\[
\sigma_y^{(0)} = 0.05.
\]
For VIP, FTIP, and GMVIP, the scalar normalized observation-noise scale is
initialized at \(0.05\) and optimized jointly with the variational
parameters. The Analog prior and Empirical Gaussian baselines retain the
fixed nominal value \(0.05\). In physical units, a normalized scale
\(\sigma_y\) corresponds to \(\sigma_y s_{y^\star}\) for each target window.

We run three target-selection seeds,
\(
\{0,1,2\}
\),
with \(25\) held-out target windows per seed, for \(75\) total target windows.
All targets are sampled from \(2014\), while all empirical-prior trajectories are
drawn from \(2011\)--\(2013\).

\paragraph{Methods}

This experiment evaluates whether the same function-space inference mechanisms
used for simulator priors also apply to a real empirical trajectory prior. The
prior is the historical empirical process \(p_{\mathrm{emp}}(f\mid y^\star)\);
the methods are not given a parametric model for electricity demand and do not
optimize customer-specific latent calendar variables.

The analog baseline directly uses samples from the empirical prior bank without
posterior adaptation. VIP constructs a coefficient posterior from
\(S=20\) sampled prior functions. FTIP uses the same \(S=20\) basis and
adds a learned transport map in coefficient space. GMVIP uses the empirical
Matheron operator with a Gaussian inducing-coefficient posterior.

In the reported setting, VIP and FTIP use \(S=20\) sampled-function basis
coefficients. GMVIP uses \(B=2048\) operator-bank trajectories and
\(M=96\) inducing locations, with \(\Z=\X_{\mathrm{obs}}\) corresponding to
the observed \(24\)-hour input grid. All learned methods are trained for
\(T_{\mathrm{opt}}=500\) optimization steps, with \(R_{\mathrm{tr}}=8\)
Monte Carlo samples per training step. Observation noise is learned in every
trainable method. Runs are evaluated at the final scheduled iterate, with no
validation loss, early stopping, or checkpoint selection. All methods use
\(R_{\mathrm{ev}}=1024\) posterior samples for final metrics and plots.

\paragraph{Empirical Gaussian baseline.}
For each target window, the Empirical Gaussian baseline uses the same
\(B=2048\) historical trajectories retrieved for GMVIP. Each historical trajectory is
standardized using the mean and standard deviation of its own 96-point
observed prefix. The target trajectory is analogously standardized using
its observed prefix. All conditioning is performed in these normalized
coordinates.

Let \(\vh^{(b)}\in\R^{192}\), \(b=1,\ldots,B\), denote the retrieved
historical trajectories. We estimate the full-grid mean and unbiased
empirical covariance as
\[
\hat\vmu
=
\frac{1}{B}\sum_{b=1}^{B}\vh^{(b)},
\qquad
\hat\K
=
\frac{1}{B-1}
\sum_{b=1}^{B}
\left(\vh^{(b)}-\hat\vmu\right)
\left(\vh^{(b)}-\hat\vmu\right)^\top.
\]
Thus, \(\hat\K\in\R^{192\times192}\) contains correlations
between all observed and forecast time points.

Partition the grid into the observed prefix \(o=\{1,\ldots,96\}\) and
the held-out forecast interval \(*=\{97,\ldots,192\}\). Under the
Gaussian observation model
\[
\vy_o=\vf_o+\veps,
\qquad
\veps\sim\N(\mathbf{0},\sigma_y^2\Imat_{96}),
\]
the conditional latent distribution is
\[
p(\vf_*\mid\vy_o)
=
\N(\vm_*,\Cmat_*),
\]
where
\[
\vm_*
=
\hat\vmu_*
+
\hat\K_{*o}
\left(
\hat\K_{oo}
+
\sigma_y^2\Imat_{96}
\right)^{-1}
\left(\vy_o-\hat\vmu_o\right),
\]
and
\[
\Cmat_*
=
\hat\K_{**}
-
\hat\K_{*o}
\left(
\hat\K_{oo}
+
\sigma_y^2\Imat_{96}
\right)^{-1}
\hat\K_{o*}.
\]
Here \(\Cmat_*\) is the conditional covariance of the latent forecast. The baseline is analytic and uses no optimization
steps.

\paragraph{Metrics}

We report RMSE, negative log-likelihood (NLL), continuous ranked probability
score (CRPS), and centered quantile metric (CQM) on the held-out forecasting
region \(t\in[24,48)\). All metrics are computed in the original load units after
undoing the target normalization. The main paper reports the mean and sample
standard errors across the \(75\) seed--target
windows.

\paragraph{Results Summary}

As shown in Table~\ref{tab:eld-results-median}, we report medians because
RMSE exhibits substantial variation across forecasting windows. Electricity
clients have markedly different load scales, and a few high-load or unusually
difficult windows produce extreme absolute errors that disproportionately
influence the mean. The median therefore better represents performance on a
typical task, while the bootstrap confidence intervals quantify uncertainty
across windows. GMVIP provides the strongest typical point and distributional
forecasting performance, although its confidence intervals overlap those of
the strongest baselines. Calibration remains dependent on the metric and
nominal coverage level.

\begin{table*}[t]
\centering
\caption{Electricity-load results across \(75\) seed--target windows.
Entries are medians with \(95\%\) percentile-bootstrap confidence
intervals from \(100{,}000\) resamples. Coverage is reported in
percentage points and ranked by proximity to its nominal level.
Lower is better otherwise.}
\label{tab:eld-results-median}
\small
\resizebox{\textwidth}{!}{
\begin{tabular}{lcccccc}
\toprule
Method
& RMSE \(\downarrow\)
& NLL \(\downarrow\)
& CRPS \(\downarrow\)
& CQM \(\downarrow\)
& Cov.\ 80\%
& Cov.\ 90\% \\
\midrule
Analog prior
& \(17.46\ [12.80,25.41]\)
& \(\second{4.30\ [4.06,4.70]}\)
& \(8.92\ [6.49,13.78]\)
& \(\best{0.093\ [0.077,0.112]}\)
& \(\second{78.13\ [73.96,84.38]}\)
& \(\best{91.67\ [88.54,92.71]}\) \\

VIP
& \(22.94\ [14.13,27.18]\)
& \(10.05\ [8.60,11.64]\)
& \(15.41\ [9.70,18.26]\)
& \(0.384\ [0.365,0.396]\)
& \(20.83\ [17.71,22.92]\)
& \(25.00\ [21.88,29.17]\) \\

FTIP
& \(21.20\ [15.75,28.02]\)
& \(4.88\ [4.27,5.37]\)
& \(12.35\ [8.81,17.57]\)
& \(0.153\ [0.132,0.174]\)
& \(57.29\ [54.17,61.46]\)
& \(68.75\ [66.67,73.96]\) \\

Empirical Gaussian
& \(\second{16.99\ [11.14,25.10]}\)
& \(4.36\ [4.02,5.06]\)
& \(\second{8.39\ [6.06,14.05]}\)
& \(0.108\ [0.070,0.151]\)
& \(84.38\ [80.21,87.50]\)
& \(\best{91.67\ [89.58,93.75]}\) \\

GMVIP
& \(\best{16.20\ [11.14,24.85]}\)
& \(\best{4.08\ [3.79,4.54]}\)
& \(\best{8.14\ [5.93,13.52]}\)
& \(\second{0.102\ [0.072,0.112]}\)
& \(\best{80.21\ [77.08,84.38]}\)
& \(\second{92.71\ [89.58,93.75]}\) \\
\bottomrule
\end{tabular}}
\end{table*}

\subsection{Large Regression Experiments}
\label{app:large-regression}

We evaluate on two large scalar regression datasets:
\[
\{\text{Year},\text{Airline}\}.
\]
The Year dataset uses the standard fixed split with the first \(463\, 715\)
examples for training and the remaining examples for testing. The Airline
dataset uses the first \(800\, 000\) examples, with the first \(700\, 000\) for
training and the remaining \(100\, 000\) for testing. Inputs are standardized
using training-split statistics. Targets are normalized during training and all
reported metrics are transformed back to the original target scale.

\paragraph{Sweep design.}
We run five seeds, \(\{0,1,2,3,4\}\), for each dataset and method variant. The
reported variants are MFVI, FBNN, TFSVI, VIP, FTIP, GMVIP, and SIP. MAP and the
fixed-prior variants of VIP, FTIP, GMVIP, and SIP are omitted from
Table~\ref{tab:large-regression-all-metrics}. We use
\(T_{\mathrm{opt}}=60\,000\) optimization iterations for both Year and Airline.

\paragraph{Shared training configuration.}
All runs use batch size \(100\) and Adam with initial learning rate \(10^{-3}\).
A cosine schedule anneals the learning rate to \(10^{-5}\) over the training
budget. The shared model class is a Bayesian MLP with \(\tanh\)
activations, scalar output. Year uses two
hidden layers of width \((50,50)\), while Airline uses two hidden layers of
width \((100,100)\).

\paragraph{Likelihood and predictive distribution.}
All probabilistic methods use the Gaussian regression likelihood
\[
y_i = f(x_i) + \varepsilon_i,\qquad
\varepsilon_i\sim\N(0,\sigma_y^2),
\]
with a learned observation-noise parameter where applicable. Predictive
distributions are evaluated from the method-specific posterior predictive
samples or Gaussian predictive moments.

\paragraph{Method-specific settings.}
MFVI uses \(R_{\mathrm{tr}}=512\) Monte Carlo samples during training and
\(R_{\mathrm{ev}}=512\) evaluation samples. FBNN uses a BNN functional prior,
freezes the prior parameters and
evaluates with \(R_{\mathrm{ev}}=512\) samples. TFSVI uses
\(R_{\mathrm{tr}}=512\) training samples and \(R_{\mathrm{ev}}=512\)
evaluation samples.

VIP uses \(S=20\) sampled-function basis coefficients and learns the prior
parameters. FTIP uses the same \(S=20\) basis, a
\texttt{spline\_1x1} normalizing flow, \(R_{\mathrm{tr}}=512\) flow samples
during training, \(R_{\mathrm{ev}}=512\) evaluation samples.

SIP uses \(M=100\) inducing inputs initialized by \(k\)-means, learns the inducing
locations, and uses \(B=512\) prior samples, \(R_{\mathrm{tr}}=512\) training
samples, and \(R_{\mathrm{ev}}=512\) evaluation samples. GMVIP uses the empirical
Matheron operator, a Gaussian inducing-coefficient posterior, a tunable BNN
prior, \(M=100\) inducing inputs initialized by \(k\)-means, learned inducing
locations, \(B=512\) operator-bank samples, \(R_{\mathrm{tr}}=512\) training
samples, \(R_{\mathrm{ev}}=512\) evaluation samples, prior-sample mean mode,
and prior-Cholesky inducing scale.

\paragraph{Metrics.}
Table~\ref{tab:large-regression-all-metrics} reports RMSE, negative
log-likelihood (NLL), continuous ranked probability score (CRPS), and centered
quantile metric (CQM). All metrics are computed on the held-out test split after
undoing target normalization. NLL evaluates the predictive density at the
observed targets. CRPS measures the quality of the full one-dimensional
predictive distribution using posterior predictive samples. CQM is computed from
probability integral transform values and measures centered predictive-interval
calibration error. Lower is better for all reported metrics.

\subsection{LeNet Classification Experiments}
\label{app:multiclass}

We evaluate image classification on two standard multiclass datasets,
\[
\{\text{FashionMNIST},\text{CIFAR-10}\}.
\]
For both datasets we use the standard train/test split provided by the dataset:
FashionMNIST has \(60\,000\) training examples and \(10\,000\) test examples,
and CIFAR-10 has \(50\,000\) training examples and \(10\,000\) test examples.
Images are flattened by the dataset wrapper and rescaled to \([0,1]\); the
LeNet model reshapes them back to image tensors internally. No input
standardization or target normalization is applied. Targets are integer class
labels and all methods use the same ten-class classification task.

\paragraph{Sweep design.}
We run five seeds, \(\{0,1,2,3,4\}\), for each dataset and method variant.
The reported variants are MAP, MFVI, FBNN, VIP, FTIP, GMVIP, and SIP. The main
classification table reports the LeNet setting, where the
convolutional feature extractor is deterministic and the classifier head is
Bayesian.

\paragraph{Shared training configuration.}
All runs use \(T_{\mathrm{opt}}=10\,000\) optimizer iterations, batch size \(128\), Adam with
learning rate \(10^{-3}\), no cosine annealing, and evaluation batch size
\(512\). The LeNet backbone consists of two convolutional blocks followed by a
classifier head:
\begin{align}
&\text{Conv}(C,6,5) \rightarrow \text{ReLU} \rightarrow \text{AvgPool}
\rightarrow
\text{Conv}(6,16,5) \rightarrow \text{ReLU} \rightarrow \text{AvgPool}
\rightarrow
\\
&\rightarrow \text{FC}(120) \rightarrow \text{ReLU}
\rightarrow
\text{FC}(84) \rightarrow \text{ReLU}
\rightarrow
\text{FC}(10).
\end{align}
In the Bayesian setting, the convolutional layers are deterministic
and the three fully connected classifier layers are Bayesian
layers for probabilistic methods. Bayesian weight log-standard deviations are
initialized at \(-3.0\), and the prior weight log-standard deviation is also
initialized at \(-3.0\).

\paragraph{Likelihood and predictive distribution.}
All probabilistic methods use a categorical multiclass likelihood. Predictive
class probabilities are obtained by averaging softmax probabilities over
posterior predictive samples. MAP is evaluated from a single deterministic
network. The remaining methods use their method-specific posterior predictive
samplers with \(R_{\mathrm{ev}}=512\) evaluation samples unless otherwise
stated.

\paragraph{Method-specific settings.}
MAP is trained as a deterministic LeNet baseline with cross-entropy and an
\(L_2\) penalty of \(10^{-4}\). MFVI uses \(R_{\mathrm{tr}}=512\) Monte Carlo
samples during training and \(R_{\mathrm{ev}}=512\) samples at evaluation.
FBNN uses a frozen BNN functional prior, \(R_{\mathrm{ev}}=512\) posterior
samples, and \(B=512\) prior samples.

VIP uses \(S=20\) sampled-function basis coefficients and learns the Bayesian
classifier-head prior parameters. FTIP uses the same \(S=20\) basis, a
\texttt{spline\_1x1} normalizing flow, \(R_{\mathrm{tr}}=512\) flow samples
during training, \(R_{\mathrm{ev}}=512\) evaluation samples, and learns the
prior parameters.

SIP uses \(M=100\) inducing inputs initialized by \(k\)-means, learns the inducing
locations, and uses \(B=512\) prior samples, \(R_{\mathrm{tr}}=512\) training
samples, and \(R_{\mathrm{ev}}=512\) evaluation samples. GMVIP uses the empirical
Matheron operator, a Gaussian inducing-coefficient posterior, \(M=100\) inducing
inputs initialized by \(k\)-means, learned inducing locations, \(B=512\)
operator-bank samples, \(R_{\mathrm{tr}}=512\) training samples, and
\(R_{\mathrm{ev}}=512\) evaluation samples.

For classification, GMVIP uses a channel-wise multioutput approximation.
Each of the ten latent logits has its own inducing-coefficient block, and
the variational covariance is block-diagonal across logits. With
\(M=100\), a joint formulation would require a full
\(1000\times1000\) coefficient covariance, so the block-diagonal
parameterization is used for scalability. The softmax likelihood jointly
normalizes the sampled logits, but cross-logit covariance is not explicitly
parameterized in \(q_\phi(\va)\).

\paragraph{Metrics.}
Table~\ref{tab:lenet-last-layer-classification-all-metrics} reports negative
log-likelihood (NLL), classification error, expected calibration error (ECE),
and Brier score on the held-out test split. NLL evaluates the predictive
categorical density assigned to the observed class. Error is the fraction of
incorrect predictions under the posterior mean class probabilities. ECE measures
confidence calibration using binned predicted probabilities. The Brier score
measures the squared error between predictive class probabilities and one-hot
labels. Lower is better for all reported metrics.

\section{Further Experiments}

\subsection{Finite-Dimensional Surrogate-Prior Fidelity}
\label{app:surrogate_prior}
\label{app:prior-fidelity}

The variational objective of \GMVIP is defined relative to its Matheron
surrogate prior rather than directly relative to the original implicit process.
Proposition~\ref{prop:prior_distortion} characterizes two sources of
finite-dimensional distortion: replacing the inducing-value law and removing
the dependence between the residual path and the replacement inducing values.
Proposition~\ref{prop:second_order_preservation} shows that the population
Matheron construction nevertheless preserves the original mean and covariance.
This section measures the resulting finite-sample discrepancy directly and
compares it with the coefficient-space surrogate used by VIP.

\paragraph{Label-free prior and surrogate constructions.}
We use the frozen scalar BNN prior from the synthetic experiment, but do not
load observations, evaluate a likelihood, or perform optimization. The
architecture is \(1\)-\(10\)-\(10\)-\(1\), with \(\tanh\) hidden activations
and independent zero-mean, unit-variance Gaussian weights and biases. We
evaluate every sampled function on \(N_x=301\) uniformly spaced inputs over
\([-5,5]\).

For a VIP basis of \(S\) independently sampled prior functions
\(\mathcal G_{\mathrm{VIP}}=\{g^{(s)}\}_{s=1}^S\), let
\[
    \hat\mu_S(\vx)
    =
    \frac{1}{S}\sum_{s=1}^{S}g^{(s)}(\vx),
    \qquad
    \phi_s(\vx)
    =
    \frac{g^{(s)}(\vx)-\hat\mu_S(\vx)}{\sqrt{S-1}}.
\]
The VIP surrogate-prior sample is
\[
    f_{\mathrm{VIP}}(\vx)
    =
    \hat\mu_S(\vx)+\sum_{s=1}^{S}\phi_s(\vx)a_s,
    \qquad
    \va\sim\N(\mathbf{0},\Imat_S).
\]
Conditional on the sampled basis, this is a Gaussian finite-rank process with
the empirical basis mean and covariance.

The empirical \GMVIP surrogate uses \(M\) uniformly spaced inducing inputs,
the paper's prior-bank notation
\(\mathcal G=\{g^{(b)}\}_{b=1}^B\), and
\[
\begin{aligned}
    f_{\mathrm{GMVIP}}(\cdot)
    &=T(g,\va)(\cdot)\\
    &=
    g(\cdot)
    +
    \Psib^{\mathrm{emp}}_{\Z}(\cdot)
    \left[
        \hat\vmu(\Z)
        +
        \Lmat\va
        -
        g(\Z)
    \right],
    \\
    &\hspace{8em}
    \va\sim\N(\mathbf{0},\Imat_M).
\end{aligned}
\]
Here \(g\) is a fresh coherent BNN-prior path,
\(\Lmat\Lmat^{\top}=\hat\K(\Z,\Z)\) is the stabilized empirical inducing
covariance, and
\(\Psib^{\mathrm{emp}}_{\Z}(\cdot)
=\hat\K(\cdot,\Z)\hat\K(\Z,\Z)^{-1}\), as in
Equation~\ref{eq:empirical_operator_bold}. We use the same numerical settings
as the synthetic \GMVIP experiment.
Both surrogates therefore use \(p(\va)=\N(\mathbf{0},\Imat_{d_a})\), with
\(d_a=S\) for VIP and \(d_a=M\) for scalar-output \GMVIP. No variational
parameters are optimized in this experiment.

\paragraph{Sampling protocol and finite-sample floor.}
We run seeds \(0\)--\(4\). For every seed, disjoint random streams generate
an operator or basis bank, surrogate coefficients and residual paths, and
three independent sets of \(R_{\mathrm{ev}}=2\, 048\) original-prior
functions. The first
original-prior set estimates a pointwise mean \(\widehat\mu_j\) and standard
deviation \(\widehat\sigma_j\). All functions are standardized as
\[
    \widetilde f^{(r)}_j
    =
    \frac{f^{(r)}(x_j)-\widehat\mu_j}{\widehat\sigma_j}.
\]
This common coordinatewise affine map makes the distances dimensionless and
prevents high-variance input regions from dominating the comparison. Its
parameters are estimated only from the independent original-prior calibration
set; no surrogate sample influences the normalization.
The second original-prior set is the reference distribution. The third is an
independent \emph{true-prior split}, and its distance from the reference
quantifies the nonzero floor caused by finite-sample estimation. Operator and
basis samples are never reused in this reference comparison.

\paragraph{Metrics.}
Let
\(\widetilde{\mathbf F},\widetilde{\mathbf H}
\in\R^{R_{\mathrm{ev}}\times N_x}\)
denote standardized reference and candidate samples. We report:
\begin{enumerate}[leftmargin=*,itemsep=2pt,topsep=2pt]
    \item the average pointwise empirical Wasserstein distance
    \[
        \overline W_1
        =
        \frac{1}{N_x}
        \sum_{j=1}^{N_x}
        \frac{1}{R_{\mathrm{ev}}}
        \sum_{r=1}^{R_{\mathrm{ev}}}
        \left|
            \widetilde F_{(r)j}-\widetilde H_{(r)j}
        \right|,
    \]
    together with its maximum over \(x_j\);
    \item the joint sliced Wasserstein distance
    \[
        \operatorname{SW}_2
        =
        \left[
        \frac{1}{P}
        \sum_{p=1}^{P}
        W_2^2
        \left(
            \widetilde{\mathbf F}\vtheta_p,
            \widetilde{\mathbf H}\vtheta_p
        \right)
        \right]^{1/2},
        \qquad P=512,
    \]
    where the \(\vtheta_p\) are random unit directions over the
    complete discretized function;
    \item the RMSE between sample means,
    \(e_\mu=N_x^{-1/2}\|\widehat\vmu_F-\widehat\vmu_H\|_2\), and relative
    covariance error
    \(e_{\K}=\|\hat\K_F-\hat\K_H\|_F/\|\hat\K_F\|_F\);
    \item the energy distance and a three-bandwidth RBF
    \(\operatorname{MMD}_{\mathrm{RBF}}^2\), computed on the first \(512\)
    evaluation samples to bound their quadratic cost.
\end{enumerate}
All metrics are lower-is-closer. Tables report means and standard errors across
the five experiment seeds.
Figure~\ref{fig:prior-fidelity-samples} visualizes representative paths at the
published settings, and Table~\ref{tab:prior-fidelity-defaults} reports all
default-setting discrepancies.

\begin{figure*}[t]
    \centering
    \includegraphics[width=0.98\textwidth]{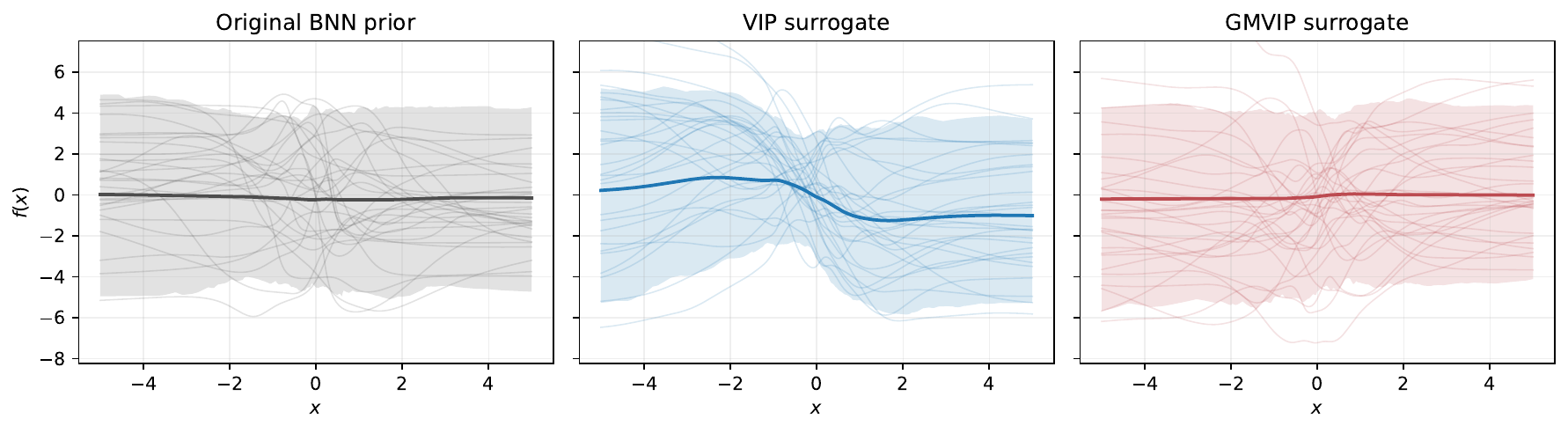}
    \caption{Original and surrogate prior paths for the published synthetic
    configurations. Thin curves are sampled functions, thick curves are sample
    means, and shaded regions are pointwise \(90\%\) intervals. VIP uses
    \(S=20\); \GMVIP uses \(M=256\) inducing inputs and \(B=1024\) operator-bank
    paths. The VIP mean and path distribution visibly depend on the particular
    finite basis, whereas the \GMVIP residual paths retain the qualitative
    variability of the original BNN prior.}
    \label{fig:prior-fidelity-samples}
\end{figure*}

\begin{table*}[t]
\centering
\scriptsize
\setlength{\tabcolsep}{3.5pt}
\caption{Surrogate-prior fidelity at the published synthetic configurations
(\(\downarrow\)). Entries are mean \(\pm\) standard error across five seeds.
The true-prior split is the finite-sample noise floor rather than a competing
method. Bold marks the closer of the two nontrivial surrogates.}
\label{tab:prior-fidelity-defaults}
\resizebox{\textwidth}{!}{%
\begin{tabular}{@{}llccccccc@{}}
\toprule
Distribution & Configuration
& \(\operatorname{SW}_2\)
& \(\overline W_1\)
& \(\max_x W_1\)
& \(e_\mu\)
& \(e_{\K}\)
& Energy
& \(\operatorname{MMD}_{\mathrm{RBF}}^2\) \\
\midrule
True-prior split & \(R_{\mathrm{ev}}=2048\)
& \(0.064 \pm 0.004\)
& \(0.045 \pm 0.005\)
& \(0.066 \pm 0.007\)
& \(0.034 \pm 0.008\)
& \(0.056 \pm 0.010\)
& \(0.0032 \pm 0.0003\)
& \(0.0011 \pm 0.0001\) \\
VIP surrogate & \(S=20\)
& \(0.253 \pm 0.032\)
& \(0.214 \pm 0.030\)
& \(0.312 \pm 0.029\)
& \(0.195 \pm 0.030\)
& \(0.369 \pm 0.073\)
& \(0.0317 \pm 0.0088\)
& \(0.0117 \pm 0.0028\) \\
\GMVIP surrogate & \(M=256,\ B=1024\)
& \(\bm{0.072 \pm 0.004}\)
& \(\bm{0.049 \pm 0.003}\)
& \(\bm{0.069 \pm 0.006}\)
& \(\bm{0.040 \pm 0.005}\)
& \(\bm{0.073 \pm 0.009}\)
& \(\bm{0.0056 \pm 0.0001}\)
& \(\bm{0.0022 \pm 0.0001}\) \\
\bottomrule
\end{tabular}}
\end{table*}

At the published settings, \GMVIP reduces \(\operatorname{SW}_2\) by
\(71.4\%\) and average marginal \(W_1\) by \(76.9\%\) relative to VIP.
Its joint and marginal distances are close to the corresponding true-split
floors: \(0.072\) versus \(0.064\) for \(\operatorname{SW}_2\), and \(0.049\)
versus \(0.045\) for \(\overline W_1\). The maximum pointwise discrepancy is
also near the floor (\(0.069\) versus \(0.066\)), whereas VIP reaches
\(0.312\). Figure~\ref{fig:prior-fidelity-pointwise} shows that this is not an
average hiding a localized failure: \GMVIP tracks the split baseline across
the full input interval, while VIP remains substantially farther away,
especially away from the center.

\begin{figure}[t]
    \centering
    \includegraphics[width=0.96\linewidth]{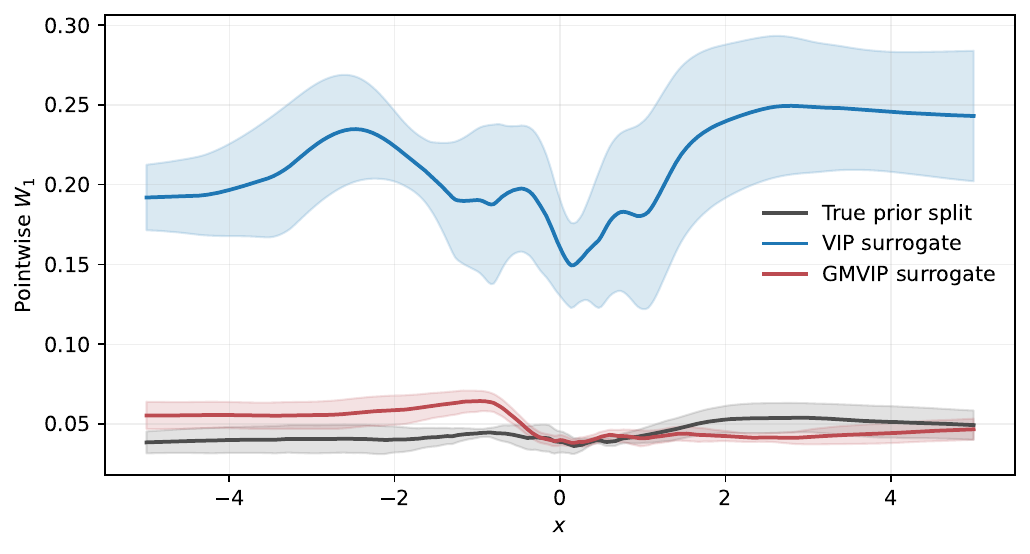}
    \caption{Pointwise marginal \(W_1\) for the published VIP and \GMVIP
    configurations. Curves and bands show mean \(\pm\) standard error across
    five seeds. The \GMVIP surrogate follows the true-prior split floor across
    \([-5,5]\); the VIP finite-basis surrogate has a much larger discrepancy
    throughout the interval.}
    \label{fig:prior-fidelity-pointwise}
\end{figure}

\paragraph{Matched coefficient dimension.}
To separate the published configuration choice from the pathwise
construction, we match the finite coefficient dimension at
\(S=M\in\{8,20,32,64,128,256\}\), holding the \GMVIP operator bank fixed at
\(B=1024\). This matches the dimension of the standard-normal coefficient,
not total sampling cost: \GMVIP additionally retains a fresh residual prior
path by design.
Table~\ref{tab:prior-fidelity-matched} reports the two core distances, while
Figure~\ref{fig:prior-fidelity-dimension} shows their dimension dependence.

\begin{table*}[t]
\centering
\small
\caption{Core distances under matched coefficient dimension
(\(\downarrow\)). \GMVIP uses \(B=1024\). Entries are mean \(\pm\) standard
error across five seeds; bold marks the smaller distance in each pair.}
\label{tab:prior-fidelity-matched}
\begin{tabular}{@{}ccccc@{}}
\toprule
Coefficient dim.
& \multicolumn{2}{c}{\(\operatorname{SW}_2\)}
& \multicolumn{2}{c}{\(\overline W_1\)} \\
\cmidrule(lr){2-3}\cmidrule(lr){4-5}
\(S=M\) & VIP & \GMVIP & VIP & \GMVIP \\
\midrule
\(8\)   & \(0.367 \pm 0.043\) & \(\bm{0.083 \pm 0.005}\) & \(0.303 \pm 0.039\) & \(\bm{0.059 \pm 0.007}\) \\
\(20\)  & \(0.253 \pm 0.032\) & \(\bm{0.078 \pm 0.004}\) & \(0.214 \pm 0.030\) & \(\bm{0.055 \pm 0.007}\) \\
\(32\)  & \(0.189 \pm 0.037\) & \(\bm{0.077 \pm 0.003}\) & \(0.144 \pm 0.031\) & \(\bm{0.049 \pm 0.001}\) \\
\(64\)  & \(0.144 \pm 0.018\) & \(\bm{0.081 \pm 0.005}\) & \(0.103 \pm 0.014\) & \(\bm{0.054 \pm 0.006}\) \\
\(128\) & \(0.119 \pm 0.022\) & \(\bm{0.074 \pm 0.006}\) & \(0.086 \pm 0.024\) & \(\bm{0.049 \pm 0.005}\) \\
\(256\) & \(0.100 \pm 0.005\) & \(\bm{0.072 \pm 0.004}\) & \(0.074 \pm 0.007\) & \(\bm{0.049 \pm 0.003}\) \\
\bottomrule
\end{tabular}
\end{table*}

\begin{figure*}[t]
    \centering
    \includegraphics[width=0.98\textwidth]{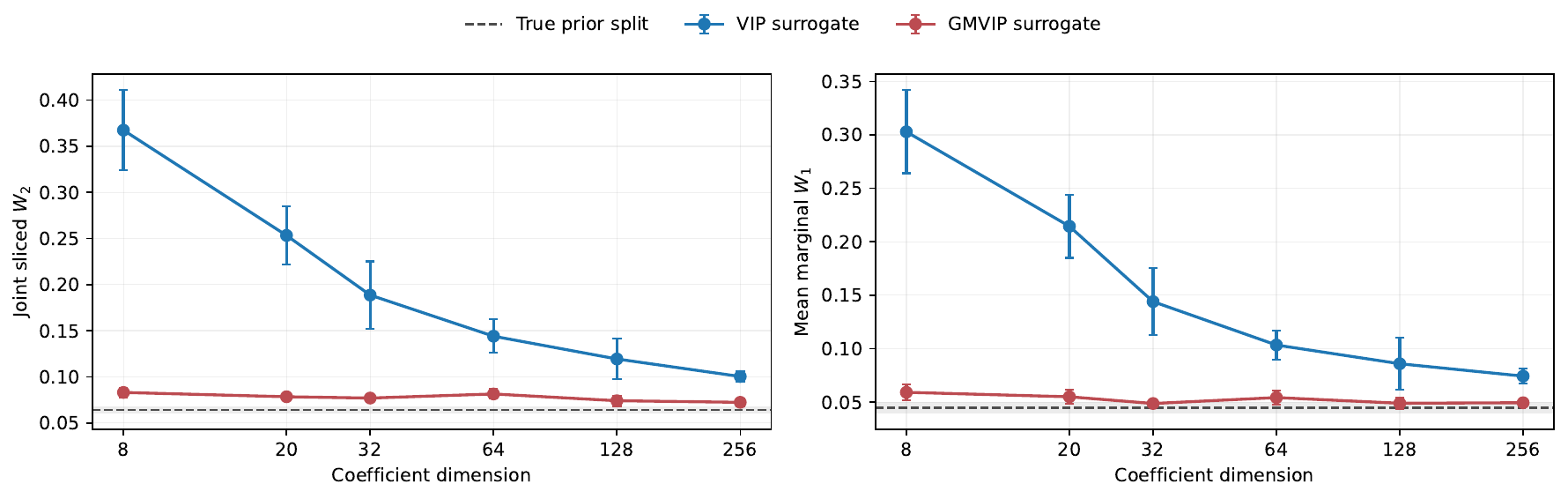}
    \caption{Joint sliced \(W_2\) and average marginal \(W_1\) against matched
    coefficient dimension. Error bars are standard errors across five seeds;
    the dashed line and gray band show the true-prior split mean and standard
    error. VIP approaches the original prior as its sampled-function basis
    grows, whereas \GMVIP is near the finite-sample floor throughout the
    sweep.}
    \label{fig:prior-fidelity-dimension}
\end{figure*}

\GMVIP is closer on both core metrics at every matched dimension. Its
\(\operatorname{SW}_2\) advantage ranges from \(77.4\%\) at dimension \(8\)
to \(27.9\%\) at dimension \(256\); the corresponding
\(\overline W_1\) reductions range from \(80.5\%\) to \(33.5\%\).
Notably, \GMVIP with only \(M=8\) has lower joint and marginal distances than
VIP with \(S=256\). VIP improves consistently with basis size, as expected
from its empirical moment approximation, but its Gaussian finite-basis law
remains detectably farther from this non-Gaussian BNN prior.
Figure~\ref{fig:prior-fidelity-moments} compares the corresponding mean and
covariance errors.

\begin{figure*}[t]
    \centering
    \includegraphics[width=0.98\textwidth]{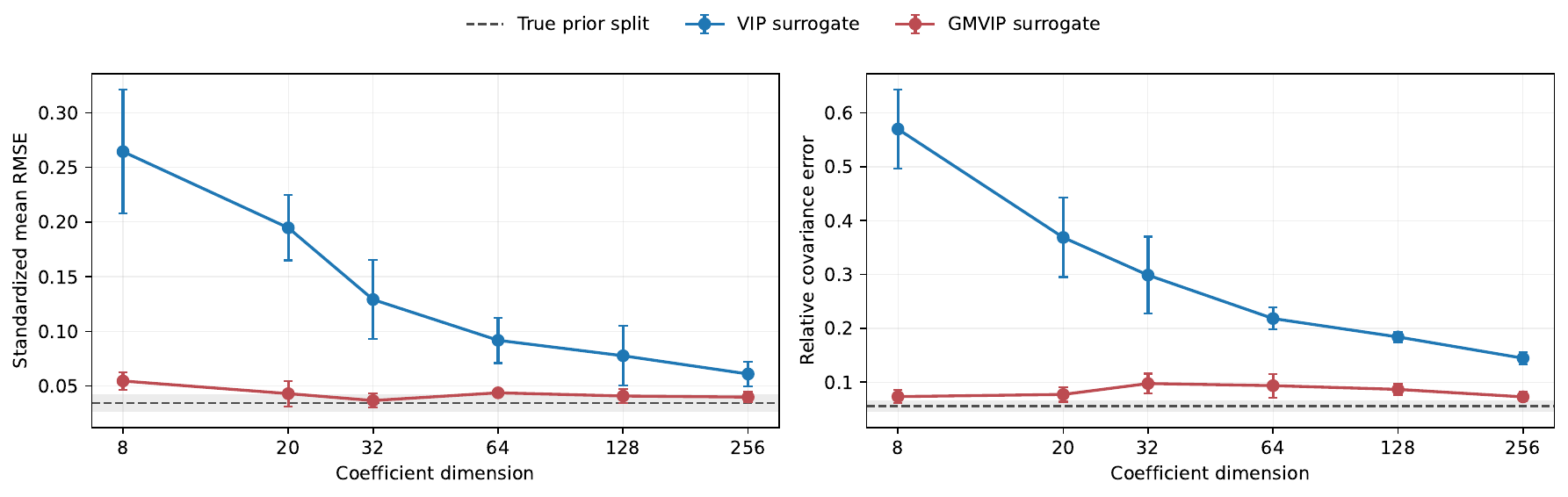}
    \caption{Standardized mean RMSE and relative covariance error under matched
    coefficient dimension. \GMVIP remains close to the finite-sample floor
    because it combines an empirical prior-covariance correction with a fresh
    residual prior path. VIP moment errors decrease with basis size but remain
    larger over the tested range.}
    \label{fig:prior-fidelity-moments}
\end{figure*}

The moment diagnostics agree with the distributional distances.
At the published settings, the \GMVIP mean error is \(0.040\), compared with
\(0.034\) for the true split and \(0.195\) for VIP. Its relative covariance
error is \(0.073\), compared with \(0.056\) for the split and \(0.369\) for
VIP. Thus the empirical results support the population second-order
preservation result while also quantifying finite-bank estimation error.
Figure~\ref{fig:prior-fidelity-robustness} provides the energy-distance and
kernel-MMD robustness checks.

\begin{figure*}[t]
    \centering
    \includegraphics[width=0.98\textwidth]{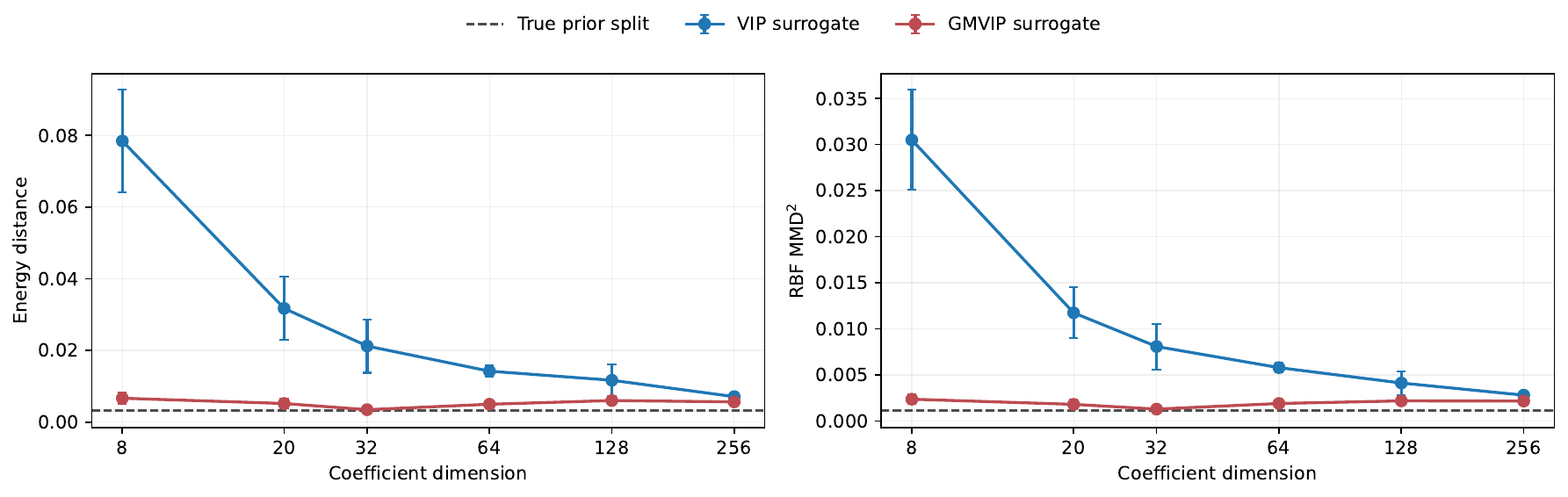}
    \caption{Energy distance and multi-bandwidth RBF
    \(\operatorname{MMD}_{\mathrm{RBF}}^2\) under matched coefficient
    dimension. These
    metrics use a deterministic \(512\)-sample prefix. They reproduce the
    Wasserstein conclusion: \GMVIP stays close to the true-split floor, while
    VIP requires a much larger basis to reduce the discrepancy.}
    \label{fig:prior-fidelity-robustness}
\end{figure*}

The independent robustness metrics lead to the same ordering. At the
published settings, the energy distance is \(0.0056\) for \GMVIP,
\(0.0317\) for VIP, and \(0.0032\) for the true split. The corresponding
\(\operatorname{MMD}_{\mathrm{RBF}}^2\) values are \(0.0022\), \(0.0117\),
and \(0.0011\).
The agreement across marginal, projected-joint, moment, energy, and kernel
metrics makes the conclusion insensitive to any single discrepancy measure.

\paragraph{Operator-bank sensitivity.}
Finally, we fix \(M=256\) and vary the number of prior paths used to estimate
the empirical operator. Table~\ref{tab:prior-fidelity-bank} and
Figure~\ref{fig:prior-fidelity-bank} show that the discrepancy is not strictly
monotone for the smaller banks: \(B=512\) is worse than \(B=256\) on this
five-seed experiment. This is consistent with finite-bank and covariance
stabilization variability rather than a deterministic monotonicity guarantee.
The larger banks improve sharply. At \(B=2048\),
\(\operatorname{SW}_2=0.066\) and \(\overline W_1=0.043\), which are at the
scale of the true-split floors \(0.064\) and \(0.045\).

\begin{table*}[t]
\centering
\scriptsize
\setlength{\tabcolsep}{4pt}
\caption{\GMVIP operator-bank sensitivity at \(M=256\) (\(\downarrow\)).
Entries are mean \(\pm\) standard error across five seeds.}
\label{tab:prior-fidelity-bank}
\resizebox{\textwidth}{!}{%
\begin{tabular}{@{}ccccccc@{}}
\toprule
Bank \(B\)
& \(\operatorname{SW}_2\)
& \(\overline W_1\)
& \(e_\mu\)
& \(e_{\K}\)
& Energy
& \(\operatorname{MMD}_{\mathrm{RBF}}^2\) \\
\midrule
\(256\)
& \(0.093 \pm 0.013\)
& \(0.065 \pm 0.011\)
& \(0.058 \pm 0.017\)
& \(0.105 \pm 0.020\)
& \(0.0083 \pm 0.0018\)
& \(0.0028 \pm 0.0006\) \\
\(512\)
& \(0.106 \pm 0.008\)
& \(0.083 \pm 0.009\)
& \(0.071 \pm 0.011\)
& \(0.116 \pm 0.022\)
& \(0.0081 \pm 0.0018\)
& \(0.0029 \pm 0.0006\) \\
\(1024\)
& \(0.072 \pm 0.004\)
& \(0.049 \pm 0.003\)
& \(0.040 \pm 0.005\)
& \(0.073 \pm 0.009\)
& \(0.0056 \pm 0.0001\)
& \(0.0022 \pm 0.0001\) \\
\(2048\)
& \(\bm{0.066 \pm 0.005}\)
& \(\bm{0.043 \pm 0.005}\)
& \(\bm{0.035 \pm 0.007}\)
& \(\bm{0.059 \pm 0.011}\)
& \(\bm{0.0044 \pm 0.0011}\)
& \(\bm{0.0015 \pm 0.0005}\) \\
\bottomrule
\end{tabular}}
\end{table*}

\begin{figure*}[t]
    \centering
    \includegraphics[width=0.98\textwidth]{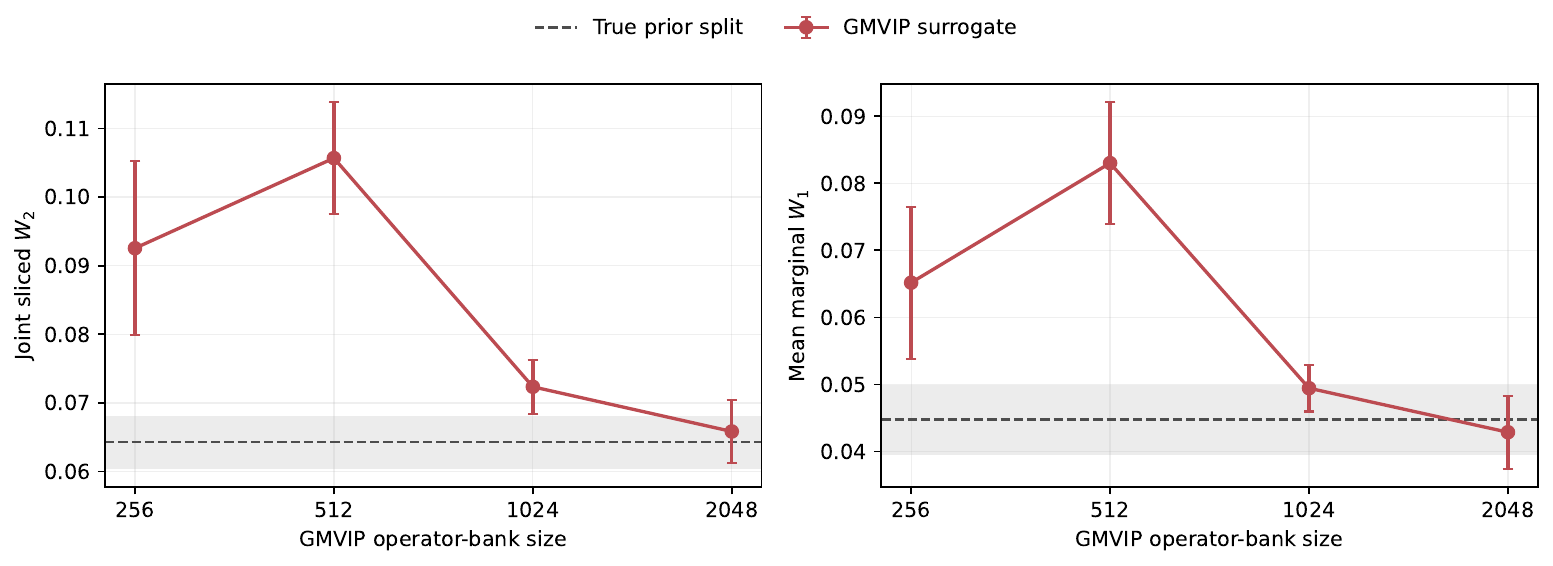}
    \caption{\GMVIP joint and marginal prior discrepancy as a function of
    operator-bank size at \(M=256\). The dashed line and gray band denote the
    true-prior split floor. At \(B=2048\), the error bars overlap the floor at
    the resolution of these sample-based diagnostics, although the
    smaller-bank trend is non-monotone.}
    \label{fig:prior-fidelity-bank}
\end{figure*}

\paragraph{Interpretation and limitations.}
The experiment isolates prior fidelity from posterior optimization and
predictive performance. It shows that the residual-path construction makes
the empirical \GMVIP surrogate substantially closer to the original BNN prior
than the VIP finite-basis surrogate, and that most of the remaining detectable
gap at the published setting is reduced by enlarging the operator bank.
However, the true-split floor prevents interpreting a distance numerically
close to that floor as proof of equality. The measurements concern one
non-Gaussian BNN prior on one \(301\)-point grid; they do not establish equality
of infinite-dimensional process laws or uniform behavior for every implicit
prior. They instead provide a direct finite-dimensional complement to
Proposition~\ref{prop:prior_distortion} and empirical support for the
second-order statement in Proposition~\ref{prop:second_order_preservation}.

\subsection{Classification Experiments}\label{app:classification}

\paragraph{LeNet classification.}
Table~\ref{tab:lenet-last-layer-classification-all-metrics} summarizes
this first setting.
We first evaluate image classification on FashionMNIST and CIFAR-10 using a
LeNet-style architecture with deterministic convolutional features and
Bayesian classifier layers. We report NLL, classification error, ECE, and
Brier score over five random seeds. This setting tests whether the proposed
function-space posterior improves predictive uncertainty and accuracy while
keeping the architecture lightweight enough for repeated Bayesian
comparisons. GMVIP is consistently competitive in this Bayesian-head LeNet
regime: on FashionMNIST it attains the best classification error and Brier
score, and is tied for the best rounded NLL with FBNN. On CIFAR-10, SIP obtains
the strongest NLL, error, and Brier score, but GMVIP remains among the best
methods, with the second-best error and performance close to FBNN on NLL and
Brier score. MFVI gives the lowest ECE, although this comes with worse
predictive accuracy and likelihood on several metrics. These results indicate
that GMVIP provides a favorable accuracy--uncertainty trade-off in the
Bayesian-head LeNet setting, particularly on FashionMNIST, whereas CIFAR-10 is
more challenging and favors SIP under this representation.

\begin{table*}[t]
\centering
\scriptsize
\setlength{\tabcolsep}{3.0pt}
\renewcommand{\arraystretch}{0.92}
\caption{Bayesian-head LeNet classification metrics ($\downarrow$). Entries are mean $\pm$ standard deviation across five seeds. Best results are shown in \textcolor{purple}{\textbf{purple}}; second-best are shown in \textcolor{teal}{\textbf{teal}}.}
\label{tab:lenet-last-layer-classification-all-metrics}
\resizebox{\textwidth}{!}{%
\begin{tabular}{clccccccc}
\toprule
\textbf{Metric} & \textbf{Dataset} & MAP & MFVI & VIP & FBNN & SIP & FTIP & GMVIP \\
\midrule
\multirow{2}{*}{\textbf{NLL}}
& FashionMNIST & $0.287 \pm 0.012$ & $0.296 \pm 0.014$ & $0.282 \pm 0.015$ & $\best{0.261 \pm 0.016}$ & $0.277 \pm 0.009$ & $\second{0.274 \pm 0.017}$ & $\best{0.261 \pm 0.015}$ \\
& CIFAR10 & $1.099 \pm 0.054$ & $1.189 \pm 0.023$ & $1.190 \pm 0.081$ & $\second{1.093 \pm 0.065}$ & $\best{1.039 \pm 0.065}$ & $1.344 \pm 0.109$ & $\second{1.093 \pm 0.025}$ \\
\midrule
\multirow{2}{*}{\textbf{Error}}
& FashionMNIST & $0.102 \pm 0.004$ & $0.108 \pm 0.006$ & $0.099 \pm 0.006$ & $\second{0.093 \pm 0.005}$ & $0.094 \pm 0.005$ & $0.096 \pm 0.005$ & $\best{0.092 \pm 0.005}$ \\
& CIFAR10 & $0.374 \pm 0.018$ & $0.424 \pm 0.010$ & $0.372 \pm 0.012$ & $0.350 \pm 0.009$ & $\best{0.342 \pm 0.010}$ & $0.374 \pm 0.016$ & $\second{0.348 \pm 0.010}$ \\
\midrule
\multirow{2}{*}{\textbf{ECE}}
& FashionMNIST & $0.020 \pm 0.006$ & $\best{0.016 \pm 0.003}$ & $\second{0.019 \pm 0.004}$ & $0.020 \pm 0.005$ & $0.024 \pm 0.009$ & $0.021 \pm 0.007$ & $0.021 \pm 0.005$ \\
& CIFAR10 & $\second{0.069 \pm 0.015}$ & $\best{0.045 \pm 0.011}$ & $0.121 \pm 0.023$ & $0.105 \pm 0.017$ & $0.071 \pm 0.033$ & $0.165 \pm 0.019$ & $0.102 \pm 0.004$ \\
\midrule
\multirow{2}{*}{\textbf{Brier}}
& FashionMNIST & $0.148 \pm 0.006$ & $0.154 \pm 0.008$ & $0.144 \pm 0.008$ & $\second{0.134 \pm 0.007}$ & $0.138 \pm 0.007$ & $0.140 \pm 0.008$ & $\best{0.133 \pm 0.007}$ \\
& CIFAR10 & $0.505 \pm 0.021$ & $0.553 \pm 0.009$ & $0.517 \pm 0.022$ & $\second{0.485 \pm 0.018}$ & $\best{0.472 \pm 0.017}$ & $0.543 \pm 0.026$ & $0.486 \pm 0.010$ \\
\bottomrule
\end{tabular}%
}
\end{table*}

\paragraph{Frozen-CLIP protocol.}
We additionally evaluate all eight methods with linear classifier heads over
frozen, normalized \(512\)-dimensional CLIP ViT-B/32 embeddings. These
experiments separate uncertainty in the classifier from representation
learning: the encoder is never updated, and each method differs only in its
linear head and posterior construction. The shared benchmark fixes the
learning rate at \(10^{-3}\) for every method. Every method is otherwise
tuned independently, the selected head is retrained with \(3\) seeds, and Bayesian predictions average \(100\) posterior samples.
A positive scalar temperature is fitted on a held-out calibration split after
training. Temperature scaling does not change the predicted class, so the
reported accuracy is unchanged, while NLL and ECE are temperature calibrated.

\paragraph{Frozen-CLIP CIFAR-10.}
For CIFAR-10, the \(50\,000\) training images are divided into \(40\,000\)
tuning examples, \(5\,000\) model-selection examples, and \(5\,000\)
temperature-calibration examples. After model selection, each winning head is
retrained on the combined \(45\,000\) fitting examples and evaluated once on
the untouched \(10\,000\)-image test set. Table~\ref{tab:clip-cifar10-linear}
shows that GMVIP is strongest overall. It obtains \(94.45\%\) accuracy, NLL
\(0.1667\), and ECE \(0.0058\), giving the best accuracy and NLL and the
second-best ECE. VIP ranks second in accuracy (\(94.42\%\)) and NLL
(\(0.1700\)), while SIP has the lowest ECE (\(0.0057\)). GMVIP gives the strongest
accuracy--uncertainty trade-off on frozen-CLIP CIFAR-10.

\begin{table}[t]
\centering
\scriptsize
\setlength{\tabcolsep}{4.5pt}
\renewcommand{\arraystretch}{0.96}
\caption{Frozen-CLIP CIFAR-10 test results with linear classifier heads.
Entries are mean \(\pm\) standard deviation over three seeds. Accuracy is in
percentage points; NLL and ECE are temperature calibrated. Best mean is shown
in \textcolor{purple}{\textbf{purple}} and second-best mean in
\textcolor{teal}{\textbf{teal}}.}
\label{tab:clip-cifar10-linear}
\resizebox{0.7\linewidth}{!}{%
\begin{tabular}{lccc}
\toprule
Method & Accuracy (\%) \(\uparrow\) & NLL \(\downarrow\) & ECE \(\downarrow\) \\
\midrule
MAP   & $93.93 \pm 0.02$ & $0.1855 \pm 0.0001$ & $0.0092 \pm 0.0001$ \\
MFVI  & $93.80 \pm 0.06$ & $0.1872 \pm 0.0001$ & $0.0088 \pm 0.0005$ \\
VIP   & $\second{94.42 \pm 0.05}$ & $\second{0.1700 \pm 0.0008}$ & $0.0093 \pm 0.0021$ \\
FBNN  & $93.94 \pm 0.03$ & $0.1851 \pm 0.0001$ & $0.0092 \pm 0.0003$ \\
SIP   & $93.05 \pm 0.03$ & $0.2107 \pm 0.0021$ & $\best{0.0057 \pm 0.0011}$ \\
TFSVI & $93.94 \pm 0.02$ & $0.1851 \pm 0.0001$ & $0.0105 \pm 0.0005$ \\
FTIP  & $94.19 \pm 0.03$ & $0.1741 \pm 0.0016$ & $0.0068 \pm 0.0015$ \\
GMVIP & $\best{94.45 \pm 0.02}$ & $\best{0.1667 \pm 0.0014}$ & $\second{0.0058 \pm 0.0008}$ \\
\bottomrule
\end{tabular}}
\end{table}

\paragraph{Frozen-CLIP DynaSent.}
We next consider three-class sentiment classification on DynaSent. CLIP text
embeddings are fitted with linear heads using \(80\,488\) Round-1 training
sentences for tuning and one stratified half of the Round-1 development set
(\(1\,800\) examples) for model selection. The selected head is retrained on
the resulting \(82\,288\) examples, while the other \(1\,800\) development
examples are reserved for temperature calibration. We evaluate on the
\(3\,600\)-example Round-1 test set (in distribution) and the \(720\)-example
Round-2 Dynabench test set (shifted). In addition to accuracy, NLL, and ECE, we
report area under the risk--coverage curve (AURC), for which lower values mean
that predictive confidence ranks correct examples ahead of errors more
effectively.

Table~\ref{tab:clip-dynasent} shows that GMVIP performs best on the Round-1
test set in accuracy, NLL, and AURC, attaining \(59.48\%\), \(0.8769\), and
\(0.2503\), respectively. Under the Round-2 shift, GMVIP retains the best NLL
(\(0.9163\)) and AURC (\(0.3111\)), and has the second-best accuracy
(\(57.41\%\)) behind VIP (\(57.96\%\)). SIP has the lowest shifted ECE, but
only two of its three final seeds completed.

\begin{table*}[t]
\centering
\scriptsize
\setlength{\tabcolsep}{3.5pt}
\renewcommand{\arraystretch}{0.96}
\caption{Frozen-CLIP DynaSent results with linear three-class heads. Round 1
is the in-distribution test set and Round 2 is the adversarially collected
shifted test set. Entries are mean \(\pm\) standard deviation; SIP has two
completed seeds and all other methods have three. Accuracy is in percentage
points; NLL and ECE are temperature calibrated. Lower is better except for
accuracy. AURC is the area under the risk--coverage curve. Best and second-best
means are highlighted as above.}
\label{tab:clip-dynasent}
\resizebox{\textwidth}{!}{%
\begin{tabular}{lcccccccc}
\toprule
& \multicolumn{4}{c}{Round 1 (in distribution)}
& \multicolumn{4}{c}{Round 2 (shifted)} \\
\cmidrule(lr){2-5}\cmidrule(lr){6-9}
Method
& Acc. (\%) \(\uparrow\) & NLL \(\downarrow\) & ECE \(\downarrow\) & AURC \(\downarrow\)
& Acc. (\%) \(\uparrow\) & NLL \(\downarrow\) & ECE \(\downarrow\) & AURC \(\downarrow\) \\
\midrule
MAP
& $57.61 \pm 0.03$ & $0.8985 \pm 0.0001$ & $0.0437 \pm 0.0003$ & $0.2706 \pm 0.0002$
& $55.74 \pm 0.08$ & $0.9335 \pm 0.0001$ & $0.0351 \pm 0.0037$ & $0.3262 \pm 0.0002$ \\
MFVI
& $57.23 \pm 0.13$ & $0.9031 \pm 0.0003$ & $0.0433 \pm 0.0004$ & $0.2749 \pm 0.0005$
& $55.19 \pm 0.08$ & $0.9374 \pm 0.0004$ & $\second{0.0319 \pm 0.0004}$ & $0.3284 \pm 0.0005$ \\
VIP
& $\second{59.30 \pm 0.21}$ & $\second{0.8796 \pm 0.0005}$ & $\best{0.0385 \pm 0.0006}$ & $\second{0.2530 \pm 0.0006}$
& $\best{57.96 \pm 0.29}$ & $\second{0.9209 \pm 0.0023}$ & $0.0372 \pm 0.0035$ & $\second{0.3168 \pm 0.0005}$ \\
FBNN
& $58.48 \pm 0.40$ & $0.8888 \pm 0.0027$ & $\second{0.0388 \pm 0.0007}$ & $0.2615 \pm 0.0030$
& $56.81 \pm 0.28$ & $0.9261 \pm 0.0011$ & $0.0417 \pm 0.0045$ & $0.3205 \pm 0.0005$ \\
SIP
& $57.21 \pm 0.29$ & $0.9073 \pm 0.0017$ & $0.0418 \pm 0.0019$ & $0.2779 \pm 0.0020$
& $55.83 \pm 0.20$ & $0.9385 \pm 0.0020$ & $\best{0.0296 \pm 0.0011}$ & $0.3274 \pm 0.0009$ \\
TFSVI
& $57.60 \pm 0.06$ & $0.8985 \pm 0.0001$ & $0.0441 \pm 0.0003$ & $0.2707 \pm 0.0002$
& $55.69 \pm 0.00$ & $0.9335 \pm 0.0000$ & $0.0349 \pm 0.0009$ & $0.3264 \pm 0.0002$ \\
FTIP
& $56.61 \pm 0.93$ & $0.9095 \pm 0.0077$ & $0.0432 \pm 0.0056$ & $0.2800 \pm 0.0073$
& $56.76 \pm 0.70$ & $0.9453 \pm 0.0054$ & $0.0345 \pm 0.0036$ & $0.3319 \pm 0.0011$ \\
GMVIP
& $\best{59.48 \pm 0.37}$ & $\best{0.8769 \pm 0.0032}$ & $0.0403 \pm 0.0029$ & $\best{0.2503 \pm 0.0032}$
& $\second{57.41 \pm 0.21}$ & $\best{0.9163 \pm 0.0012}$ & $0.0365 \pm 0.0042$ & $\best{0.3111 \pm 0.0000}$ \\
\bottomrule
\end{tabular}}
\end{table*}

Taken together, Tables~\ref{tab:lenet-last-layer-classification-all-metrics},
\ref{tab:clip-cifar10-linear}, and \ref{tab:clip-dynasent} show that the value
of a Bayesian classifier depends on the representation and data regime. GMVIP
is strongest in the FashionMNIST LeNet comparison and is the strongest method
under the common fixed-learning-rate frozen-CLIP CIFAR-10 protocol. On
DynaSent, GMVIP provides the strongest overall in-distribution and shifted
predictive performance. These results indicate that its function-space
posterior is particularly useful when the representation retains label
ambiguity, while still remaining effective with highly separable frozen image
features.

\subsection{Full UCI regression results}\label{app:full_uci}

Table~\ref{tab:uci-all-metrics-gmvip-wandb2} completes the results of Table~\ref{tab:uci-all-metrics-gmvip-wandb} by including CRPS and CQM metrics of the experiments.

\begin{table*}[t]
\centering
\scriptsize
\setlength{\tabcolsep}{3.5pt}
\renewcommand{\arraystretch}{0.92}
\caption{UCI regression metrics ($\downarrow$). Entries are mean $\pm$ standard deviation across seeds. Frozen-prior variants are omitted. Best mean per dataset is shown in \textcolor{purple}{\textbf{purple}}; second-best mean is shown in \textcolor{teal}{\textbf{teal}}. Naval RMSE and CRPS are reported in units of $10^{-4}$.}
\label{tab:uci-all-metrics-gmvip-wandb2}
\resizebox{\textwidth}{!}{%
\begin{tabular}{lccccccccc}
\toprule
\textbf{RMSE} & Boston & Concrete & Energy & Kin8nm & Naval ($\times 10^{-4}$) & Power & Protein & Wine Red & Yacht \\
\midrule
MAP & $4.34 \pm 1.41$ & $5.45 \pm 0.57$ & $0.52 \pm 0.05$ & $0.082 \pm 0.001$ & $4.04 \pm 1.97$ & $3.94 \pm 0.13$ & $4.62 \pm 0.07$ & $0.69 \pm 0.01$ & $0.52 \pm 0.20$ \\
MFVI & $3.93 \pm 0.70$ & $6.23 \pm 0.51$ & $0.62 \pm 0.09$ & $0.091 \pm 0.002$ & $147.61 \pm 0.90$ & $4.12 \pm 0.17$ & $4.94 \pm 0.08$ & $\second{0.62 \pm 0.03}$ & $0.68 \pm 0.10$ \\
VIP & $6.49 \pm 1.27$ & $5.76 \pm 0.97$ & $0.58 \pm 0.09$ & $0.080 \pm 0.003$ & $2.59 \pm 0.96$ & $\best{3.76 \pm 0.07}$ & $4.37 \pm 0.06$ & $0.79 \pm 0.05$ & $0.84 \pm 0.30$ \\
FBNN & $4.17 \pm 1.27$ & $\second{5.21 \pm 0.52}$ & $\second{0.51 \pm 0.05}$ & $0.086 \pm 0.001$ & $3.95 \pm 0.47$ & $4.05 \pm 0.16$ & $4.81 \pm 0.05$ & $0.65 \pm 0.04$ & $\best{0.23 \pm 0.06}$ \\
SIP & $\best{3.22 \pm 0.99}$ & $5.30 \pm 0.27$ & $0.52 \pm 0.04$ & $\second{0.071 \pm 0.001}$ & $\second{1.48 \pm 0.23}$ & $3.84 \pm 0.11$ & $4.62 \pm 0.05$ & $0.66 \pm 0.03$ & $0.41 \pm 0.15$ \\
TFSVI & $6.39 \pm 0.63$ & $5.97 \pm 0.85$ & $0.59 \pm 0.15$ & $0.081 \pm 0.002$ & $3.95 \pm 0.81$ & $3.90 \pm 0.13$ & $4.61 \pm 0.05$ & $0.73 \pm 0.02$ & $0.83 \pm 0.38$ \\
FTIP & $6.68 \pm 1.58$ & $6.02 \pm 1.08$ & $0.55 \pm 0.07$ & $0.079 \pm 0.002$ & $3.36 \pm 1.58$ & $\second{3.77 \pm 0.11}$ & $\second{4.36 \pm 0.05}$ & $0.82 \pm 0.03$ & $1.17 \pm 0.52$ \\
GMVIP & $\second{3.32 \pm 0.97}$ & $\best{4.24 \pm 0.74}$ & $\best{0.51 \pm 0.05}$ & $\best{0.069 \pm 0.001}$ & $\best{1.36 \pm 0.19}$ & $3.78 \pm 0.12$ & $\best{4.32 \pm 0.04}$ & $\best{0.60 \pm 0.03}$ & $\second{0.32 \pm 0.11}$ \\
\midrule
\textbf{NLL} & Boston & Concrete & Energy & Kin8nm & Naval & Power & Protein & Wine Red & Yacht \\
\midrule
MAP & $7.62 \pm 3.48$ & $3.24 \pm 0.17$ & $\second{0.81 \pm 0.14}$ & $-1.08 \pm 0.02$ & $-6.14 \pm 1.03$ & $2.79 \pm 0.03$ & $2.95 \pm 0.01$ & $1.27 \pm 0.04$ & $16.18 \pm 15.06$ \\
MFVI & $\best{2.76 \pm 0.14}$ & $3.25 \pm 0.08$ & $0.94 \pm 0.14$ & $-0.98 \pm 0.03$ & $-2.80 \pm 0.01$ & $2.84 \pm 0.04$ & $3.02 \pm 0.02$ & $\best{0.94 \pm 0.04}$ & $1.08 \pm 0.05$ \\
VIP & $4245.41 \pm 2526.37$ & $4.80 \pm 1.11$ & $2.48 \pm 0.60$ & $-1.09 \pm 0.04$ & $-6.78 \pm 0.49$ & $\best{2.74 \pm 0.02}$ & $2.89 \pm 0.01$ & $3.16 \pm 0.39$ & $96.61 \pm 53.56$ \\
FBNN & $5.24 \pm 2.45$ & $\second{3.13 \pm 0.16}$ & $0.84 \pm 0.22$ & $-1.04 \pm 0.01$ & $-6.42 \pm 0.11$ & $2.82 \pm 0.04$ & $2.99 \pm 0.01$ & $1.08 \pm 0.08$ & $\second{0.30 \pm 0.84}$ \\
SIP & $3.27 \pm 1.22$ & $3.18 \pm 0.09$ & $0.84 \pm 0.14$ & $\second{-1.22 \pm 0.02}$ & $\second{-7.04 \pm 0.29}$ & $2.76 \pm 0.03$ & $2.95 \pm 0.01$ & $1.09 \pm 0.07$ & $5.65 \pm 5.09$ \\
TFSVI & $11.32 \pm 2.79$ & $3.38 \pm 0.23$ & $1.05 \pm 0.44$ & $-1.09 \pm 0.03$ & $-6.41 \pm 0.22$ & $2.78 \pm 0.03$ & $2.95 \pm 0.01$ & $1.29 \pm 0.07$ & $14.30 \pm 10.07$ \\
FTIP & $1789.45 \pm 735.75$ & $4.82 \pm 1.21$ & $2.13 \pm 0.36$ & $-1.11 \pm 0.02$ & $-6.47 \pm 0.59$ & $\second{2.75 \pm 0.03}$ & $\second{2.89 \pm 0.01}$ & $3.52 \pm 0.45$ & $56.66 \pm 56.15$ \\
GMVIP & $\second{3.11 \pm 1.03}$ & $\best{2.82 \pm 0.16}$ & $\best{0.75 \pm 0.09}$ & $\best{-1.26 \pm 0.02}$ & $\best{-7.12 \pm 0.04}$ & $2.75 \pm 0.03$ & $\best{2.88 \pm 0.01}$ & $\second{0.94 \pm 0.08}$ & $\best{0.18 \pm 0.22}$ \\
\midrule
\textbf{CRPS} & Boston & Concrete & Energy & Kin8nm & Naval ($\times 10^{-4}$) & Power & Protein & Wine Red & Yacht \\
\midrule
MAP & $2.33 \pm 0.43$ & $3.00 \pm 0.28$ & $0.28 \pm 0.02$ & $0.046 \pm 0.001$ & $2.46 \pm 1.52$ & $2.17 \pm 0.06$ & $2.60 \pm 0.05$ & $0.39 \pm 0.01$ & $0.20 \pm 0.05$ \\
MFVI & $2.06 \pm 0.22$ & $3.48 \pm 0.27$ & $0.34 \pm 0.06$ & $0.051 \pm 0.001$ & $86.17 \pm 0.58$ & $2.29 \pm 0.08$ & $2.79 \pm 0.04$ & $\second{0.34 \pm 0.02}$ & $0.37 \pm 0.04$ \\
VIP & $4.70 \pm 0.88$ & $3.26 \pm 0.48$ & $0.33 \pm 0.05$ & $0.045 \pm 0.002$ & $1.50 \pm 0.64$ & $\best{2.06 \pm 0.05}$ & $2.43 \pm 0.03$ & $0.48 \pm 0.02$ & $0.40 \pm 0.12$ \\
FBNN & $2.31 \pm 0.47$ & $\second{2.86 \pm 0.24}$ & $\second{0.28 \pm 0.02}$ & $0.048 \pm 0.001$ & $2.09 \pm 0.25$ & $2.25 \pm 0.08$ & $2.71 \pm 0.04$ & $0.37 \pm 0.02$ & $\best{0.12 \pm 0.03}$ \\
SIP & $\best{1.69 \pm 0.39}$ & $2.86 \pm 0.15$ & $0.28 \pm 0.02$ & $\second{0.040 \pm 0.001}$ & $\second{1.04 \pm 0.23}$ & $2.10 \pm 0.05$ & $2.60 \pm 0.03$ & $0.37 \pm 0.02$ & $0.16 \pm 0.04$ \\
TFSVI & $3.83 \pm 0.37$ & $3.26 \pm 0.44$ & $0.33 \pm 0.07$ & $0.045 \pm 0.001$ & $2.14 \pm 0.42$ & $2.14 \pm 0.06$ & $2.58 \pm 0.03$ & $0.40 \pm 0.02$ & $0.35 \pm 0.11$ \\
FTIP & $4.92 \pm 1.25$ & $3.40 \pm 0.65$ & $0.32 \pm 0.04$ & $0.044 \pm 0.001$ & $2.03 \pm 1.05$ & $\second{2.06 \pm 0.05}$ & $\second{2.43 \pm 0.03}$ & $0.48 \pm 0.02$ & $0.56 \pm 0.18$ \\
GMVIP & $\second{1.79 \pm 0.37}$ & $\best{2.26 \pm 0.35}$ & $\best{0.28 \pm 0.02}$ & $\best{0.038 \pm 0.001}$ & $\best{0.92 \pm 0.07}$ & $2.07 \pm 0.06$ & $\best{2.41 \pm 0.02}$ & $\best{0.33 \pm 0.01}$ & $\second{0.16 \pm 0.04}$ \\
\midrule
\textbf{CQM} & Boston & Concrete & Energy & Kin8nm & Naval & Power & Protein & Wine Red & Yacht \\
\midrule
MAP & $0.235 \pm 0.053$ & $0.051 \pm 0.009$ & $0.049 \pm 0.021$ & $0.013 \pm 0.009$ & $0.136 \pm 0.142$ & $0.021 \pm 0.010$ & $0.010 \pm 0.003$ & $0.088 \pm 0.015$ & $0.099 \pm 0.028$ \\
MFVI & $\best{0.047 \pm 0.013}$ & $\best{0.029 \pm 0.004}$ & $\best{0.042 \pm 0.013}$ & $\best{0.011 \pm 0.003}$ & $\best{0.072 \pm 0.005}$ & $\second{0.017 \pm 0.007}$ & $\second{0.008 \pm 0.003}$ & $\best{0.037 \pm 0.016}$ & $0.229 \pm 0.050$ \\
VIP & $0.495 \pm 0.003$ & $0.165 \pm 0.036$ & $0.172 \pm 0.033$ & $0.020 \pm 0.013$ & $0.130 \pm 0.062$ & $0.023 \pm 0.010$ & $0.021 \pm 0.001$ & $0.217 \pm 0.009$ & $0.342 \pm 0.052$ \\
FBNN & $0.211 \pm 0.063$ & $\second{0.030 \pm 0.012}$ & $0.051 \pm 0.017$ & $\second{0.011 \pm 0.004}$ & $0.095 \pm 0.013$ & $\best{0.016 \pm 0.007}$ & $0.008 \pm 0.004$ & $0.055 \pm 0.019$ & $\best{0.048 \pm 0.017}$ \\
SIP & $\second{0.089 \pm 0.022}$ & $0.049 \pm 0.021$ & $\second{0.046 \pm 0.012}$ & $0.016 \pm 0.005$ & $0.214 \pm 0.085$ & $0.025 \pm 0.008$ & $\best{0.007 \pm 0.004}$ & $0.057 \pm 0.026$ & $\second{0.067 \pm 0.018}$ \\
TFSVI & $0.292 \pm 0.023$ & $0.069 \pm 0.013$ & $0.063 \pm 0.013$ & $0.013 \pm 0.008$ & $\second{0.088 \pm 0.079}$ & $0.023 \pm 0.005$ & $0.015 \pm 0.004$ & $0.075 \pm 0.017$ & $0.118 \pm 0.072$ \\
FTIP & $0.483 \pm 0.008$ & $0.159 \pm 0.043$ & $0.213 \pm 0.043$ & $0.016 \pm 0.006$ & $0.171 \pm 0.070$ & $0.022 \pm 0.008$ & $0.017 \pm 0.004$ & $0.209 \pm 0.021$ & $0.331 \pm 0.055$ \\
GMVIP & $0.119 \pm 0.065$ & $0.056 \pm 0.031$ & $0.077 \pm 0.026$ & $0.034 \pm 0.010$ & $0.244 \pm 0.044$ & $0.037 \pm 0.010$ & $0.031 \pm 0.005$ & $\second{0.046 \pm 0.012}$ & $0.198 \pm 0.053$ \\
\bottomrule
\end{tabular}%
}
\end{table*}

\subsection{Ablation Studies}\label{app:ablation-studies}

We study the sensitivity of GMVIP to the initialization of the inducing
locations, the number of inducing locations, and the number of prior functions
used to estimate the operator moments.  We use the nine UCI regression
datasets and seeds $0$--$4$, giving 45 runs per setting and 540 runs in total.
Unless it is the quantity being varied, we use $M=100$, a prior bank of
$B=512$, and $k$-means initialization.  The inducing locations $\Z$ and the
prior are subsequently optimized in every run; consequently, the geometry
ablation measures sensitivity to the \emph{initial} placement of $\Z$.

Tables~\ref{tab:gmvip-ablation-initialization}--\ref{tab:gmvip-ablation-B}
report the full per-dataset results as mean $\pm$ standard deviation over the
five seeds.  Lower values are better for RMSE, NLL, CRPS, and the centered
quantile metric (CQM).  Within each table and dataset, the best mean is shown
in purple and the second-best mean in teal.

\paragraph{Inducing geometry.} Table~\ref{tab:gmvip-ablation-initialization}.
The $k$-means initialization is the strongest choice for predictive quality.
It obtains the best mean RMSE on eight of nine datasets and the best NLL and
CRPS on seven.
The grid initialization exhibits a different trade-off: although its
predictive metrics are generally weaker, it achieves the best CQM on six datasets.  Thus,
$k$-means is preferable when predictive accuracy is primary, whereas the grid
initialization can improve marginal calibration.

\paragraph{Number of inducing locations.} Table~\ref{tab:gmvip-ablation-M}.
Increasing the inducing set improves predictive performance over the tested
range.  The largest setting, $M=128$, wins on eight datasets for RMSE and CRPS
and on seven for the NLL metric.
The improvement is already visible at $M=64$, while $M\in\{16,32\}$ generally
underfits relative to the larger sets.  Calibration does not follow the same
trend: $M=16$ obtains the best CQM on five datasets.  The results therefore
show a clear trade-off between accuracy and calibration rather than uniform
dominance as $M$ grows.

\paragraph{Prior bank size.} Table~\ref{tab:gmvip-ablation-B}.
Performance is not monotone in the bank size.  A moderate bank of $B=256$
provides the best aggregate predictive results, with five RMSE wins and four
wins for each of the NLL and CRPS metrics.  Increasing the bank to $B=512$
does not improve consistently across datasets.
Smaller banks occasionally improve CQM---$B=32$ and $B=64$ obtain four and
three calibration wins, respectively---but have substantially poorer
predictive metrics.  Overall, $B=256$ gives the most consistent predictive
performance among the evaluated bank sizes.

\definecolor{ablationBest}{HTML}{D81B60}
\definecolor{ablationSecond}{HTML}{008C95}

\begin{table*}[t]
\centering
\caption{Inducing-location initialization ablation on the nine UCI regression datasets ($\downarrow$). Entries are mean $\pm$ standard deviation across five seeds. The best mean per dataset is shown in \textcolor{ablationBest}{purple}; the second-best is shown in \textcolor{ablationSecond}{teal}. Naval RMSE and CRPS are reported in units of $10^{-4}$.}\label{tab:gmvip-ablation-initialization}
\resizebox{\textwidth}{!}{%
\begin{tabular}{@{}l*{9}{c}@{}}
\toprule
\textbf{RMSE} & \textbf{Boston} & \textbf{Concrete} & \textbf{Energy} & \textbf{Kin8nm} & \textbf{Naval ($\times 10^{-4}$)} & \textbf{Power} & \textbf{Protein} & \textbf{Wine red} & \textbf{Yacht} \\
Grid & $\textcolor{ablationSecond}{\mathbf{3.26 \pm 0.55}}$ & $4.75 \pm 0.55$ & $0.52 \pm 0.06$ & $0.071 \pm 0.001$ & $2.26 \pm 0.53$ & $3.82 \pm 0.12$ & $4.49 \pm 0.03$ & $\textcolor{ablationSecond}{\mathbf{0.62 \pm 0.03}}$ & $0.49 \pm 0.27$ \\
Random subset & $\textcolor{ablationBest}{\mathbf{3.22 \pm 0.85}}$ & $\textcolor{ablationSecond}{\mathbf{4.27 \pm 0.54}}$ & $\textcolor{ablationSecond}{\mathbf{0.51 \pm 0.05}}$ & $\textcolor{ablationSecond}{\mathbf{0.069 \pm 0.001}}$ & $\textcolor{ablationSecond}{\mathbf{1.45 \pm 0.34}}$ & $\textcolor{ablationSecond}{\mathbf{3.79 \pm 0.12}}$ & $\textcolor{ablationSecond}{\mathbf{4.32 \pm 0.04}}$ & $0.62 \pm 0.03$ & $\textcolor{ablationSecond}{\mathbf{0.41 \pm 0.15}}$ \\
$k$-means & $3.32 \pm 0.97$ & $\textcolor{ablationBest}{\mathbf{4.24 \pm 0.74}}$ & $\textcolor{ablationBest}{\mathbf{0.51 \pm 0.05}}$ & $\textcolor{ablationBest}{\mathbf{0.069 \pm 0.001}}$ & $\textcolor{ablationBest}{\mathbf{1.36 \pm 0.19}}$ & $\textcolor{ablationBest}{\mathbf{3.78 \pm 0.12}}$ & $\textcolor{ablationBest}{\mathbf{4.32 \pm 0.04}}$ & $\textcolor{ablationBest}{\mathbf{0.60 \pm 0.03}}$ & $\textcolor{ablationBest}{\mathbf{0.32 \pm 0.11}}$ \\
\midrule
\textbf{NLL} & \textbf{Boston} & \textbf{Concrete} & \textbf{Energy} & \textbf{Kin8nm} & \textbf{Naval} & \textbf{Power} & \textbf{Protein} & \textbf{Wine red} & \textbf{Yacht} \\
Grid & $\textcolor{ablationBest}{\mathbf{2.83 \pm 0.41}}$ & $2.96 \pm 0.12$ & $0.76 \pm 0.06$ & $-1.23 \pm 0.01$ & $-6.36 \pm 0.10$ & $2.76 \pm 0.03$ & $2.92 \pm 0.01$ & $\textcolor{ablationSecond}{\mathbf{0.98 \pm 0.07}}$ & $0.96 \pm 1.48$ \\
Random subset & $\textcolor{ablationSecond}{\mathbf{2.87 \pm 0.58}}$ & $\textcolor{ablationSecond}{\mathbf{2.84 \pm 0.12}}$ & $\textcolor{ablationBest}{\mathbf{0.74 \pm 0.07}}$ & $\textcolor{ablationSecond}{\mathbf{-1.25 \pm 0.02}}$ & $\textcolor{ablationSecond}{\mathbf{-7.08 \pm 0.06}}$ & $\textcolor{ablationSecond}{\mathbf{2.75 \pm 0.03}}$ & $\textcolor{ablationSecond}{\mathbf{2.88 \pm 0.01}}$ & $1.01 \pm 0.10$ & $\textcolor{ablationSecond}{\mathbf{0.35 \pm 0.32}}$ \\
$k$-means & $3.11 \pm 1.03$ & $\textcolor{ablationBest}{\mathbf{2.82 \pm 0.16}}$ & $\textcolor{ablationSecond}{\mathbf{0.75 \pm 0.09}}$ & $\textcolor{ablationBest}{\mathbf{-1.26 \pm 0.02}}$ & $\textcolor{ablationBest}{\mathbf{-7.12 \pm 0.04}}$ & $\textcolor{ablationBest}{\mathbf{2.75 \pm 0.03}}$ & $\textcolor{ablationBest}{\mathbf{2.88 \pm 0.01}}$ & $\textcolor{ablationBest}{\mathbf{0.94 \pm 0.08}}$ & $\textcolor{ablationBest}{\mathbf{0.18 \pm 0.22}}$ \\
\midrule
\textbf{CRPS} & \textbf{Boston} & \textbf{Concrete} & \textbf{Energy} & \textbf{Kin8nm} & \textbf{Naval ($\times 10^{-4}$)} & \textbf{Power} & \textbf{Protein} & \textbf{Wine red} & \textbf{Yacht} \\
Grid & $1.82 \pm 0.23$ & $2.56 \pm 0.26$ & $\textcolor{ablationBest}{\mathbf{0.28 \pm 0.02}}$ & $0.040 \pm 0.001$ & $1.84 \pm 0.25$ & $2.09 \pm 0.06$ & $2.52 \pm 0.02$ & $0.34 \pm 0.02$ & $0.21 \pm 0.08$ \\
Random subset & $\textcolor{ablationBest}{\mathbf{1.73 \pm 0.35}}$ & $\textcolor{ablationSecond}{\mathbf{2.29 \pm 0.27}}$ & $\textcolor{ablationBest}{\mathbf{0.28 \pm 0.02}}$ & $\textcolor{ablationSecond}{\mathbf{0.039 \pm 0.001}}$ & $\textcolor{ablationSecond}{\mathbf{0.96 \pm 0.13}}$ & $\textcolor{ablationSecond}{\mathbf{2.08 \pm 0.06}}$ & $\textcolor{ablationSecond}{\mathbf{2.41 \pm 0.02}}$ & $\textcolor{ablationSecond}{\mathbf{0.34 \pm 0.02}}$ & $\textcolor{ablationSecond}{\mathbf{0.18 \pm 0.05}}$ \\
$k$-means & $\textcolor{ablationSecond}{\mathbf{1.79 \pm 0.37}}$ & $\textcolor{ablationBest}{\mathbf{2.26 \pm 0.35}}$ & $\textcolor{ablationBest}{\mathbf{0.28 \pm 0.02}}$ & $\textcolor{ablationBest}{\mathbf{0.038 \pm 0.001}}$ & $\textcolor{ablationBest}{\mathbf{0.92 \pm 0.07}}$ & $\textcolor{ablationBest}{\mathbf{2.07 \pm 0.06}}$ & $\textcolor{ablationBest}{\mathbf{2.41 \pm 0.02}}$ & $\textcolor{ablationBest}{\mathbf{0.33 \pm 0.01}}$ & $\textcolor{ablationBest}{\mathbf{0.16 \pm 0.04}}$ \\
\midrule
\textbf{CQM} & \textbf{Boston} & \textbf{Concrete} & \textbf{Energy} & \textbf{Kin8nm} & \textbf{Naval} & \textbf{Power} & \textbf{Protein} & \textbf{Wine red} & \textbf{Yacht} \\
Grid & $\textcolor{ablationBest}{\mathbf{0.117 \pm 0.013}}$ & $\textcolor{ablationBest}{\mathbf{0.043 \pm 0.010}}$ & $0.085 \pm 0.041$ & $\textcolor{ablationBest}{\mathbf{0.031 \pm 0.011}}$ & $0.301 \pm 0.044$ & $\textcolor{ablationBest}{\mathbf{0.036 \pm 0.011}}$ & $\textcolor{ablationBest}{\mathbf{0.020 \pm 0.007}}$ & $\textcolor{ablationBest}{\mathbf{0.045 \pm 0.029}}$ & $\textcolor{ablationSecond}{\mathbf{0.194 \pm 0.048}}$ \\
Random subset & $0.122 \pm 0.060$ & $\textcolor{ablationSecond}{\mathbf{0.050 \pm 0.028}}$ & $\textcolor{ablationBest}{\mathbf{0.076 \pm 0.027}}$ & $\textcolor{ablationSecond}{\mathbf{0.034 \pm 0.010}}$ & $\textcolor{ablationBest}{\mathbf{0.232 \pm 0.068}}$ & $0.038 \pm 0.011$ & $0.033 \pm 0.005$ & $0.051 \pm 0.016$ & $\textcolor{ablationBest}{\mathbf{0.183 \pm 0.022}}$ \\
$k$-means & $\textcolor{ablationSecond}{\mathbf{0.119 \pm 0.065}}$ & $0.056 \pm 0.031$ & $\textcolor{ablationSecond}{\mathbf{0.077 \pm 0.026}}$ & $0.034 \pm 0.010$ & $\textcolor{ablationSecond}{\mathbf{0.244 \pm 0.044}}$ & $\textcolor{ablationSecond}{\mathbf{0.037 \pm 0.010}}$ & $\textcolor{ablationSecond}{\mathbf{0.031 \pm 0.005}}$ & $\textcolor{ablationSecond}{\mathbf{0.046 \pm 0.012}}$ & $0.198 \pm 0.053$ \\
\bottomrule
\end{tabular}%
}
\end{table*}

\begin{table*}[t]
\centering
\caption{Number of inducing locations ablation on the nine UCI regression datasets ($\downarrow$). Entries are mean $\pm$ standard deviation across five seeds. The best mean per dataset is shown in \textcolor{ablationBest}{purple}; the second-best is shown in \textcolor{ablationSecond}{teal}. Naval RMSE and CRPS are reported in units of $10^{-4}$.}\label{tab:gmvip-ablation-M}
\resizebox{\textwidth}{!}{%
\begin{tabular}{@{}l*{9}{c}@{}}
\toprule
\textbf{RMSE} & \textbf{Boston} & \textbf{Concrete} & \textbf{Energy} & \textbf{Kin8nm} & \textbf{Naval ($\times 10^{-4}$)} & \textbf{Power} & \textbf{Protein} & \textbf{Wine red} & \textbf{Yacht} \\
$M=16$ & $3.50 \pm 1.30$ & $4.75 \pm 0.44$ & $0.51 \pm 0.05$ & $0.076 \pm 0.002$ & $1.98 \pm 0.19$ & $3.94 \pm 0.12$ & $4.57 \pm 0.03$ & $\textcolor{ablationSecond}{\mathbf{0.61 \pm 0.03}}$ & $\textcolor{ablationSecond}{\mathbf{0.46 \pm 0.14}}$ \\
$M=32$ & $\textcolor{ablationSecond}{\mathbf{3.48 \pm 0.92}}$ & $\textcolor{ablationSecond}{\mathbf{4.39 \pm 0.54}}$ & $0.52 \pm 0.05$ & $0.072 \pm 0.001$ & $1.76 \pm 0.16$ & $3.86 \pm 0.14$ & $4.47 \pm 0.04$ & $0.61 \pm 0.03$ & $0.60 \pm 0.22$ \\
$M=64$ & $\textcolor{ablationBest}{\mathbf{3.24 \pm 1.34}}$ & $4.55 \pm 0.40$ & $\textcolor{ablationSecond}{\mathbf{0.51 \pm 0.05}}$ & $\textcolor{ablationSecond}{\mathbf{0.070 \pm 0.001}}$ & $\textcolor{ablationSecond}{\mathbf{1.56 \pm 0.46}}$ & $\textcolor{ablationSecond}{\mathbf{3.84 \pm 0.12}}$ & $\textcolor{ablationSecond}{\mathbf{4.37 \pm 0.04}}$ & $0.61 \pm 0.04$ & $0.58 \pm 0.36$ \\
$M=128$ & $3.51 \pm 1.19$ & $\textcolor{ablationBest}{\mathbf{4.27 \pm 0.73}}$ & $\textcolor{ablationBest}{\mathbf{0.51 \pm 0.05}}$ & $\textcolor{ablationBest}{\mathbf{0.068 \pm 0.001}}$ & $\textcolor{ablationBest}{\mathbf{1.54 \pm 0.38}}$ & $\textcolor{ablationBest}{\mathbf{3.76 \pm 0.12}}$ & $\textcolor{ablationBest}{\mathbf{4.29 \pm 0.05}}$ & $\textcolor{ablationBest}{\mathbf{0.60 \pm 0.02}}$ & $\textcolor{ablationBest}{\mathbf{0.40 \pm 0.24}}$ \\
\midrule
\textbf{NLL} & \textbf{Boston} & \textbf{Concrete} & \textbf{Energy} & \textbf{Kin8nm} & \textbf{Naval} & \textbf{Power} & \textbf{Protein} & \textbf{Wine red} & \textbf{Yacht} \\
$M=16$ & $\textcolor{ablationSecond}{\mathbf{3.40 \pm 1.77}}$ & $2.94 \pm 0.11$ & $0.75 \pm 0.05$ & $-1.16 \pm 0.03$ & $-6.54 \pm 0.06$ & $2.79 \pm 0.03$ & $2.94 \pm 0.01$ & $\textcolor{ablationSecond}{\mathbf{0.95 \pm 0.07}}$ & $\textcolor{ablationBest}{\mathbf{0.46 \pm 0.16}}$ \\
$M=32$ & $\textcolor{ablationBest}{\mathbf{3.31 \pm 1.18}}$ & $\textcolor{ablationSecond}{\mathbf{2.86 \pm 0.11}}$ & $0.75 \pm 0.06$ & $-1.21 \pm 0.01$ & $-6.78 \pm 0.06$ & $2.77 \pm 0.03$ & $2.92 \pm 0.01$ & $0.98 \pm 0.06$ & $0.66 \pm 0.42$ \\
$M=64$ & $3.47 \pm 2.29$ & $2.91 \pm 0.10$ & $\textcolor{ablationSecond}{\mathbf{0.74 \pm 0.06}}$ & $\textcolor{ablationSecond}{\mathbf{-1.24 \pm 0.01}}$ & $\textcolor{ablationSecond}{\mathbf{-6.99 \pm 0.08}}$ & $\textcolor{ablationSecond}{\mathbf{2.76 \pm 0.03}}$ & $\textcolor{ablationSecond}{\mathbf{2.89 \pm 0.01}}$ & $0.99 \pm 0.09$ & $0.60 \pm 0.55$ \\
$M=128$ & $3.48 \pm 1.72$ & $\textcolor{ablationBest}{\mathbf{2.85 \pm 0.16}}$ & $\textcolor{ablationBest}{\mathbf{0.74 \pm 0.08}}$ & $\textcolor{ablationBest}{\mathbf{-1.28 \pm 0.01}}$ & $\textcolor{ablationBest}{\mathbf{-7.11 \pm 0.09}}$ & $\textcolor{ablationBest}{\mathbf{2.74 \pm 0.03}}$ & $\textcolor{ablationBest}{\mathbf{2.87 \pm 0.01}}$ & $\textcolor{ablationBest}{\mathbf{0.95 \pm 0.05}}$ & $\textcolor{ablationSecond}{\mathbf{0.46 \pm 0.86}}$ \\
\midrule
\textbf{CRPS} & \textbf{Boston} & \textbf{Concrete} & \textbf{Energy} & \textbf{Kin8nm} & \textbf{Naval ($\times 10^{-4}$)} & \textbf{Power} & \textbf{Protein} & \textbf{Wine red} & \textbf{Yacht} \\
$M=16$ & $1.90 \pm 0.66$ & $2.52 \pm 0.23$ & $0.28 \pm 0.02$ & $0.042 \pm 0.001$ & $1.55 \pm 0.10$ & $2.17 \pm 0.06$ & $2.56 \pm 0.01$ & $\textcolor{ablationSecond}{\mathbf{0.34 \pm 0.01}}$ & $\textcolor{ablationSecond}{\mathbf{0.22 \pm 0.04}}$ \\
$M=32$ & $1.91 \pm 0.36$ & $\textcolor{ablationSecond}{\mathbf{2.34 \pm 0.24}}$ & $0.28 \pm 0.02$ & $0.040 \pm 0.001$ & $1.25 \pm 0.08$ & $2.13 \pm 0.07$ & $2.51 \pm 0.02$ & $0.34 \pm 0.02$ & $0.25 \pm 0.07$ \\
$M=64$ & $\textcolor{ablationBest}{\mathbf{1.65 \pm 0.43}}$ & $2.42 \pm 0.19$ & $\textcolor{ablationSecond}{\mathbf{0.28 \pm 0.02}}$ & $\textcolor{ablationSecond}{\mathbf{0.039 \pm 0.000}}$ & $\textcolor{ablationSecond}{\mathbf{1.05 \pm 0.17}}$ & $\textcolor{ablationSecond}{\mathbf{2.11 \pm 0.06}}$ & $\textcolor{ablationSecond}{\mathbf{2.44 \pm 0.02}}$ & $0.34 \pm 0.02$ & $0.24 \pm 0.11$ \\
$M=128$ & $\textcolor{ablationSecond}{\mathbf{1.85 \pm 0.41}}$ & $\textcolor{ablationBest}{\mathbf{2.31 \pm 0.35}}$ & $\textcolor{ablationBest}{\mathbf{0.28 \pm 0.02}}$ & $\textcolor{ablationBest}{\mathbf{0.038 \pm 0.000}}$ & $\textcolor{ablationBest}{\mathbf{0.97 \pm 0.16}}$ & $\textcolor{ablationBest}{\mathbf{2.06 \pm 0.05}}$ & $\textcolor{ablationBest}{\mathbf{2.39 \pm 0.03}}$ & $\textcolor{ablationBest}{\mathbf{0.33 \pm 0.01}}$ & $\textcolor{ablationBest}{\mathbf{0.17 \pm 0.07}}$ \\
\midrule
\textbf{CQM} & \textbf{Boston} & \textbf{Concrete} & \textbf{Energy} & \textbf{Kin8nm} & \textbf{Naval} & \textbf{Power} & \textbf{Protein} & \textbf{Wine red} & \textbf{Yacht} \\
$M=16$ & $\textcolor{ablationSecond}{\mathbf{0.133 \pm 0.071}}$ & $0.047 \pm 0.017$ & $0.085 \pm 0.031$ & $\textcolor{ablationBest}{\mathbf{0.028 \pm 0.006}}$ & $0.294 \pm 0.014$ & $\textcolor{ablationBest}{\mathbf{0.027 \pm 0.008}}$ & $\textcolor{ablationBest}{\mathbf{0.017 \pm 0.005}}$ & $\textcolor{ablationBest}{\mathbf{0.041 \pm 0.009}}$ & $\textcolor{ablationBest}{\mathbf{0.168 \pm 0.034}}$ \\
$M=32$ & $0.161 \pm 0.062$ & $0.050 \pm 0.023$ & $0.074 \pm 0.027$ & $\textcolor{ablationSecond}{\mathbf{0.034 \pm 0.009}}$ & $0.262 \pm 0.022$ & $\textcolor{ablationSecond}{\mathbf{0.032 \pm 0.013}}$ & $\textcolor{ablationSecond}{\mathbf{0.020 \pm 0.007}}$ & $0.047 \pm 0.010$ & $0.186 \pm 0.037$ \\
$M=64$ & $\textcolor{ablationBest}{\mathbf{0.107 \pm 0.034}}$ & $\textcolor{ablationBest}{\mathbf{0.043 \pm 0.019}}$ & $\textcolor{ablationBest}{\mathbf{0.070 \pm 0.029}}$ & $0.034 \pm 0.009$ & $\textcolor{ablationSecond}{\mathbf{0.258 \pm 0.058}}$ & $0.034 \pm 0.010$ & $0.028 \pm 0.007$ & $0.055 \pm 0.023$ & $\textcolor{ablationSecond}{\mathbf{0.184 \pm 0.043}}$ \\
$M=128$ & $0.138 \pm 0.055$ & $\textcolor{ablationSecond}{\mathbf{0.044 \pm 0.029}}$ & $\textcolor{ablationSecond}{\mathbf{0.071 \pm 0.026}}$ & $0.036 \pm 0.010$ & $\textcolor{ablationBest}{\mathbf{0.203 \pm 0.053}}$ & $0.038 \pm 0.010$ & $0.032 \pm 0.004$ & $\textcolor{ablationSecond}{\mathbf{0.045 \pm 0.011}}$ & $0.207 \pm 0.020$ \\
\bottomrule
\end{tabular}%
}
\end{table*}

\begin{table*}[t]
\centering
\caption{Prior-bank size ablation on the nine UCI regression datasets ($\downarrow$). Entries are mean $\pm$ standard deviation across five seeds. The best mean per dataset is shown in \textcolor{ablationBest}{purple}; the second-best is shown in \textcolor{ablationSecond}{teal}. Naval RMSE and CRPS are reported in units of $10^{-4}$.}\label{tab:gmvip-ablation-B}
\resizebox{\textwidth}{!}{%
\begin{tabular}{@{}l*{9}{c}@{}}
\toprule
\textbf{RMSE} & \textbf{Boston} & \textbf{Concrete} & \textbf{Energy} & \textbf{Kin8nm} & \textbf{Naval ($\times 10^{-4}$)} & \textbf{Power} & \textbf{Protein} & \textbf{Wine red} & \textbf{Yacht} \\
$B=32$ & $4.15 \pm 1.28$ & $4.47 \pm 0.89$ & $0.51 \pm 0.06$ & $0.073 \pm 0.003$ & $\textcolor{ablationBest}{\mathbf{1.29 \pm 0.17}}$ & $3.81 \pm 0.12$ & $4.36 \pm 0.05$ & $0.68 \pm 0.01$ & $0.39 \pm 0.17$ \\
$B=64$ & $3.71 \pm 1.17$ & $\textcolor{ablationSecond}{\mathbf{4.37 \pm 0.47}}$ & $0.51 \pm 0.05$ & $0.071 \pm 0.001$ & $\textcolor{ablationSecond}{\mathbf{1.32 \pm 0.20}}$ & $\textcolor{ablationSecond}{\mathbf{3.78 \pm 0.12}}$ & $4.33 \pm 0.05$ & $0.65 \pm 0.02$ & $0.44 \pm 0.22$ \\
$B=128$ & $3.86 \pm 1.25$ & $4.37 \pm 0.70$ & $0.52 \pm 0.04$ & $0.070 \pm 0.002$ & $1.39 \pm 0.18$ & $3.78 \pm 0.14$ & $\textcolor{ablationSecond}{\mathbf{4.31 \pm 0.05}}$ & $0.65 \pm 0.01$ & $0.76 \pm 0.88$ \\
$B=256$ & $\textcolor{ablationSecond}{\mathbf{3.68 \pm 1.35}}$ & $4.40 \pm 0.60$ & $\textcolor{ablationBest}{\mathbf{0.51 \pm 0.05}}$ & $\textcolor{ablationBest}{\mathbf{0.069 \pm 0.001}}$ & $1.52 \pm 0.19$ & $\textcolor{ablationBest}{\mathbf{3.77 \pm 0.13}}$ & $\textcolor{ablationBest}{\mathbf{4.31 \pm 0.04}}$ & $\textcolor{ablationSecond}{\mathbf{0.64 \pm 0.03}}$ & $\textcolor{ablationSecond}{\mathbf{0.33 \pm 0.12}}$ \\
$B=512$ & $\textcolor{ablationBest}{\mathbf{3.32 \pm 0.97}}$ & $\textcolor{ablationBest}{\mathbf{4.24 \pm 0.74}}$ & $\textcolor{ablationSecond}{\mathbf{0.51 \pm 0.05}}$ & $\textcolor{ablationSecond}{\mathbf{0.069 \pm 0.001}}$ & $1.36 \pm 0.19$ & $3.78 \pm 0.12$ & $4.32 \pm 0.04$ & $\textcolor{ablationBest}{\mathbf{0.60 \pm 0.03}}$ & $\textcolor{ablationBest}{\mathbf{0.32 \pm 0.11}}$ \\
\midrule
\textbf{NLL} & \textbf{Boston} & \textbf{Concrete} & \textbf{Energy} & \textbf{Kin8nm} & \textbf{Naval} & \textbf{Power} & \textbf{Protein} & \textbf{Wine red} & \textbf{Yacht} \\
$B=32$ & $6.30 \pm 4.87$ & $2.94 \pm 0.26$ & $\textcolor{ablationBest}{\mathbf{0.72 \pm 0.10}}$ & $-1.21 \pm 0.03$ & $\textcolor{ablationBest}{\mathbf{-7.29 \pm 0.08}}$ & $2.76 \pm 0.03$ & $2.89 \pm 0.01$ & $1.34 \pm 0.07$ & $0.31 \pm 0.31$ \\
$B=64$ & $4.52 \pm 2.89$ & $2.86 \pm 0.11$ & $\textcolor{ablationSecond}{\mathbf{0.72 \pm 0.08}}$ & $-1.23 \pm 0.02$ & $\textcolor{ablationSecond}{\mathbf{-7.26 \pm 0.06}}$ & $2.75 \pm 0.03$ & $2.88 \pm 0.01$ & $1.21 \pm 0.05$ & $0.49 \pm 0.20$ \\
$B=128$ & $4.36 \pm 2.58$ & $2.88 \pm 0.18$ & $0.75 \pm 0.06$ & $-1.24 \pm 0.03$ & $-7.20 \pm 0.03$ & $\textcolor{ablationSecond}{\mathbf{2.75 \pm 0.04}}$ & $\textcolor{ablationSecond}{\mathbf{2.88 \pm 0.01}}$ & $1.16 \pm 0.02$ & $0.41 \pm 0.62$ \\
$B=256$ & $\textcolor{ablationSecond}{\mathbf{3.75 \pm 1.84}}$ & $\textcolor{ablationSecond}{\mathbf{2.86 \pm 0.13}}$ & $0.73 \pm 0.07$ & $\textcolor{ablationBest}{\mathbf{-1.26 \pm 0.01}}$ & $-7.13 \pm 0.06$ & $\textcolor{ablationBest}{\mathbf{2.75 \pm 0.03}}$ & $\textcolor{ablationBest}{\mathbf{2.88 \pm 0.01}}$ & $\textcolor{ablationSecond}{\mathbf{1.09 \pm 0.08}}$ & $\textcolor{ablationSecond}{\mathbf{0.19 \pm 0.31}}$ \\
$B=512$ & $\textcolor{ablationBest}{\mathbf{3.11 \pm 1.03}}$ & $\textcolor{ablationBest}{\mathbf{2.82 \pm 0.16}}$ & $0.75 \pm 0.09$ & $\textcolor{ablationSecond}{\mathbf{-1.26 \pm 0.02}}$ & $-7.12 \pm 0.04$ & $2.75 \pm 0.03$ & $2.88 \pm 0.01$ & $\textcolor{ablationBest}{\mathbf{0.94 \pm 0.08}}$ & $\textcolor{ablationBest}{\mathbf{0.18 \pm 0.22}}$ \\
\midrule
\textbf{CRPS} & \textbf{Boston} & \textbf{Concrete} & \textbf{Energy} & \textbf{Kin8nm} & \textbf{Naval ($\times 10^{-4}$)} & \textbf{Power} & \textbf{Protein} & \textbf{Wine red} & \textbf{Yacht} \\
$B=32$ & $2.32 \pm 0.52$ & $2.38 \pm 0.38$ & $\textcolor{ablationBest}{\mathbf{0.27 \pm 0.03}}$ & $0.041 \pm 0.001$ & $\textcolor{ablationBest}{\mathbf{0.81 \pm 0.08}}$ & $2.09 \pm 0.05$ & $2.43 \pm 0.03$ & $0.38 \pm 0.01$ & $0.19 \pm 0.07$ \\
$B=64$ & $\textcolor{ablationSecond}{\mathbf{2.00 \pm 0.40}}$ & $2.33 \pm 0.25$ & $0.28 \pm 0.02$ & $0.039 \pm 0.001$ & $\textcolor{ablationSecond}{\mathbf{0.83 \pm 0.08}}$ & $\textcolor{ablationSecond}{\mathbf{2.07 \pm 0.06}}$ & $2.41 \pm 0.03$ & $0.36 \pm 0.00$ & $0.20 \pm 0.06$ \\
$B=128$ & $2.09 \pm 0.54$ & $\textcolor{ablationSecond}{\mathbf{2.31 \pm 0.31}}$ & $0.28 \pm 0.02$ & $0.039 \pm 0.001$ & $0.88 \pm 0.06$ & $2.07 \pm 0.07$ & $\textcolor{ablationSecond}{\mathbf{2.40 \pm 0.02}}$ & $0.36 \pm 0.01$ & $0.28 \pm 0.24$ \\
$B=256$ & $2.01 \pm 0.47$ & $2.33 \pm 0.29$ & $\textcolor{ablationSecond}{\mathbf{0.28 \pm 0.02}}$ & $\textcolor{ablationBest}{\mathbf{0.038 \pm 0.000}}$ & $0.96 \pm 0.08$ & $\textcolor{ablationBest}{\mathbf{2.07 \pm 0.06}}$ & $\textcolor{ablationBest}{\mathbf{2.40 \pm 0.03}}$ & $\textcolor{ablationSecond}{\mathbf{0.35 \pm 0.02}}$ & $\textcolor{ablationBest}{\mathbf{0.15 \pm 0.03}}$ \\
$B=512$ & $\textcolor{ablationBest}{\mathbf{1.79 \pm 0.37}}$ & $\textcolor{ablationBest}{\mathbf{2.26 \pm 0.35}}$ & $0.28 \pm 0.02$ & $\textcolor{ablationSecond}{\mathbf{0.038 \pm 0.001}}$ & $0.92 \pm 0.07$ & $2.07 \pm 0.06$ & $2.41 \pm 0.02$ & $\textcolor{ablationBest}{\mathbf{0.33 \pm 0.01}}$ & $\textcolor{ablationSecond}{\mathbf{0.16 \pm 0.04}}$ \\
\midrule
\textbf{CQM} & \textbf{Boston} & \textbf{Concrete} & \textbf{Energy} & \textbf{Kin8nm} & \textbf{Naval} & \textbf{Power} & \textbf{Protein} & \textbf{Wine red} & \textbf{Yacht} \\
$B=32$ & $0.210 \pm 0.062$ & $0.046 \pm 0.012$ & $\textcolor{ablationBest}{\mathbf{0.056 \pm 0.028}}$ & $\textcolor{ablationSecond}{\mathbf{0.024 \pm 0.009}}$ & $\textcolor{ablationBest}{\mathbf{0.200 \pm 0.029}}$ & $\textcolor{ablationBest}{\mathbf{0.033 \pm 0.011}}$ & $\textcolor{ablationSecond}{\mathbf{0.029 \pm 0.008}}$ & $0.104 \pm 0.026$ & $\textcolor{ablationBest}{\mathbf{0.171 \pm 0.064}}$ \\
$B=64$ & $0.184 \pm 0.041$ & $\textcolor{ablationBest}{\mathbf{0.033 \pm 0.010}}$ & $0.067 \pm 0.027$ & $\textcolor{ablationBest}{\mathbf{0.024 \pm 0.007}}$ & $0.205 \pm 0.036$ & $\textcolor{ablationSecond}{\mathbf{0.034 \pm 0.010}}$ & $\textcolor{ablationBest}{\mathbf{0.028 \pm 0.007}}$ & $0.076 \pm 0.012$ & $0.201 \pm 0.061$ \\
$B=128$ & $0.166 \pm 0.052$ & $0.042 \pm 0.009$ & $\textcolor{ablationSecond}{\mathbf{0.066 \pm 0.026}}$ & $0.024 \pm 0.012$ & $0.218 \pm 0.039$ & $0.036 \pm 0.010$ & $0.031 \pm 0.007$ & $0.082 \pm 0.009$ & $0.219 \pm 0.040$ \\
$B=256$ & $\textcolor{ablationSecond}{\mathbf{0.163 \pm 0.021}}$ & $\textcolor{ablationSecond}{\mathbf{0.038 \pm 0.015}}$ & $0.075 \pm 0.025$ & $0.030 \pm 0.009$ & $\textcolor{ablationSecond}{\mathbf{0.204 \pm 0.038}}$ & $0.037 \pm 0.011$ & $0.032 \pm 0.007$ & $\textcolor{ablationSecond}{\mathbf{0.063 \pm 0.023}}$ & $\textcolor{ablationSecond}{\mathbf{0.194 \pm 0.032}}$ \\
$B=512$ & $\textcolor{ablationBest}{\mathbf{0.119 \pm 0.065}}$ & $0.056 \pm 0.031$ & $0.077 \pm 0.026$ & $0.034 \pm 0.010$ & $0.244 \pm 0.044$ & $0.037 \pm 0.010$ & $0.031 \pm 0.005$ & $\textcolor{ablationBest}{\mathbf{0.046 \pm 0.012}}$ & $0.198 \pm 0.053$ \\
\bottomrule
\end{tabular}%
}
\end{table*}

\paragraph{Residual-path ablation.}
We compare full GMVIP with an inducing-only variant that retains the same
empirical mean, inducing covariance, replacement operator, variational
family, likelihood, and optimization protocol, but removes the coherent
residual prior path.  Let
\(\Z=(\vz_1,\ldots,\vz_M)^\top\) denote the inducing matrix and let
\[
g(\Z)
=
\bigl(g(\vz_1),\ldots,g(\vz_M)\bigr)^\top.
\]
For an input \(\vx\), the residual path is
\[
\vr_g(\vx)
=
g(\vx)-\hat\vmu(\vx)
-
\Psib_{\Z}(\vx)
\left(
g(\Z)-\hat\vmu(\Z)
\right).
\]
The full and inducing-only models are therefore
\[
f_{\mathrm{full}}(\vx)
=
\hat\vmu(\vx)
+
\Psib_{\Z}(\vx)\Dmat\va
+
\vr_g(\vx),
\qquad
f_{\mathrm{ind}}(\vx)
=
\hat\vmu(\vx)
+
\Psib_{\Z}(\vx)\Dmat\va,
\]
where \(\va\) is independent of \(g\) and
\[
q_{\phi}(\va)
=
\N\!\left(\vm_a,\Smat_a\right),
\qquad
\Dmat\Dmat^{\top}
=
\hat\K(\Z,\Z).
\]
Consequently,
\[
\Cov\!\left[\vf_{\mathrm{full}}(\X)\right]
=
\Cov\!\left[\vf_{\mathrm{ind}}(\X)\right]
+
\Cov\!\left[\vr_g(\X)\right].
\]
The inducing-only model is thus restricted to a rank-\(M\) stochastic
representation and omits the residual uncertainty outside the inducing
subspace.

The test results in Table~\ref{tab:uci-gmvip-path-ablation} show that this
restriction is adequate on some datasets:
inducing-only GMVIP improves all four metrics on Naval, Power, and Protein.
Their train--test NLL gaps also remain small; for example, inducing-only NLL
changes from \(-7.59\) on the Naval training set to \(-7.58\) on the test set,
and from \(2.82\) to \(2.86\) on Protein. On these datasets, the learned
inducing subspace appears sufficient to represent both the predictive mean
and uncertainty.

In contrast, the inducing-only model exhibits pronounced overfitting on
Boston, Concrete, Wine Red, and Yacht. On Boston, it achieves a training RMSE
of \(0.05\) and training NLL of \(-1.38\), but its test RMSE and NLL deteriorate
to \(4.09\) and \(2083.56\), respectively. Full GMVIP has a substantially
smaller discrepancy, with train/test RMSE \(0.84/3.32\) and train/test NLL
\(1.55/3.11\). Similar behavior occurs on Wine Red, where inducing-only
train/test NLL changes from \(-0.87\) to \(26.66\), and on Yacht, where it
changes from \(-1.18\) to \(6.16\). On Concrete, inducing-only fits the
training data more closely than full GMVIP---training RMSE \(1.66\) versus
\(2.75\)---but generalizes substantially worse, with test RMSE \(5.39\)
versus \(4.24\).

The overfitting cases also have much larger coefficient-space
regularization terms. The final inducing-coefficient KL increases from
\(176\) to \(481\) on Boston, from \(237\) to \(417\) on Wine Red, and from
\(117\) to \(253\) on Yacht when the residual path is removed. This indicates
that the inducing-only posterior moves further from its prior in order to
explain variation that would otherwise be carried by \(r_g(\vx)\).
Together with the omitted residual covariance, this produces excessively
concentrated predictive distributions: point errors increase moderately,
whereas test NLL can deteriorate by several orders of magnitude. Overall,
the residual path acts both as an additional source of predictive
uncertainty and as a structural regularizer against overfitting the
finite-dimensional inducing representation.

The residual path is not uniformly beneficial, but on several datasets it
acts as a structural regularizer that prevents severe overconfidence and
poor test likelihood.

\begin{table*}[t]
\centering
\scriptsize
\setlength{\tabcolsep}{3.5pt}
\renewcommand{\arraystretch}{0.92}
\caption{GMVIP path ablation on UCI regression ($\downarrow$). Entries are mean $\pm$ standard deviation across five seeds. The full-path model retains the residual prior path, whereas the inducing-only variant removes it. The best mean per dataset is shown in bold. Naval RMSE and CRPS are reported in units of $10^{-4}$.}
\label{tab:uci-gmvip-path-ablation}
\resizebox{\textwidth}{!}{%
\begin{tabular}{lccccccccc}
\toprule
\textbf{RMSE} & Boston & Concrete & Energy & Kin8nm & Naval ($\times 10^{-4}$) & Power & Protein & Wine Red & Yacht \\
\midrule
Full & $\bm{3.32 \pm 0.97}$ & $\bm{4.24 \pm 0.74}$ & $0.51 \pm 0.05$ & $\bm{0.069 \pm 0.001}$ & $1.36 \pm 0.19$ & $3.78 \pm 0.12$ & $4.32 \pm 0.04$ & $\bm{0.60 \pm 0.03}$ & $\bm{0.32 \pm 0.11}$ \\
Induc. only & $4.09 \pm 1.10$ & $5.39 \pm 0.65$ & $\bm{0.45 \pm 0.03}$ & $0.085 \pm 0.003$ & $\bm{1.16 \pm 0.25}$ & $\bm{3.72 \pm 0.08}$ & $\bm{4.22 \pm 0.05}$ & $0.87 \pm 0.05$ & $0.46 \pm 0.15$ \\
\midrule
\textbf{NLL} & Boston & Concrete & Energy & Kin8nm & Naval & Power & Protein & Wine Red & Yacht \\
\midrule
Full & $\bm{3.11 \pm 1.03}$ & $\bm{2.82 \pm 0.16}$ & $\bm{0.75 \pm 0.09}$ & $\bm{-1.26 \pm 0.02}$ & $-7.12 \pm 0.04$ & $2.75 \pm 0.03$ & $2.88 \pm 0.01$ & $\bm{0.94 \pm 0.08}$ & $\bm{0.18 \pm 0.22}$ \\
Induc. only & $2083.56 \pm 1408.57$ & $5.57 \pm 1.00$ & $1.49 \pm 0.43$ & $-1.02 \pm 0.04$ & $\bm{-7.58 \pm 0.12}$ & $\bm{2.73 \pm 0.02}$ & $\bm{2.86 \pm 0.01}$ & $26.66 \pm 2.95$ & $6.16 \pm 5.43$ \\
\midrule
\textbf{CRPS} & Boston & Concrete & Energy & Kin8nm & Naval ($\times 10^{-4}$) & Power & Protein & Wine Red & Yacht \\
\midrule
Full & $\bm{1.79 \pm 0.37}$ & $\bm{2.26 \pm 0.35}$ & $0.28 \pm 0.02$ & $\bm{0.038 \pm 0.001}$ & $0.92 \pm 0.07$ & $2.07 \pm 0.06$ & $2.41 \pm 0.02$ & $\bm{0.33 \pm 0.01}$ & $\bm{0.16 \pm 0.04}$ \\
Induc. only & $2.86 \pm 0.40$ & $3.06 \pm 0.27$ & $\bm{0.24 \pm 0.02}$ & $0.048 \pm 0.002$ & $\bm{0.64 \pm 0.11}$ & $\bm{2.02 \pm 0.04}$ & $\bm{2.33 \pm 0.03}$ & $0.56 \pm 0.03$ & $0.20 \pm 0.06$ \\
\midrule
\textbf{CQM} & Boston & Concrete & Energy & Kin8nm & Naval & Power & Protein & Wine Red & Yacht \\
\midrule
Full & $\bm{0.119 \pm 0.065}$ & $\bm{0.056 \pm 0.031}$ & $\bm{0.077 \pm 0.026}$ & $\bm{0.034 \pm 0.010}$ & $0.244 \pm 0.044$ & $0.037 \pm 0.010$ & $0.031 \pm 0.005$ & $\bm{0.046 \pm 0.012}$ & $0.198 \pm 0.053$ \\
Induc. only & $0.483 \pm 0.009$ & $0.167 \pm 0.011$ & $0.139 \pm 0.037$ & $0.040 \pm 0.011$ & $\bm{0.166 \pm 0.041}$ & $\bm{0.021 \pm 0.008}$ & $\bm{0.029 \pm 0.004}$ & $0.302 \pm 0.012$ & $\bm{0.109 \pm 0.024}$ \\
\bottomrule
\end{tabular}
}
\end{table*}

\paragraph{Frozen-prior ablation.}
We study whether learning the implicit BNN prior is necessary for VIP, FTIP,
and GMVIP. For each method, we compare a tunable-prior variant with a matched
frozen-prior variant. If
$g_{\vtheta}(\vx)$ denotes the BNN prior, the two treatments differ
only in whether $\vtheta$ is optimized:
\[
    \vtheta=\vtheta_0
    \quad\text{(frozen prior)},
    \qquad
    (\vtheta,\vvartheta)
    \in \arg\max_{\vtheta,\vvartheta}
    \mathcal{L}(\vtheta,\vvartheta)
    \quad\text{(tunable prior)},
\]
where $\vvartheta$ collects the remaining variational, likelihood, and,
when applicable, inducing parameters. All other settings are matched,
including data splits, model architecture, inducing configuration, Monte
Carlo budgets, optimization schedule, and seeds. Table~\ref{tab:uci-frozen-prior-ablation}
reports RMSE, NLL, CRPS, and CQM over the nine UCI datasets.

Prior learning is especially important for GMVIP. As shown in
Table~\ref{tab:uci-frozen-prior-ablation}, tunable-prior GMVIP outperforms its
frozen counterpart on every dataset and every reported metric. The difference
is substantial on Concrete, where RMSE increases from $4.24$ to $7.34$ when
the prior is frozen, and on Energy, where it increases from $0.51$ to $2.80$.
The largest degradation occurs on Yacht: RMSE increases from $0.32$ to $4.11$
and NLL from $0.18$ to $3.28$. These results indicate that the empirical
GMVIP operator cannot, in general, compensate for a poorly adapted function
prior merely by learning the inducing locations and coefficient posterior.

The behavior of VIP and FTIP is more nuanced. Their tunable-prior variants
obtain lower test RMSE on eight of the nine datasets; the only exception is
Wine Red, where freezing the prior reduces RMSE from $0.79$ to $0.70$ for VIP
and from $0.82$ to $0.70$ for FTIP. The final training metrics nevertheless
show clear overfitting on several datasets. On Boston, VIP attains
train/test RMSEs of $0.055/6.49$ and train/test NLLs of $-1.41/4245.41$;
FTIP exhibits the same failure, with RMSEs of $0.135/6.68$ and NLLs of
$-0.40/1789.45$. On Yacht, the corresponding train/test gaps are
$0.045/0.84$ in RMSE and $-1.62/96.61$ in NLL for VIP, and $0.100/1.17$ and
$-0.84/56.66$ for FTIP. Wine Red shows a milder version of the same pattern.
Thus, the extreme test NLLs are accompanied by near-interpolation of the
training data and very favorable training NLLs. Learning the prior can make
VIP and FTIP severely under-dispersed: the predictive distribution contracts
around the training observations and assigns negligible mass to modest
held-out residuals.

Freezing the prior removes this train--test discrepancy, but typically by
underfitting both splits. For example, frozen VIP has train/test RMSEs of
$6.49/6.70$ on Boston and $11.73/10.79$ on Yacht, with nearly identical FTIP
values. Consequently, freezing reduces the Yacht test NLL to $3.83$ for VIP
and $3.84$ for FTIP, but increases test RMSE to $10.79$ and $10.80$,
respectively. The frozen variants also obtain lower CQM on eight datasets.
Their improved NLL and CQM should therefore not be read as uniformly better
predictions: they often reflect much more diffuse distributions with poor
point accuracy. Table~\ref{tab:uci-frozen-prior-ablation} exposes a sharp
trade-off for VIP and FTIP between overfit, overconfident tunable priors and
underfit, overly diffuse frozen priors, motivating the joint interpretation
of NLL and CQM with RMSE and CRPS.

Finally, frozen VIP and frozen FTIP produce nearly identical results: across
the matched runs, their mean absolute differences are only $0.013$ in RMSE
and $0.009$ in NLL. Thus, under the frozen BNN prior used here, the additional
FTIP flow provides little measurable benefit over VIP. Overall,
Table~\ref{tab:uci-frozen-prior-ablation} shows that prior adaptation is
essential for GMVIP and generally important for the point-prediction accuracy
of VIP and FTIP.

\begin{table*}[t]
\centering
\scriptsize
\setlength{\tabcolsep}{3.5pt}
\renewcommand{\arraystretch}{0.92}
\caption{Frozen-prior ablation for VIP, FTIP, and GMVIP on UCI regression ($\downarrow$). Entries are mean $\pm$ standard deviation across five matched seeds. Tunable variants optimize the BNN prior jointly with the remaining model parameters, whereas frozen variants keep the prior fixed at initialization. The best mean per dataset among the compared variants is shown in bold. Naval RMSE and CRPS are reported in units of $10^{-4}$.}
\label{tab:uci-frozen-prior-ablation}
\resizebox{\textwidth}{!}{%
\begin{tabular}{lccccccccc}
\toprule
\textbf{RMSE} & Boston & Concrete & Energy & Kin8nm & Naval ($\times 10^{-4}$) & Power & Protein & Wine Red & Yacht \\
\midrule
VIP (tunable) & $6.49 \pm 1.27$ & $5.76 \pm 0.97$ & $0.58 \pm 0.09$ & $0.080 \pm 0.003$ & $2.59 \pm 0.96$ & $\bm{3.76 \pm 0.07}$ & $4.37 \pm 0.06$ & $0.79 \pm 0.05$ & $0.84 \pm 0.30$ \\
VIP (frozen) & $6.70 \pm 1.26$ & $11.79 \pm 0.90$ & $4.97 \pm 0.59$ & $0.234 \pm 0.007$ & $96.20 \pm 8.64$ & $7.01 \pm 0.94$ & $5.52 \pm 0.05$ & $0.70 \pm 0.04$ & $10.79 \pm 2.21$ \\
FTIP (tunable) & $6.68 \pm 1.58$ & $6.02 \pm 1.08$ & $0.55 \pm 0.07$ & $0.079 \pm 0.002$ & $3.36 \pm 1.58$ & $3.77 \pm 0.11$ & $4.36 \pm 0.05$ & $0.82 \pm 0.03$ & $1.17 \pm 0.52$ \\
FTIP (frozen) & $6.71 \pm 1.27$ & $11.77 \pm 0.86$ & $4.97 \pm 0.60$ & $0.234 \pm 0.006$ & $90.54 \pm 9.39$ & $7.02 \pm 0.93$ & $5.53 \pm 0.06$ & $0.70 \pm 0.04$ & $10.80 \pm 2.22$ \\
GMVIP (tunable) & $\bm{3.32 \pm 0.97}$ & $\bm{4.24 \pm 0.74}$ & $\bm{0.51 \pm 0.05}$ & $\bm{0.069 \pm 0.001}$ & $\bm{1.36 \pm 0.19}$ & $3.78 \pm 0.12$ & $\bm{4.32 \pm 0.04}$ & $\bm{0.60 \pm 0.03}$ & $\bm{0.32 \pm 0.11}$ \\
GMVIP (frozen) & $3.86 \pm 0.97$ & $7.34 \pm 0.45$ & $2.80 \pm 0.13$ & $0.144 \pm 0.005$ & $4.10 \pm 0.56$ & $4.01 \pm 0.11$ & $4.64 \pm 0.05$ & $0.60 \pm 0.03$ & $4.11 \pm 1.07$ \\
\midrule
\textbf{NLL} & Boston & Concrete & Energy & Kin8nm & Naval & Power & Protein & Wine Red & Yacht \\
\midrule
VIP (tunable) & $4245.41 \pm 2526.37$ & $4.80 \pm 1.11$ & $2.48 \pm 0.60$ & $-1.09 \pm 0.04$ & $-6.78 \pm 0.49$ & $\bm{2.74 \pm 0.02}$ & $2.89 \pm 0.01$ & $3.16 \pm 0.39$ & $96.61 \pm 53.56$ \\
VIP (frozen) & $3.34 \pm 0.17$ & $3.89 \pm 0.07$ & $3.02 \pm 0.12$ & $-0.04 \pm 0.03$ & $-3.23 \pm 0.09$ & $3.36 \pm 0.13$ & $3.13 \pm 0.01$ & $1.07 \pm 0.05$ & $3.83 \pm 0.17$ \\
FTIP (tunable) & $1789.45 \pm 735.75$ & $4.82 \pm 1.21$ & $2.13 \pm 0.36$ & $-1.11 \pm 0.02$ & $-6.47 \pm 0.59$ & $2.75 \pm 0.03$ & $2.89 \pm 0.01$ & $3.52 \pm 0.45$ & $56.66 \pm 56.15$ \\
FTIP (frozen) & $3.34 \pm 0.18$ & $3.89 \pm 0.07$ & $3.02 \pm 0.12$ & $-0.03 \pm 0.03$ & $-3.29 \pm 0.10$ & $3.36 \pm 0.13$ & $3.13 \pm 0.01$ & $1.07 \pm 0.05$ & $3.84 \pm 0.17$ \\
GMVIP (tunable) & $\bm{3.11 \pm 1.03}$ & $\bm{2.82 \pm 0.16}$ & $\bm{0.75 \pm 0.09}$ & $\bm{-1.26 \pm 0.02}$ & $\bm{-7.12 \pm 0.04}$ & $2.75 \pm 0.03$ & $\bm{2.88 \pm 0.01}$ & $\bm{0.94 \pm 0.08}$ & $\bm{0.18 \pm 0.22}$ \\
GMVIP (frozen) & $3.62 \pm 0.02$ & $4.04 \pm 0.03$ & $3.45 \pm 0.01$ & $0.34 \pm 0.00$ & $-5.15 \pm 0.02$ & $3.42 \pm 0.01$ & $3.11 \pm 0.00$ & $1.46 \pm 0.00$ & $3.28 \pm 0.03$ \\
\midrule
\textbf{CRPS} & Boston & Concrete & Energy & Kin8nm & Naval ($\times 10^{-4}$) & Power & Protein & Wine Red & Yacht \\
\midrule
VIP (tunable) & $4.70 \pm 0.88$ & $3.26 \pm 0.48$ & $0.33 \pm 0.05$ & $0.045 \pm 0.002$ & $1.50 \pm 0.64$ & $\bm{2.06 \pm 0.05}$ & $2.43 \pm 0.03$ & $0.48 \pm 0.02$ & $0.40 \pm 0.12$ \\
VIP (frozen) & $3.74 \pm 0.62$ & $6.67 \pm 0.50$ & $2.78 \pm 0.38$ & $0.132 \pm 0.004$ & $53.79 \pm 5.37$ & $3.94 \pm 0.54$ & $3.15 \pm 0.03$ & $0.39 \pm 0.02$ & $6.10 \pm 1.02$ \\
FTIP (tunable) & $4.92 \pm 1.25$ & $3.40 \pm 0.65$ & $0.32 \pm 0.04$ & $0.044 \pm 0.001$ & $2.03 \pm 1.05$ & $2.06 \pm 0.05$ & $2.43 \pm 0.03$ & $0.48 \pm 0.02$ & $0.56 \pm 0.18$ \\
FTIP (frozen) & $3.75 \pm 0.64$ & $6.66 \pm 0.47$ & $2.78 \pm 0.38$ & $0.132 \pm 0.004$ & $50.24 \pm 5.36$ & $3.95 \pm 0.54$ & $3.15 \pm 0.03$ & $0.39 \pm 0.02$ & $6.10 \pm 1.02$ \\
GMVIP (tunable) & $\bm{1.79 \pm 0.37}$ & $\bm{2.26 \pm 0.35}$ & $\bm{0.28 \pm 0.02}$ & $\bm{0.038 \pm 0.001}$ & $\bm{0.92 \pm 0.07}$ & $2.07 \pm 0.06$ & $\bm{2.41 \pm 0.02}$ & $\bm{0.33 \pm 0.01}$ & $\bm{0.16 \pm 0.04}$ \\
GMVIP (frozen) & $3.82 \pm 0.20$ & $6.09 \pm 0.14$ & $3.16 \pm 0.02$ & $0.143 \pm 0.001$ & $5.69 \pm 0.14$ & $3.27 \pm 0.03$ & $2.79 \pm 0.02$ & $0.46 \pm 0.00$ & $2.90 \pm 0.21$ \\
\midrule
\textbf{CQM} & Boston & Concrete & Energy & Kin8nm & Naval & Power & Protein & Wine Red & Yacht \\
\midrule
VIP (tunable) & $0.495 \pm 0.003$ & $0.165 \pm 0.036$ & $0.172 \pm 0.033$ & $0.020 \pm 0.013$ & $0.130 \pm 0.062$ & $0.023 \pm 0.010$ & $0.021 \pm 0.001$ & $0.217 \pm 0.009$ & $0.342 \pm 0.052$ \\
VIP (frozen) & $0.054 \pm 0.028$ & $\bm{0.025 \pm 0.009}$ & $\bm{0.034 \pm 0.020}$ & $\bm{0.009 \pm 0.004}$ & $\bm{0.026 \pm 0.022}$ & $\bm{0.014 \pm 0.003}$ & $0.031 \pm 0.009$ & $\bm{0.034 \pm 0.015}$ & $0.069 \pm 0.038$ \\
FTIP (tunable) & $0.483 \pm 0.008$ & $0.159 \pm 0.043$ & $0.213 \pm 0.043$ & $0.016 \pm 0.006$ & $0.171 \pm 0.070$ & $0.022 \pm 0.008$ & $\bm{0.017 \pm 0.004}$ & $0.209 \pm 0.021$ & $0.331 \pm 0.055$ \\
FTIP (frozen) & $\bm{0.053 \pm 0.028}$ & $0.026 \pm 0.009$ & $0.035 \pm 0.023$ & $0.010 \pm 0.005$ & $0.035 \pm 0.022$ & $0.014 \pm 0.003$ & $0.033 \pm 0.011$ & $0.034 \pm 0.015$ & $\bm{0.068 \pm 0.039}$ \\
GMVIP (tunable) & $0.119 \pm 0.065$ & $0.056 \pm 0.031$ & $0.077 \pm 0.026$ & $0.034 \pm 0.010$ & $0.244 \pm 0.044$ & $0.037 \pm 0.010$ & $0.031 \pm 0.005$ & $0.046 \pm 0.012$ & $0.198 \pm 0.053$ \\
GMVIP (frozen) & $0.353 \pm 0.014$ & $0.305 \pm 0.013$ & $0.371 \pm 0.007$ & $0.340 \pm 0.005$ & $0.407 \pm 0.008$ & $0.293 \pm 0.005$ & $0.144 \pm 0.005$ & $0.278 \pm 0.013$ & $0.310 \pm 0.037$ \\
\bottomrule
\end{tabular}%
}
\end{table*}

\subsection{VIP and FTIP Basis-Size Sensitivity on Lotka--Volterra}
\label{app:lv-basis-sweep}

To test whether the forecasting comparison is limited by the finite
sampled-function basis, we rerun VIP and FTIP with
\(S\in\{20, 64, 128, 256\}\) on the same 20 target trajectories and
evaluation protocol as Appendix~\ref{app:lv-implicit-prior}. The bases are
nested within a common master prior bank, and all other method-specific
settings are held fixed. Each coefficient-space method receives 800 total
optimization steps: VIP is trained directly for 800 steps, while FTIP uses
a separate 400-step VIP warm start followed by 400 fine-tuning steps.
Every run is evaluated at its final scheduled iterate, without validation or
checkpoint selection, using the same 1,024-sample evaluation protocol.
Because \(S\) counts global sampled-function basis elements, whereas GMVIP
uses inducing values, this experiment is a within-method capacity
sensitivity analysis rather than a direct equality of representation
dimensions.

\begin{table*}[t]
\centering
\caption{Sensitivity of VIP and FTIP to the number \(S\) of sampled
prior basis functions in the Lotka--Volterra experiment. Entries are
means \(\pm\) standard errors across 20 target trajectories on the test
interval \(t\in(20,30]\). VIP is trained for 800 iterations. FTIP is
warm-started from a separate 400-iteration VIP fit at the same \(S\) and
then fine-tuned for 400 additional iterations. GMVIP uses 96 inducing
locations and is included as a reference. Coverage is ranked by proximity
to the nominal \(90\%\) level; lower is better otherwise. Best and
second-best results, determined using unrounded values, are highlighted.}
\label{tab:lv-basis-sweep}
\small
\scalebox{0.9}{
\begin{tabular}{llccccc}
\toprule
Method & Representation size
& RMSE \(\downarrow\)
& NLL \(\downarrow\)
& CRPS \(\downarrow\)
& Cov$_{90}$
& ODE residual \(\downarrow\) \\
\midrule
VIP
& \(S=20\)
& \(0.50 \pm 0.08\)
& \(11.32 \pm 5.01\)
& \(0.31 \pm 0.05\)
& \(0.72 \pm 0.06\)
& \(0.65 \pm 0.03\) \\

VIP
& \(S=64\)
& \(0.37 \pm 0.08\)
& \(6.20 \pm 5.87\)
& \(0.23 \pm 0.05\)
& \(0.87 \pm 0.04\)
& \(0.69 \pm 0.02\) \\

VIP
& \(S=128\)
& \(0.36 \pm 0.09\)
& \(7.93 \pm 7.62\)
& \(0.23 \pm 0.06\)
& \(\second{0.89 \pm 0.04}\)
& \(0.71 \pm 0.03\) \\

VIP
& \(S=256\)
& \(0.35 \pm 0.09\)
& \(9.97 \pm 9.71\)
& \(\second{0.22 \pm 0.06}\)
& \(0.89 \pm 0.04\)
& \(0.69 \pm 0.03\) \\
\midrule
FTIP
& \(S=20\)
& \(0.57 \pm 0.08\)
& \(1.04 \pm 0.22\)
& \(0.34 \pm 0.05\)
& \(0.82 \pm 0.04\)
& \(\second{0.62 \pm 0.03}\) \\

FTIP
& \(S=64\)
& \(0.43 \pm 0.13\)
& \(1.80 \pm 1.41\)
& \(0.29 \pm 0.10\)
& \(0.92 \pm 0.04\)
& \(0.67 \pm 0.02\) \\

FTIP
& \(S=128\)
& \(0.37 \pm 0.10\)
& \(1.22 \pm 0.84\)
& \(0.25 \pm 0.07\)
& \(0.95 \pm 0.04\)
& \(0.68 \pm 0.02\) \\

FTIP
& \(S=256\)
& \(\second{0.33 \pm 0.08}\)
& \(\second{0.72 \pm 0.37}\)
& \(0.23 \pm 0.06\)
& \(0.95 \pm 0.03\)
& \(0.68 \pm 0.02\) \\
\midrule
GMVIP
& \(M=96\)
& \(\best{0.32 \pm 0.06}\)
& \(\best{0.51 \pm 0.29}\)
& \(\best{0.20 \pm 0.04}\)
& \(\best{0.90 \pm 0.04}\)
& \(\best{0.31 \pm 0.02}\) \\
\bottomrule
\end{tabular}}
\end{table*}

The results in Table~\ref{tab:lv-basis-sweep} show that increasing \(S\)
generally improves point forecasting, but does not yield monotone gains in
every metric. GMVIP has the lowest mean RMSE, NLL, CRPS, and ODE
residual among the learned configurations, together with coverage closest to
the nominal \(0.90\). FTIP at \(S=256\) is second in RMSE and NLL, while VIP
at \(S=128\) is second in coverage.

\subsection{Efficiency and capacity diagnostics}
To contextualize the empirical gains, Table~\ref{tab:airline-efficiency-diagnostics}
reports GPU wall-clock setup time, per-step training time, evaluation time, peak
CUDA memory, latent or coefficient dimension, and the number of prior samples
used by each method on Airline; on a NVIDIA RTX 5070. This diagnostic is especially important for
empirical GMVIP and SIP: both use inducing-variable representations with
\(100\) inducing variables and \(512\) prior samples, while empirical GMVIP
additionally constructs a Matheron operator from prior-bank evaluations and
empirical covariance estimates. The table therefore separates predictive
performance from the computational capacity and overhead required to obtain it.

\begin{table}[t]
\centering
\caption{GPU efficiency diagnostics on Airline. Setup includes model construction and method-specific preparation; train time is mean wall-clock time per measured optimizer step after warmup. Memory reports peak CUDA allocated/reserved memory.}
\begin{tabular}{lcccccc}
\toprule
Method & Setup & Train / step & Eval time & CUDA memory & Latent dim. & Prior samples \\
\midrule
VIP   & 0.04s & 0.006s & 0.247s & 28 / 48 MB   & 20  & -- \\
FTIP  & 0.16s & 0.023s & 0.216s & 84 / 112 MB  & 20  & 512 \\
FBNN  & 0.03s & 0.044s & 0.247s & 176 / 290 MB & 512 & 512 \\
TFSVI & 4.09s & 0.067s & 0.232s & 249 / 378 MB & 512 & -- \\
SIP   & 2.34s & 0.021s & 0.357s & 626 / 672 MB & 100 & 512 \\
GMVIP & 1.17s & 0.031s & 0.489s & 626 / 672 MB & 100 & 512 \\
\bottomrule
\end{tabular}\label{tab:airline-efficiency-diagnostics}
\end{table}

\end{document}

%% file: tables/uci_rmse.tex
\begin{table*}[t]
\centering
\scriptsize
\setlength{\tabcolsep}{3.5pt}
\renewcommand{\arraystretch}{0.92}
\caption{UCI regression metrics ($\downarrow$). Entries are mean $\pm$ standard deviation across five seeds. Best mean per dataset is shown in \textcolor{purple}{\textbf{purple}}; second-best mean is shown in \textcolor{teal}{\textbf{teal}}.}
\label{tab:uci-all-metrics-gmvip-wandb}
\resizebox{\textwidth}{!}{%
\begin{tabular}{lccccccccc}
\toprule
\textbf{RMSE} & Boston & Concrete & Energy & Kin8nm & Naval ($\times 10^{-4}$) & Power & Protein & Wine Red &Yacht \\
\midrule
MAP & $4.34 \pm 1.41$ & $5.45 \pm 0.57$ & $0.52 \pm 0.05$ & $0.082 \pm 0.001$ & $4.04 \pm 1.97$ & $3.94 \pm 0.13$ & $4.62 \pm 0.07$ & $0.69 \pm 0.01$ & $0.52 \pm 0.20$ \\
MFVI & $3.93 \pm 0.70$ & $6.23 \pm 0.51$ & $0.62 \pm 0.09$ & $0.091 \pm 0.002$ & $147.61 \pm 0.90$ & $4.12 \pm 0.17$ & $4.94 \pm 0.08$ & $\second{0.62 \pm 0.03}$ & $0.68 \pm 0.10$ \\
VIP & $6.49 \pm 1.27$ & $5.76 \pm 0.97$ & $0.58 \pm 0.09$ & $0.080 \pm 0.003$ & $2.59 \pm 0.96$ & $\best{3.76 \pm 0.07}$ & $4.37 \pm 0.06$ & $0.79 \pm 0.05$ & $0.84 \pm 0.30$ \\
FBNN & $4.17 \pm 1.27$ & $\second{5.21 \pm 0.52}$ & $\second{0.51 \pm 0.05}$ & $0.086 \pm 0.001$ & $3.95 \pm 0.47$ & $4.05 \pm 0.16$ & $4.81 \pm 0.05$ & $0.65 \pm 0.04$ & $\best{0.23 \pm 0.06}$ \\
SIP & $\best{3.22 \pm 0.99}$ & $5.30 \pm 0.27$ & $0.52 \pm 0.04$ & $\second{0.071 \pm 0.001}$ & $\second{1.48 \pm 0.23}$ & $3.84 \pm 0.11$ & $4.62 \pm 0.05$ & $0.66 \pm 0.03$ & $0.41 \pm 0.15$ \\
TFSVI & $6.39 \pm 0.63$ & $5.97 \pm 0.85$ & $0.59 \pm 0.15$ & $0.081 \pm 0.002$ & $3.95 \pm 0.81$ & $3.90 \pm 0.13$ & $4.61 \pm 0.05$ & $0.73 \pm 0.02$ & $0.83 \pm 0.38$ \\
FTIP & $6.68 \pm 1.58$ & $6.02 \pm 1.08$ & $0.55 \pm 0.07$ & $0.079 \pm 0.002$ & $3.36 \pm 1.58$ & $\second{3.77 \pm 0.11}$ & $\second{4.36 \pm 0.05}$ & $0.82 \pm 0.03$ & $1.17 \pm 0.52$ \\
GMVIP & $\second{3.32 \pm 0.97}$ & $\best{4.24 \pm 0.74}$ & $\best{0.51 \pm 0.05}$ & $\best{0.069 \pm 0.001}$ & $\best{1.36 \pm 0.19}$ & $3.78 \pm 0.12$ & $\best{4.32 \pm 0.04}$ & $\best{0.60 \pm 0.03}$ & $\second{0.32 \pm 0.11}$ \\
\midrule
\textbf{NLL} & Boston & Concrete & Energy & Kin8nm & Naval & Power & Protein & Wine Red &Yacht \\
\midrule
MAP & $7.62 \pm 3.48$ & $3.24 \pm 0.17$ & $\second{0.81 \pm 0.14}$ & $-1.08 \pm 0.02$ & $-6.14 \pm 1.03$ & $2.79 \pm 0.03$ & $2.95 \pm 0.01$ & $1.27 \pm 0.04$ & $16.18 \pm 15.06$ \\
MFVI & $\best{2.76 \pm 0.14}$ & $3.25 \pm 0.08$ & $0.94 \pm 0.14$ & $-0.98 \pm 0.03$ & $-2.80 \pm 0.01$ & $2.84 \pm 0.04$ & $3.02 \pm 0.02$ & $\best{0.94 \pm 0.04}$ & $1.08 \pm 0.05$ \\
VIP & $4245.41 \pm 2526.37$ & $4.80 \pm 1.11$ & $2.48 \pm 0.60$ & $-1.09 \pm 0.04$ & $-6.78 \pm 0.49$ & $\best{2.74 \pm 0.02}$ & $2.89 \pm 0.01$ & $3.16 \pm 0.39$ & $96.61 \pm 53.56$ \\
FBNN & $5.24 \pm 2.45$ & $\second{3.13 \pm 0.16}$ & $0.84 \pm 0.22$ & $-1.04 \pm 0.01$ & $-6.42 \pm 0.11$ & $2.82 \pm 0.04$ & $2.99 \pm 0.01$ & $1.08 \pm 0.08$ & $\second{0.30 \pm 0.84}$ \\
SIP & $3.27 \pm 1.22$ & $3.18 \pm 0.09$ & $0.84 \pm 0.14$ & $\second{-1.22 \pm 0.02}$ & $\second{-7.04 \pm 0.29}$ & $2.76 \pm 0.03$ & $2.95 \pm 0.01$ & $1.09 \pm 0.07$ & $5.65 \pm 5.09$ \\
TFSVI & $11.32 \pm 2.79$ & $3.38 \pm 0.23$ & $1.05 \pm 0.44$ & $-1.09 \pm 0.03$ & $-6.41 \pm 0.22$ & $2.78 \pm 0.03$ & $2.95 \pm 0.01$ & $1.29 \pm 0.07$ & $14.30 \pm 10.07$ \\
FTIP & $1789.45 \pm 735.75$ & $4.82 \pm 1.21$ & $2.13 \pm 0.36$ & $-1.11 \pm 0.02$ & $-6.47 \pm 0.59$ & $\second{2.75 \pm 0.03}$ & $\second{2.89 \pm 0.01}$ & $3.52 \pm 0.45$ & $56.66 \pm 56.15$ \\
GMVIP & $\second{3.11 \pm 1.03}$ & $\best{2.82 \pm 0.16}$ & $\best{0.75 \pm 0.09}$ & $\best{-1.26 \pm 0.02}$ & $\best{-7.12 \pm 0.04}$ & $2.75 \pm 0.03$ & $\best{2.88 \pm 0.01}$ & $\second{0.94 \pm 0.08}$ & $\best{0.18 \pm 0.22}$ \\
\bottomrule
\end{tabular}%
}
\end{table*}

%% file: tables/regression.tex
\begin{table*}[t]
\centering
\scriptsize
\setlength{\tabcolsep}{3.0pt}
\renewcommand{\arraystretch}{0.92}
\caption{Large regression metrics ($\downarrow$). Entries are mean $\pm$ standard deviation across seeds. Best method is shown in \textcolor{purple}{\textbf{purple}}; second-best is shown in \textcolor{teal}{\textbf{teal}}.}
\label{tab:large-regression-all-metrics}
\resizebox{\textwidth}{!}{%
\begin{tabular}{clccccccc}
\toprule
\textbf{Metric} & \textbf{Dataset} & MFVI & VIP & FBNN & SIP & TFSVI & FTIP & GMVIP \\
\midrule
\multirow{2}{*}{\textbf{RMSE}}
& Year    & $\second{8.95 \pm 0.01}$ & $9.17 \pm 0.01$ & $\best{8.91 \pm 0.02}$ & $9.06 \pm 0.01$ & $9.05 \pm 0.01$ & $9.15 \pm 0.01$ & $8.98 \pm 0.02$ \\
& Airline & $37.87 \pm 0.12$ & $38.03 \pm 0.16$ & $37.90 \pm 0.09$ & $37.99 \pm 0.07$ & $\second{37.72 \pm 0.13}$ & $38.04 \pm 0.12$ & $\best{37.62 \pm 0.10}$ \\
\midrule
\multirow{2}{*}{\textbf{NLL}}
& Year    & $\second{3.61 \pm 0.00}$ & $3.64 \pm 0.00$ & $\best{3.60 \pm 0.00}$ & $3.62 \pm 0.00$ & $3.62 \pm 0.00$ & $3.63 \pm 0.00$ & $\second{3.61 \pm 0.00}$ \\
& Airline & $\best{5.07 \pm 0.00}$ & $5.09 \pm 0.01$ & $\best{5.07 \pm 0.00}$ & $\second{5.08 \pm 0.00}$ & $5.09 \pm 0.00$ & $5.09 \pm 0.01$ & $\best{5.07 \pm 0.01}$ \\
\midrule
\multirow{2}{*}{\textbf{CRPS}}
& Year    & $\second{4.717 \pm 0.007}$ & $4.859 \pm 0.010$ & $\best{4.675 \pm 0.012}$ & $4.785 \pm 0.010$ & $4.777 \pm 0.012$ & $4.845 \pm 0.004$ & $4.740 \pm 0.014$ \\
& Airline & $17.67 \pm 0.13$ & $17.79 \pm 0.22$ & $17.70 \pm 0.17$ & $17.71 \pm 0.08$ & $\best{17.47 \pm 0.13}$ & $17.79 \pm 0.17$ & $\second{17.61 \pm 0.04}$ \\
\midrule
\multirow{2}{*}{\textbf{CQM}}
& Year    & $0.083 \pm 0.002$ & $\best{0.064 \pm 0.002}$ & $0.084 \pm 0.003$ & $\second{0.078 \pm 0.001}$ & $\second{0.078 \pm 0.003}$ & $\best{0.064 \pm 0.002}$ & $\second{0.078 \pm 0.003}$ \\
& Airline & $0.107 \pm 0.009$ & $\best{0.096 \pm 0.015}$ & $0.109 \pm 0.014$ & $0.106 \pm 0.003$ & $0.111 \pm 0.010$ & $\best{0.096 \pm 0.012}$ & $\second{0.102 \pm 0.004}$ \\
\bottomrule
\end{tabular}%
}
\end{table*}